\icmltitlerunning{Discrete Distribution Estimation under Local Privacy}
\newcommand{\BlackBox}{\rule{1.5ex}{1.5ex}}  
\newenvironment{proof}{\par\noindent{\bf Proof\ }}{\hfill\BlackBox\\[2mm]}
\newtheorem{theorem}{Theorem}
\newtheorem{proposition}[theorem]{Proposition}
\DeclareMathAlphabet{\mathpzc}{OT1}{pzc}{m}{it}
\DeclareMathOperator*{\argmin}{argmin}
\DeclareMathOperator*{\hash}{\textsc{hash}}
\DeclareMathOperator*{\bloom}{\textsc{bloom}}
\newcommand*{\unity}{\textrm{{\usefont{U}{fplmbb}{m}{n}1}}}
\newcommand*{\one}{\mathbf{1}}
\newcommand{\X}{X}
\newcommand{\x}{x}
\newcommand{\Y}{Y}
\newcommand{\y}{y}
\newcommand{\kk}{k}
\newcommand{\n}{n}
\newcommand{\s}{S}
\newcommand{\naturals}{{\mathbb N}}
\newcommand{\simplex}{{\mathbb S}}
\newcommand{\cC}{{\cal C}}
\newcommand{\cX}{{\cal \X}}
\newcommand{\cY}{{\cal \Y}}
\newcommand{\cS}{{\cal \s}}
\newcommand{\prob}{\mathbb P}
\newcommand{\matr}[1]{\bm{#1}}  
\newcommand{\cScalar}{c}  
\newcommand{\hScalar}{h}  
\newcommand{\iScalar}{i}  
\newcommand{\jScalar}{j}  
\newcommand{\kScalar}{k}  
\newcommand{\nScalar}{n}  
\newcommand{\pScalar}{p}  
\newcommand{\PScalar}{P}  
\newcommand{\sScalar}{s}  
\newcommand{\xScalar}{x}  
\newcommand{\yScalar}{y}  
\newcommand{\CScalar}{C}  
\newcommand{\SScalar}{S}  
\newcommand{\PTScalar}{P_t}  
\newcommand{\cSet}{\cC}   
\newcommand{\sSet}{\cS}   
\newcommand{\xSet}{\cX}   
\newcommand{\mVector}{\matr{m}}   
\newcommand{\pVector}{\matr{p}}   
\newcommand{\sVector}{\matr{s}}   
\newcommand{\xVector}{\matr{x}}   
\newcommand{\hMatrix}{\matr{H}}  
\newcommand{\qMatrix}{\matr{Q}}  
\newcommand{\wMatrix}{\matr{W}}  
\newcommand{\qKrrMatrix}{\qMatrix_\text{KRR}}  
\newcommand{\qOrrMatrix}{\qMatrix_\text{ORR}}  
\newcommand{\qWrrMatrix}{\qMatrix_\text{WRR}}  
\newcommand{\KRR}{$\kScalar$-RR\xspace}
\newcommand{\ORR}{O-RR\xspace}
\newcommand{\WRR}{W-RR\xspace}
\newcommand{\RAPPOR}{\textsc{Rappor}\xspace}
\newcommand{\KRAPPOR}{$\kScalar$-\textsc{Rappor}\xspace}
\newcommand{\ORAPPOR}{O-\textsc{Rappor}\xspace}
\newcommand{\LL}{\ell}
\newcommand{\PP}{\pVector}
\newcommand{\pp}{p}
\newcommand{\MM}{\mVector}
\newcommand{\Q}{\qMatrix}
\begin{document}

\twocolumn[
\icmltitle{Discrete Distribution Estimation under Local Privacy}

\icmlauthor{Peter Kairouz $\ast \dagger$}{kairouz2@illinois.edu}
\icmlauthor{Keith Bonawitz $\ast$}{bonawitz@google.com}
\icmlauthor{Daniel Ramage $\ast$}{dramage@google.com}
\icmladdress{$\ast$ Google,
            1600 Amphitheatre Parkway, Mountain View, CA 94043, \\
            $\dagger$ University of Illinois, Urbana-Champaign,
            1308 W Main St, Urbana, IL 61801}

\icmlkeywords{local differential privacy, privacy-preserving machine learning algorithms, statistics, distribution learning}

\vskip 0.3in
]

\begin{abstract}

The collection and analysis of user data drives improvements in the app and web ecosystems, but comes with risks to privacy. This paper examines discrete distribution estimation under local privacy, a setting wherein service providers can learn the distribution of a categorical statistic of interest without collecting the underlying data. We present new mechanisms, including hashed $k$-ary Randomized Response (\KRR), that empirically meet or exceed the utility of existing mechanisms at all privacy levels. New theoretical results demonstrate the order-optimality of \KRR and the existing \RAPPOR mechanism at different privacy regimes.

\end{abstract}

%
%
%
%
%
%
%
%
%
%
%

\section{Introduction}
\label{sec:intro}
Software and service providers increasingly see the collection and analysis of user data as key to improving their services. Datasets of user interactions give insight to analysts and provide training data for machine learning models. But the collection of these datasets comes with risk---can the service provider keep the data secure from unauthorized access? Misuse of data can violate the privacy of users and substantially tarnish the provider's reputation.

One way to minimize risk is to store less data: providers can methodically consider what data to collect and how long to store it.
However, even a carefully processed dataset can compromise user privacy.
In a now famous study, \cite{narayanan2008robust} showed how to de-anonymize watch histories released in the Netflix Prize, a public recommender system competition.
While most providers do not intentionally release anonymized datasets, security breaches can mean that even internal, anonymized datasets have the potential to become privacy problems.

Fortunately, mathematical formulations exist that can give the benefits of population-level statistics without the collection of raw data. Local differential privacy \cite{duchi2013locala, duchi2013local} is one such formulation, requiring each device (or session for a cloud service) to share only a noised version of its raw data with the service provider's logging mechanism. No matter what computation is done to the noised output of a locally differentially private mechanism, any attempt to impute properties of a single record will have a significant probability of error. But not all differentially private mechanisms are equal when it comes to utility: some mechanisms have better accuracy than others for a given analysis, amount of data, and desired privacy level.

\textbf{Private distribution estimation.} This paper investigates the fundamental problem of discrete distribution estimation under local differential privacy. We focus on discrete distribution estimation because it enables a variety of useful capabilities, including usage statistics breakdowns and count-based machine learning models, e.g. naive Bayes \cite{mccallum1998comparison}. We consider empirical, maximum likelihood, and minimax distribution estimation, and study the price of local differential privacy under a variety of loss functions and privacy regimes. In particular, we compare the performance of two recent local privacy mechanisms: (a) the Randomized Aggregatable Privacy-Preserving Ordinal Response (\RAPPOR) \cite{erlingsson2014rappor}, and (b) the $\kScalar$-ary Randomized Response (\KRR) \cite{kairouz2014extremal} from a theoretical and empirical perspective.

\textbf{Our contributions} are:

\vspace{-1em}
\begin{enumerate}[leftmargin=*]
\item For binary alphabets, we prove that Warner's randomized response model \cite{warner1965randomized} is globally optimal for any loss function and any privacy level (Section~\ref{sec:bin_alphabets}).
\item For $\kScalar$-ary alphabets, we show that \RAPPOR is order optimal in the high privacy regime and strictly sub-optimal in the low privacy regime for $\ell_1$ and $\ell_2$ losses using an empirical estimator.  Conversely, \KRR is order optimal in the low privacy regime and strictly sub-optimal in the high privacy regime (Section~\ref{sec:krr}).
\item Large scale simulations show that the optimal decoding algorithm for both \KRR and \RAPPOR depends on the shape of the true underlying distribution. For skewed distributions, the \textit{projected estimator} (introduced here) offers the best utility across a wide variety of privacy levels and sample sizes (Section~\ref{sec:sim_results}).
\item For open alphabets in which the set of input symbols is not enumerable \textit{a priori} we construct the \ORR mechanism (an extension to \KRR using hash functions and cohorts) and provide empirical evidence that the performance of \ORR meets or exceeds that of \RAPPOR over a wide range of privacy settings (Section~\ref{sec:open_alphabets}).
\item We apply the \ORR mechanism to closed $\kScalar$-ary alphabets, replacing hash functions with permutations.  We provide empirical evidence that the performance of \ORR meets or exceeds that of \KRR and \RAPPOR in both low and high privacy regimes (Section~\ref{sec:krr_with_hashing}).
\end{enumerate}

\textbf{Related work.} There is a rich literature on distribution estimation under local privacy \cite{chan2012differentially, hsu2012distributed,bassily2015local}, of which several works are particularly relevant herein.  \cite{warner1965randomized} was the first to study the local privacy setting and propose the randomized response model that will be detailed in Section~\ref{sec:bin_alphabets}.  \cite{kairouz2014extremal} introduced \KRR and showed that it is optimal in the low privacy regime for a rich class of information theoretic utility functions. \KRR will be extended to open alphabets in Section \ref{sec:orr}. \cite{duchi2013locala,duchi2013local} was the first to apply differential privacy to the local setting, to study the fundamental trade-off between privacy and minimax distribution estimation in the high privacy regime, and to introduce the core of \KRAPPOR.  \cite{erlingsson2014rappor} proposed \RAPPOR, systematically addressing a variety of practical issues for private distribution estimation, including robustness to attackers with access to multiple reports over time, and estimating distributions over open alphabets. \RAPPOR has been deployed in the Chrome browser to allow Google to privately monitor the impact of malware on homepage settings.  \RAPPOR will be investigated in Sections~\ref{sec:rappor} and~\ref{sec:orappor}.

Private distribution estimation also appears in the global privacy context where a trusted service provider releases randomized data (e.g., NIH releasing medical records) to protect sensitive user information \cite{Dwo06,DMNS06,DL09,dwork2008differential, diakonikolas2015differentially, blocki2016differentially}.


\section{Preliminaries}



\subsection{Local differential privacy}
\label{sec:local_dp}
%

Let $\X$ be a private source of information defined on a discrete, finite input alphabet $\cX=\{\x_1,...,\x_{\kk}\}$. A statistical privatization mechanism is a family of distributions $\Q$ that map $\X = \x$ to $\Y = \y$ with probability $\Q\left(\y|\x\right)$. $\Y$, the privatized version of $\X$, is defined on an output alphabet $\cY=\{\y_1,...,\y_l\}$ that need not be identical to the input alphabet $\cX$. In this paper, we will represent a privatization mechanism $\Q$ via a $\kk \times l$ row-stochastic matrix. A conditional distribution $\Q$
is said to be $\varepsilon$-locally differentially private if for all $\x$,
$\x'\in\mathcal{\X}$ and all $E \subset \mathcal{\Y}$, we have that
\begin{equation}
\Q\left(E|\x\right) \leq  e^{\varepsilon} \Q\left(E|\x'\right),
\end{equation}
where $\Q\left(E|\x\right) = \prob(\Y \in E | \X = \x)$ and $\varepsilon \in[0,\infty)$ \cite{duchi2013locala} .
In other words, by observing $\Y \in E$, the adversary cannot reliably infer whether $\X = \x$ or $\X = \x'$ (for any pair $\x$ and $\x'$). Indeed, the smaller the $\varepsilon$ is, the closer the likelihood ratio of $\X = \x$ to $\X =\x'$ is to 1. Therefore, when $\varepsilon$ is small, the adversary cannot recover the true value of $\X$ reliably.

\subsection{Private distribution estimation}
\label{sec:private_multinomial_estimation}

The private multinomial estimation problem is defined as follows. Given a vector $\PP = (\pp_1, ..., \pp_\kk)$ on the probability simplex $\mathbb{S}^\kk$, samples $X_1, ..., X_n$ are drawn i.i.d.~according to $\PP$. An $\varepsilon$-locally differentially private mechanism $\Q$ is then applied independently to each sample $\X_i$ to produce $Y^n = (Y_1, \cdots, Y_n)$, the sequence of private observations. Observe that the $\Y_i$'s are distributed according to $\MM = \PP\Q$ and not $\PP$. Our goal is to estimate the distribution vector $\PP$ from $Y^n$.

\textbf{Privacy vs. utility. }
There is a fundamental trade-off between utility and privacy. The more private you want to be, the less utility you can get. To formally analyze the privacy-utility trade-off, we study the following constrained minimization problem
\begin{equation}
\label{opt_privacy_utility}
r_{\LL,\varepsilon, \kk, \n} = \underset{\Q\in \mathcal{D}_\varepsilon}{\inf}~r_{\LL,\varepsilon, \kk, \n}(\Q),
\end{equation}
where
\begin{eqnarray*}
r_{\LL,\varepsilon, \kk, \n}(\Q) & = &  \inf_{\hat{\PP}} \sup_{\PP} \underset{Y^n \sim \PP\Q}{\mathbb{E}} \LL(\PP,  \hat{\PP})
\end{eqnarray*}
is the minimax risk under $\Q$,  $\LL$ is an application dependent loss function, and  $\mathcal{D}_\varepsilon$ is the set of all $\varepsilon$-locally differentially private mechanisms.

This problem, though of great value, is intractable in general. Indeed, finding minimax estimators in the non-private setting is already hard for several loss functions. For instance, the minimax estimator under $\ell_1$ loss is unknown even until today.  However, in the high privacy regime, we are able to bound the minimax risk of any differentially private mechanism $\Q$.

\begin{proposition}
\label{prop:opt_lb_lp}
For the private distribution estimation problem in \eqref{opt_privacy_utility}, for any $\varepsilon$-locally differentially private mechanism $\Q$, there exist universal constants $0 < c_l \leq c_u < 5$ such that for all $\varepsilon \in [0,1]$,
\begin{equation*}
c_l \min \left\{1, \frac{1}{\sqrt{n\varepsilon^2}}, \frac{k}{n\varepsilon^2} \right\} \leq r_{\ell_2^2,\varepsilon, \kk, \n}  \leq c_u \min \left\{1, \frac{k}{n\varepsilon^2} \right\},
\end{equation*}
and
\begin{equation*}
c_l \min \left\{1, \frac{\kk}{\sqrt{n\varepsilon^2}} \right\} \leq r_{\ell_1,\varepsilon, \kk, n} \leq c_u \min \left\{1, \frac{\kk}{\sqrt{n\varepsilon^2}} \right\}
\end{equation*}
\end{proposition}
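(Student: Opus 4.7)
The plan is to prove the upper and lower bounds separately, following the locally-private minimax machinery of \cite{duchi2013locala,duchi2013local}. The upper bound comes from analyzing an explicit \KRR- or \RAPPOR-type mechanism together with an unbiased debiased estimator, while the lower bound is produced by combining the standard Le~Cam/Assouad reductions with the privacy-induced contraction of KL divergence.

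\textbf{Upper bound.} I would fix a concrete $\varepsilon$-locally differentially private mechanism $\Q$ (e.g.\ \KRR or \RAPPOR at privacy level $\varepsilon$) and form an unbiased debiased estimator $\hat{\PP}$ from the empirical output distribution $\hat{\MM}$ via the relation $\MM = \PP\Q$, so that $\mathbb{E}\hat{\PP} = \PP$. A routine coordinate-wise variance calculation gives $\var(\hat{\pp}_i) = O(1/(n\varepsilon^2))$ for $\varepsilon \in [0,1]$ (using $e^\varepsilon - 1 \asymp \varepsilon$), and summing over $i$ yields $\mathbb{E}\|\hat{\PP} - \PP\|_2^2 = O(k/(n\varepsilon^2))$. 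Competing against any fixed-distribution baseline, whose $\ell_2^2$ loss is at most $2$, supplies the $\min\{1,\cdot\}$. The $\ell_1$ upper bound follows by Cauchy--Schwarz, $\mathbb{E}\|\hat{\PP} - \PP\|_1 \leq \sqrt{k}\,\sqrt{\mathbb{E}\|\hat{\PP} - \PP\|_2^2} = O(k/\sqrt{n\varepsilon^2})$.

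\textbf{Lower bound.} The key ingredient is the information-contraction inequality for $\varepsilon$-LDP channels: for any input distributions $\PP^+, \PP^-$ and any $\Q \in \mathcal{D}_\varepsilon$, the output marginals $\MM^\pm = \PP^\pm \Q$ satisfy
\begin{equation*}
\KL(\MM^+ \,\|\, \MM^-) + \KL(\MM^- \,\|\, \MM^+) \leq 4(e^\varepsilon - 1)^2 \|\PP^+ - \PP^-\|_{\mathrm{TV}}^2.
\end{equation*}
Tensorizing over $n$ i.i.d.\ samples contracts the KL between the $n$-fold product measures by the same factor. For the $k/(n\varepsilon^2)$ term of the $\ell_2^2$ bound I would apply Assouad's hypercube method to the packing $\{\PP_\tau : \tau \in \{\pm 1\}^{k/2}\}$ of $\mathbb{S}^k$ obtained by perturbing the uniform distribution along $k/2$ coordinate pairs at scale $\delta$, so that adjacent hypotheses differ by $\Theta(\delta^2)$ in $\ell_2^2$ and by $\Theta(\delta)$ in TV. Plugging the contracted KL bound into Assouad's lemma and optimizing $\delta^2 \asymp 1/(n\varepsilon^2)$ recovers the desired rate; the $1/\sqrt{n\varepsilon^2}$ term of the $\ell_2^2$ bound follows from Le~Cam's two-point method applied to a pair of well-separated distributions. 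For $\ell_1$, the same packing has aggregate Hamming-neighbour $\ell_1$ separation $\Theta(k\delta)$, and the analogous Assouad computation yields $k/\sqrt{n\varepsilon^2}$. The $\min\{1,\cdot\}$ terms correspond to the regime of so few samples that no estimator beats a constant.

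\textbf{Main obstacle.} The hardest step is the packing construction and the bookkeeping inside Assouad's lemma: coupling the hypercube to coordinates of $\mathbb{S}^k$, bounding each Hamming-neighbour KL by the contracted quantity, and tuning $\delta$ to balance separation against the mutual-information penalty. Tracking the absolute constants carefully so that $c_l, c_u < 5$ requires keeping every factor through these inequalities; everything else is routine.
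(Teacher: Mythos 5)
The paper does not prove this proposition at all: its ``proof'' is a one-line citation to \cite{duchi2013local}, and your sketch is essentially a faithful reconstruction of the argument used in that reference (KL contraction for $\varepsilon$-LDP channels, Assouad's hypercube packing around the uniform distribution for the $k/(n\varepsilon^2)$ and $k/\sqrt{n\varepsilon^2}$ terms, and a debiased empirical estimator for the upper bounds), so in substance you are following the same route the paper relies on. The only imprecise spot is your attribution of the $1/\sqrt{n\varepsilon^2}$ term in the $\ell_2^2$ lower bound to a generic two-point Le~Cam argument --- a two-point construction with perturbation $\delta$ in a single coordinate yields an $\ell_2^2$ separation of order $\delta^2 \asymp 1/(n\varepsilon^2)$, not $1/\sqrt{n\varepsilon^2}$; that intermediate term actually emerges from the simplex constraint capping the per-coordinate perturbation size in the Assouad packing, so you would need to track that case split rather than invoke a separate two-point bound.
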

\begin{proof}
See \cite{duchi2013local}.
\end{proof}
This result shows that in the high privacy regime ($\varepsilon \leq 1$), the effective sample size of a dataset decreases from $n$ to $n\varepsilon^2/ \kk$. In other words, a factor of $k/\varepsilon^2$ extra samples are needed to achieve the same minimax risk. This is problematic for large alphabets. Our work shows that (a) this problem can be (partially) circumvented using a combination of cohort-style hashing and \KRR (Section~\ref{sec:open_alphabets}), and (b) the dependence on the alphabet size vanishes in the moderate to low privacy regime (Section \ref{sec:theo_ana}).

\section{Binary Alphabets}
\label{sec:bin_alphabets}

In this section, we study the problem of private distribution estimation under binary alphabets. In particular, we show that Warner's randomized response model (\WRR) is optimal for binary distribution minimax estimation \cite{warner1965randomized}.
In \WRR, interviewees flip a biased coin (that only they can see the result of), such that a fraction $\eta$ of participants answer the question ``Is the predicate $\PScalar$ true (of you)?'' while the remaining particants answer the negation (``Is $\neg\PScalar$ true?''), without revealing which question they answered.
For $\eta = e^\varepsilon$ ($\varepsilon \geq 0$), \WRR can be described by the following $2\times2$ row-stochastic matrix
\begin{equation}
\label{eq:warner_response}
\Q_{\text{WRR}}=\frac{1}{e^{\varepsilon}+1}\left[
                                \begin{array}{cc}
                                  e^{\varepsilon} & 1 \\
                                  1 & e^{\varepsilon} \\
                                \end{array}
                              \right].
\end{equation}
It is easy to check that the above mechanism satisfies the constraints imposed by local differential privacy.

\begin{theorem}
\label{thm:opt_bin}
For all binary distributions $\PP$, all loss functions $\LL$, and all privacy levels $\varepsilon$, $\Q_{\text{WRR}}$ is the optimal solution to the private minimax distribution estimation problem in (\ref{opt_privacy_utility}).
\end{theorem}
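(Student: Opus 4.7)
The plan is to exhibit Warner's mechanism as a ``sufficient statistic'' for any binary-input $\varepsilon$-LDP channel and then invoke a data processing argument. Concretely, I would show that for every $\varepsilon$-LDP mechanism $\Q \colon \{\x_1,\x_2\} \to \{\y_1,\ldots,\y_l\}$ there is a row-stochastic post-processing channel $W$ of dimensions $2 \times l$ such that $\Q = \Q_{\text{WRR}}\, W$ as matrix products. Once this factorization is in hand, any estimator $\hat{\PP}$ based on $\n$ observations from $\PP\Q$ can be simulated under $\Q_{\text{WRR}}$: draw $Z_1,\ldots,Z_\n \sim \PP\Q_{\text{WRR}}$, apply $W$ to each $Z_i$ independently, and feed the results into $\hat{\PP}$. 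The simulated samples are i.i.d.\ from $\PP\Q_{\text{WRR}}W = \PP\Q$, so this estimator matches the risk of $\hat{\PP}$ for every $\PP$ and every loss $\LL$. Taking the infimum over estimators and the supremum over $\PP$ gives $r_{\LL,\varepsilon,2,\n}(\Q_{\text{WRR}}) \leq r_{\LL,\varepsilon,2,\n}(\Q)$ for every $\Q \in \mathcal{D}_\varepsilon$, so $\Q_{\text{WRR}}$ attains the outer infimum in~\eqref{opt_privacy_utility}.

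The substantive step is the existence of $W$. For each output column $j$, set $a_j = \Q(\y_j \mid \x_1)$ and $b_j = \Q(\y_j \mid \x_2)$ and solve the $2 \times 2$ linear system $\Q_{\text{WRR}}\, W_{\cdot j} = (a_j, b_j)^T$ column-wise; one obtains the closed-form candidates
\begin{equation*}
W_{1j} = \frac{e^{\varepsilon} a_j - b_j}{e^{\varepsilon} - 1}, \qquad W_{2j} = \frac{e^{\varepsilon} b_j - a_j}{e^{\varepsilon} - 1}.
\end{equation*}
Summing over $j$ and using $\sum_j a_j = \sum_j b_j = 1$ shows that each row of $W$ sums to one, while nonnegativity of $W_{1j}$ and $W_{2j}$ is equivalent to $b_j / a_j \leq e^{\varepsilon}$ and $a_j / b_j \leq e^{\varepsilon}$, which is precisely the $\varepsilon$-LDP constraint on $\Q$. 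Hence $W$ is a bona fide stochastic channel and the factorization holds.

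The main obstacle is conceptual rather than computational: one has to spot that in the binary-input case the $\varepsilon$-LDP constraint is exactly the condition that every column of $\Q$, properly rescaled, is a convex combination of the two columns of $\Q_{\text{WRR}}$. The degenerate boundary $\varepsilon = 0$ is handled separately, since every $0$-LDP mechanism is uninformative and all mechanisms tie. Because the simulation acts sample-by-sample and the argument never touches the loss beyond requiring that $\LL$ be a function of $(\PP, \hat{\PP})$, the conclusion holds for arbitrary $\LL$, any sample size $\n$, and hence for every minimax risk in this family simultaneously.
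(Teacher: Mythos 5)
Your proof is correct and follows essentially the same route as the paper's: factor any binary $\varepsilon$-LDP mechanism through $\Q_{\text{WRR}}$ as a post-processing $\Q = \Q_{\text{WRR}}W$, then transfer estimators across $W$ by per-sample simulation, which is exactly the data-processing inequality for the minimax risk that the paper establishes in its supplementary proof. The only difference is that you prove the factorization lemma explicitly (your closed-form entries for $W$ are correct, and their nonnegativity is precisely the $\varepsilon$-LDP constraint), whereas the paper imports this step from \cite{kairouz2014extremal}.
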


\textbf{Proof sketch.} \cite{kairouz2014extremal} showed that \WRR dominates all other differentially private mechanisms in a strong Markovian sense: for any binary differentially private mechanism $\Q$, there exists a $2\times2$ stochastic mapping $\wMatrix$ such that $\qMatrix = \wMatrix \circ \qWrrMatrix$. Therefore, for any risk function $r(\cdot)$ that obeys the data processing inequality ($r(\qMatrix) \leq r(\qMatrix \circ \wMatrix)$ for any stochastic mappings $\qMatrix$ and $\wMatrix$), we have that $r(\qWrrMatrix) \leq r(\qMatrix)$ for any binary differentially private mechanism $\qMatrix$. In Supplementary Section~\ref{proof_opt_bin}, we prove that $r_{\LL,\varepsilon, \kk, \n}(\qMatrix)$ obeys the data processing inequality, thus \WRR achieves the optimal privacy-utility trade-off under minimax distribution estimation.

\section{$\kScalar$-ary Alphabets}
\label{sec:kary_alphabets}
Above, we saw that \WRR is optimal for all privacy levels and all loss functions. However, it can only be applied to binary alphabets. In this section, we study optimal privacy mechanisms for $k$-ary alphabets.
We show that under $\ell_1$ and $\ell_2$ losses, \KRAPPOR is order optimal in the high privacy regime and sub-optimal in the low privacy regime.  Conversely, \KRR is order optimal in the low privacy regime and sub-optimal in the high privacy regime.

\subsection{The $\kScalar$-ary Randomized Response}
\label{sec:krr}
The {\em $\kScalar$-ary randomized response} (\KRR) mechanism is a locally differentially private mechanism that maps $\cX$ stochastically onto itself
(i.e., $\cY=\cX$), given by
\begin{eqnarray}
	\qKrrMatrix(\y| \x) \,=\, \dfrac{1}{k-1+e^{\varepsilon}} \left\{
\begin{array}{rl}
	e^{\varepsilon}& \text{ if } \y=\x,\\
	1 & \text{ if } \y \neq \x.\\
\end{array}
\right.
	\label{eq:rr}
\end{eqnarray}
\KRR can be viewed as a multiple choice generalization of the \WRR mechanism (note that \KRR reduces to \WRR for $\kScalar=2$).  In \cite{kairouz2014extremal}, the \KRR mechanism was shown to be optimal in the low privacy regime for a large class of information theoretic utility functions. 

%
%
%

\textbf{Empirical estimation under \KRR. }
It is easy to see that under $\qKrrMatrix$, outputs are distributed according to:
\begin{equation}
\MM =  \frac{e^{\varepsilon} - 1}{e^{\varepsilon} + \kk - 1} \PP + \frac{1}{e^{\varepsilon} + \kk - 1}
\end{equation}

The empirical estimate of $\PP$ under $\qKrrMatrix$ is given by
\begin{eqnarray}
\label{eq:rr_empirical}
\hat{\PP} & = & \hat{\MM} \qKrrMatrix^{-1} \\ \nonumber
 & = &  \frac{e^{\varepsilon} + \kk - 1}{e^{\varepsilon} - 1}\hat{\MM} - \frac{1}{e^{\varepsilon} - 1},
 \end{eqnarray}
where $\hat{\MM}$ is the empirical estimate of $\MM$ and
\begin{eqnarray}
  \qKrrMatrix^{-1}( \y|  \x) \,=\, \dfrac{1}{e^{\varepsilon} - 1} \left\{
\begin{array}{rl}
  e^{\varepsilon} + \kk - 2& \text{ if } \y=\x,\\
  -1 & \text{ if } \y \neq \x.\\
\end{array}
\right.
  \label{eq:inv_rr}
\end{eqnarray}
via the Sherman-Morrison formula.  Observe that because $\hat{\MM} \rightarrow \MM$ almost surely, $ \hat{\PP} \rightarrow \PP$
almost surely.
\begin{proposition}
\label{prop:rr_perf}
For the private distribution estimation problem under \KRR and its empirical estimator given in \eqref{eq:rr_empirical}, for all $\varepsilon$, $n$, and $\kScalar$, we have that
\begin{equation*}
\mathbb{E}~\ell_2^2(\hat{\PP}, \PP) = \frac{1 - \sum_{i = 1}^ {\kk} \pp_i^2}{n} + \frac{\kk - 1}{n} \left(\frac{\kk + 2(e^{\varepsilon} - 1)}{(e^{\varepsilon} - 1)^2} \right) ,
\end{equation*}
and for large n, $\mathbb{E}~\ell_1(\hat{\PP}, \PP) \approx$
\begin{equation*}
\sum_{i = 1}^{\kk} \sqrt{\dfrac{2((e^{\varepsilon} - 1)\pp_i +1)((e^{\varepsilon} - 1)(1-\pp_i) + \kk - 1)}{\pi n (e^{\varepsilon} - 1)^2}},
\end{equation*}
where $a_n \approx b_n$ means $\lim_{n \rightarrow \infty} a_n/b_n = 1$.
\end{proposition}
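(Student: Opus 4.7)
The plan is to reduce everything to per-coordinate statements about the empirical multinomial estimate $\hat{\MM}$ of $\MM$, exploiting the very simple structure of $\qKrrMatrix^{-1}$. Writing $a = (e^\varepsilon + \kk - 2)/(e^\varepsilon - 1)$ and $b = -1/(e^\varepsilon - 1)$, the inverse formula \eqref{eq:inv_rr} together with $\sum_j \hat{m}_j = 1$ collapses each coordinate of the estimator to an affine function of $\hat{m}_i$ alone:
\begin{equation*}
\hat{p}_i \;=\; (a-b)\,\hat{m}_i + b \;=\; \frac{e^\varepsilon + \kk - 1}{e^\varepsilon - 1}\,\hat{m}_i \;-\; \frac{1}{e^\varepsilon - 1}.
\end{equation*}
Since the same identity holds in expectation (with $m_i$ in place of $\hat{m}_i$), one gets the clean per-coordinate error $\hat{p}_i - p_i = (a-b)(\hat{m}_i - m_i)$. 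This is the key reduction on which both parts hinge.

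For the $\ell_2^2$ formula, I would square and sum, then use that $n\hat{\MM}$ is multinomial$(n,\MM)$, so $\var(\hat{m}_i) = m_i(1-m_i)/n$. Summing yields
\begin{equation*}
\mathbb{E}\,\ell_2^2(\hat{\PP},\PP) \;=\; (a-b)^2\,\frac{1-\sum_i m_i^2}{n}.
\end{equation*}
It then remains to translate $\sum_i m_i^2$ back into $\sum_i p_i^2$ via $m_i = \tfrac{e^\varepsilon - 1}{e^\varepsilon + \kk - 1}p_i + \tfrac{1}{e^\varepsilon + \kk - 1}$; after expanding the square, summing over $i$ (using $\sum p_i = 1$), and multiplying by $(a-b)^2 = (e^\varepsilon+\kk-1)^2/(e^\varepsilon-1)^2$, the expression rearranges to $\tfrac{1 - \sum p_i^2}{n} + \tfrac{\kk-1}{n}\cdot\tfrac{\kk + 2(e^\varepsilon - 1)}{(e^\varepsilon - 1)^2}$. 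This is pure algebra and should drop out cleanly.

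For the $\ell_1$ formula I would invoke the central limit theorem: $\sqrt{n}(\hat{m}_i - m_i)$ converges in distribution to $\mathcal{N}(0, m_i(1-m_i))$, and a uniform integrability argument (the $\hat{m}_i$ are bounded in $[0,1]$, so all moments are uniformly bounded) lets one pass to convergence of the first absolute moment. Since $\mathbb{E}|Z|=\sqrt{2\sigma^2/\pi}$ for a mean-zero normal, we get $\mathbb{E}|\hat{m}_i - m_i|\approx \sqrt{2m_i(1-m_i)/(\pi n)}$. Multiplying by $(a-b)$ and using the factorization
\begin{equation*}
(e^\varepsilon + \kk - 1)^2\, m_i(1-m_i) \;=\; \bigl((e^\varepsilon - 1)p_i + 1\bigr)\bigl((e^\varepsilon - 1)(1-p_i) + \kk - 1\bigr),
\end{equation*}
which comes from the identities $(e^\varepsilon+\kk-1)m_i = (e^\varepsilon-1)p_i + 1$ and $(e^\varepsilon+\kk-1)(1-m_i) = (e^\varepsilon-1)(1-p_i) + \kk-1$, and finally summing over $i$, produces the claimed asymptotic. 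The main (and really only) obstacle is justifying the interchange of limit and expectation in the $\ell_1$ case; everything else is bookkeeping.
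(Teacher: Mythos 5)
Your proposal is correct and follows essentially the same route as the paper's proof: reduce the per-coordinate error to $(\tfrac{e^\varepsilon+\kk-1}{e^\varepsilon-1})(\hat{m}_i - m_i)$, use the multinomial variance for $\ell_2^2$ and the normal approximation $\mathbb{E}|\hat{m}_i - m_i|\approx\sqrt{2m_i(1-m_i)/(\pi n)}$ for $\ell_1$, then substitute $m_i$ in terms of $p_i$. The only difference is that you are slightly more careful than the paper in flagging the uniform-integrability step needed to pass from the CLT to convergence of first absolute moments, which the paper asserts without comment.
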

\begin{proof}
See Supplementary Section \ref{proof_rr_perf}.
\end{proof}
Observe that for $\PP_{\text{U}} = \left(\frac{1}{\kk}, \cdots,\frac{1}{k}\right)$, we have that
\begin{eqnarray}
\label{eq:rr_l2_performance}
\mathbb{E}~\ell_2^2(\hat{\PP}, \PP)&\leq&  \mathbb{E}~\ell_2^2(\hat{\PP}, \PP_{\text{U}}) \\
&=& \left( 1 + \frac{\kk + 2(e^{\varepsilon} - 1)}{(e^{\varepsilon} - 1)^2}\kk \right) \frac{1 - \frac{1}{\kk}}{n}, \nonumber
\end{eqnarray}
and
\begin{eqnarray}
\label{eq:rr_l1_performance}
\mathbb{E}~\ell_1(\hat{\PP}, \PP) &\leq&  \mathbb{E}~\ell_1(\hat{\PP}, \PP_{\text{U}})   \\
&\approx& \left(\frac{e^{\varepsilon} + \kk - 1}{e^{\varepsilon} - 1}\right) \sqrt{\frac{2(\kk-1)}{\pi n}}. \nonumber
\end{eqnarray}

\textbf{Constraining empirical estimates to $\simplex^\kScalar$. }
\label{sec:rr_normalization}
It is easy to see that $||\hat{\PP}_{\text{KRR}}||_1  = 1$. However, some of the entries of $\hat{\PP}_{\text{KRR}}$ can be negative (especially for small values of $n$).  Several remedies are available, including (a) truncating the negative entries to zero and renormalizing the entire vector to sum to 1, or (b) projecting $\hat{\PP}_{\text{KRR}}$ onto the probability simplex.  We evaluate both approaches in Section \ref{sec:sim_results}.

\subsection{\KRAPPOR}
\label{sec:rappor}
The randomized aggregatable privacy-preserving ordinal response (\RAPPOR) is an open source Google technology for collecting aggregate statistics from end-users with strong local differential privacy guarantees \cite{erlingsson2014rappor}. The simplest version of \RAPPOR, called the basic one-time \RAPPOR and referred to herein as \KRAPPOR, first appeared in \cite{duchi2013locala,duchi2013local}. \KRAPPOR maps the input alphabet $\cX$ of size $\kk$ to an output alphabet $\cY$ of size $2^\kk$. In \KRAPPOR, we first map $\cX$ deterministically to $\tilde{\cX} = \mathbb{R}^\kk$, the $\kScalar$-dimensional Euclidean space. Precisely, $\X = \x_i$ is mapped to $\tilde{\X} = e_i$, the $i^{th}$ standard basis vector in $\mathbb{R}^\kk$. We then randomize the coordinates of $\tilde{\X}$ independently to obtain the private vector $\Y \in \{0,1\}^\kk$. Formally, the $j^{th}$ coordinate of $Y$ is given by: $\Y^{(j)}  = \tilde{X}^{(j)}$ with probability $e^{\varepsilon/2}/(1+e^{\varepsilon/2})$ and $1 - \tilde{X}^{(j)}$ with probability $1/(1+e^{\varepsilon/2})$. The randomization in $\Q_{\text{\KRAPPOR}}$ is $\varepsilon$-locally differentially private \cite{duchi2013locala,erlingsson2014rappor}.

Under \KRAPPOR, $Y_i = [ Y_i^{(1)}, \cdots, Y^{(\kk)}_i]$ is a $\kk$-dimensional binary vector, which implies that
\begin{equation}
\prob(Y_i^{(j)} = 1) =  \left(\frac{e^{\varepsilon/2} - 1}{e^{\varepsilon/2} + 1}\right) \pp_j  + \frac{1}{e^{\varepsilon/2} + 1},
\end{equation}
for all $i \in \{1, \cdots, n\}$ and $j \in \{1, \cdots, \kk\}$.

\textbf{Empirical estimation under \KRAPPOR. }
Let $Y^n$ be the $n \times \kk$ matrix formed by stacking the row vectors $Y_1, \cdots, Y_n$ on top of each other. The empirical estimator of $\PP$ under \KRAPPOR is:
\begin{equation}
\label{eq:rappor_empirical}
\hat{\pp}_j = \left(\frac{e^{\varepsilon/2} + 1}{e^{\varepsilon/2} - 1}\right) \frac{T_j}{n}- \frac{1}{e^{\varepsilon/2} - 1},
\end{equation}
where $T_j = \sum_{i = 1}^{n} Y_i^{(j)}$. Because $T_j/n$ converges to $m_j$ almost surely, $\hat{\pp}_j$ converges to $\pp_j$ almost surely.
As with \KRR, we can constrain $\hat{\pVector}$ to $\simplex^\kScalar$ through truncation and normalization or through projection (described in Section~\ref{sec:rr_normalization}), both of
which will be evaluated in Section \ref{sec:sim_results}.

\begin{proposition}
\label{prop:rappor_perf}
For the private distribution estimation problem under \KRAPPOR and its empirical estimator given in \eqref{eq:rappor_empirical}, for all $\varepsilon$, $n$, and $\kScalar$, we have that
\begin{equation*}
\mathbb{E}~\ell_2^2(\hat{\PP}, \PP) = \frac{1-\sum_{i=1}^{\kk}\pp_i^2}{n} + \frac{\kk e^{\varepsilon/2}}{n(e^{\varepsilon/2} - 1)^2},
\end{equation*}
and for large n, $\mathbb{E}~\ell_1(\hat{\PP}, \PP) \approx$
\begin{equation*}
\sum_{i = 1}^{\kk} \sqrt{\dfrac{2((e^{\varepsilon/2} - 1)\pp_i +1)((e^{\varepsilon/2} - 1)(1-\pp_i) + 1)}{\pi n (e^{\varepsilon/2} - 1)^2}},
\end{equation*}
where $a_n \approx b_n$ means $\lim_{n \rightarrow \infty} a_n/b_n = 1$.
\end{proposition}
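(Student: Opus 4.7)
The plan is to exploit the fact that, under \KRAPPOR, the coordinates $Y_i^{(j)}$ are flipped independently, so each count $T_j = \sum_{i=1}^{n} Y_i^{(j)}$ is Binomial$(n, m_j)$ with $m_j = \Pr(Y_i^{(j)} = 1) = \bigl((a-1)\pp_j + 1\bigr)/(a+1)$, where $a := e^{\varepsilon/2}$. Because \eqref{eq:rappor_empirical} expresses $\hat{\pp}_j$ as an affine function of $T_j/n$, it is unbiased with $\var(\hat{\pp}_j) = \bigl(\tfrac{a+1}{a-1}\bigr)^{2} m_j(1-m_j)/n$. A short expansion gives the algebraic identity
\begin{equation*}
(a+1)^2\, m_j(1-m_j) \;=\; \bigl((a-1)\pp_j+1\bigr)\bigl((a-1)(1-\pp_j)+1\bigr) \;=\; (a-1)^2\,\pp_j(1-\pp_j) + a,
\end{equation*}
which is the workhorse behind both formulas in the proposition.

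For the $\ell_2^2$ risk, linearity of expectation gives $\mathbb{E}\,\ell_2^2(\hat{\PP},\PP) = \sum_{j=1}^{\kk} \var(\hat{\pp}_j)$. Substituting the identity above, the $(a-1)^2$ denominator cancels the leading term to produce $(1-\sum_{j} \pp_j^2)/n$, while the constant piece $a$ summed over the $\kk$ coordinates contributes $\kk a/[n(a-1)^2] = \kk e^{\varepsilon/2}/[n(e^{\varepsilon/2}-1)^2]$, matching the claim. Only the per-coordinate variance and linearity are used; independence across coordinates is not needed for this step.

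For the $\ell_1$ risk, I would argue coordinate-wise using the classical fact that for a Binomial$(n,m)$ variable $T$ with $m\in(0,1)$ fixed, $\mathbb{E}|T-nm|/\sqrt{n} \to \sqrt{2m(1-m)/\pi}$ as $n\to\infty$; this is the Gaussian mean absolute deviation $\sigma\sqrt{2/\pi}$ combined with an interchange of limit and expectation, which is justified by uniform integrability: $\sqrt{n}(\hat{\pp}_j-\pp_j)$ has second moment $(a+1)^2 m_j(1-m_j)/(a-1)^2$ independent of $n$, so it is uniformly square-integrable, and it converges weakly to the corresponding centered Gaussian by the ordinary CLT. Consequently
\begin{equation*}
\mathbb{E}|\hat{\pp}_j - \pp_j| \;\approx\; \frac{a+1}{a-1}\sqrt{\frac{2\,m_j(1-m_j)}{\pi n}},
\end{equation*}
and inserting the identity from the first paragraph converts the right-hand side into the summand stated in the proposition. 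Summing the approximations over the finitely many coordinates preserves $\approx$, giving the stated expression. The one step that is not purely mechanical is this asymptotic interchange; once it is in place, the rest is bookkeeping.
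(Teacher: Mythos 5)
Your proposal is correct and follows essentially the same route as the paper's proof: both reduce the $\ell_2^2$ risk to the per-coordinate variances $\bigl(\tfrac{e^{\varepsilon/2}+1}{e^{\varepsilon/2}-1}\bigr)^2 m_j(1-m_j)/n$ and the $\ell_1$ risk to the binomial mean-absolute-deviation asymptotic $\sqrt{2m_j(1-m_j)/(\pi n)}$, then substitute $m_j = \bigl((e^{\varepsilon/2}-1)\pp_j+1\bigr)/(e^{\varepsilon/2}+1)$. Your algebraic identity is a cleaner packaging of the paper's $A,B,C$ bookkeeping, and your uniform-integrability justification of the limit interchange is a detail the paper leaves implicit.
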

\begin{proof}
See Supplementary Section \ref{proof_rappor_perf}.
\end{proof}
Observe that for $\PP_{\text{U}} = \left(\frac{1}{\kk}, \cdots,\frac{1}{k}\right)$, we have that
\begin{eqnarray}
\label{eq:rappor_l2_performance}
\mathbb{E}~\ell_2^2(\hat{\PP}, \PP) &\leq&  \mathbb{E}~\ell_2^2(\hat{\PP}, \PP_{\text{U}})\\
&=& \left(1 +  \frac{\kk^2e^{\varepsilon/2}}{(\kk - 1)(e^{\varepsilon/2} - 1)^2}\right)\frac{1 - \frac{1}{\kk}}{n},  \nonumber
\end{eqnarray}
and
\begin{eqnarray}
\label{eq:rappor_l1_performance}
\mathbb{E}~\ell_1(\hat{\PP}, \PP) &\leq& \mathbb{E}~\ell_1(\hat{\PP}, \PP_{\text{U}}) \\
  &\hspace{-2cm} \approx & \hspace{-1.2cm}  \sqrt{\frac{(e^{\varepsilon/2} + \kk  - 1) (e^{\varepsilon/2}(\kk-1)+1)}{(e^{\varepsilon/2}-1)^2 (\kk  - 1)}} \sqrt{\frac{2(\kk  - 1)}{\pi n}}. \nonumber
\end{eqnarray}

\subsection{Theoretical Analysis}
\label{sec:theo_ana}
We now analyze the performance of \KRR and \KRAPPOR relative to maximum likelihood estimation (which is equivalent to empirical estimation) on the non-privatized data $X^n$. In the non-private setting, the maximum likelihood estimator has a worst case risk of $\sqrt{\frac{2(\kk-1)}{\pi n}}$ under the $\ell_1$ loss, and a worst case risk of $\frac{1 - \frac{1}{k}}{n}$ under the $\ell_2^2$ loss \cite{lehmann1998theory, kamath2015learning}.

\textbf{Performance under \KRR. }
Comparing Equation \eqref{eq:rr_l2_performance} to the observation above, we can see that an extra factor of $\left( 1 + \frac{\kk + 2(e^{\varepsilon} - 1)}{(e^{\varepsilon} - 1)^2}\kk \right)$ samples is needed to achieve the same $\ell_2^2$ loss as in the non-private setting.
Similarly, from Equation \eqref{eq:rr_l1_performance}, a factor of $\left(\frac{e^{\varepsilon} + \kk - 1}{e^{\varepsilon} - 1}\right)^2$ samples is needed under the $\ell_1$ loss. For small $\varepsilon$, the sample size $n$ is effectively reduced to $n\varepsilon^2/\kk^2$ (under both losses). When compared to Proposition \ref{prop:opt_lb_lp}, this result implies that \KRR is not optimal in the high privacy regime. However, for $\varepsilon \approx \ln \kk$, the sample size $n$ is reduced to $n/4$ (under both losses). This result suggests that, while \KRR is not optimal for small values of $\varepsilon$, it is ``order'' optimal for $\varepsilon$ on the order of $\ln \kScalar$. Note that \KRR provides a natural interpretation of this low privacy regime: specifically, setting $\varepsilon=\ln \kScalar$ translates to telling the truth with probability $\frac{1}{2}$ and lying uniformly over the remainder of the alphabet with probability $\frac{1}{2}$; an intuitively reasonably notion of plausible deniability.

\textbf{Performance under \KRAPPOR. }
Comparing Equation \eqref{eq:rappor_l2_performance} to the observation at the beginning of this subsection, we can see that an extra factor of $\left(1 +  \frac{\kk^2e^{\varepsilon/2}}{(\kk - 1)(e^{\varepsilon/2} - 1)^2}\right)$ samples is needed to achieve the same $\ell_2^2$ as in the non-private case. Similarly, from Equation \eqref{eq:rappor_l1_performance}, an extra factor of $\frac{(e^{\varepsilon/2} + \kk  - 1) (e^{\varepsilon/2}(\kk-1)+1)}{(e^{\varepsilon/2}-1)^2 (\kk  - 1)}$ samples is needed under the $\ell_1$ loss. For small $\varepsilon$, $n$ is effectively reduced to $n\varepsilon^2/4\kk$ (under both losses). When compared to Proposition \ref{prop:opt_lb_lp}, this result implies that \KRAPPOR is ``order'' optimal in the high privacy regime. However, for $\varepsilon \approx \ln \kk$, $n$ is reduced to $n /\sqrt{\kk}$ (under both losses). This suggests that \KRAPPOR is strictly sub-optimal in the moderate to low privacy regime.

\begin{proposition}
\label{prop:rr_high_eps_l2_win}
For all $\PP \in \mathbb{S}^\kk$ and all $\varepsilon \geq \ln(\kk/2)$,
\begin{equation}
\mathbb{E}\left|\left|\hat{\PP}_{\textrm{KRR}} - \PP\right|\right|_2^2  \leq \mathbb{E}\left|\left|\hat{\PP}_{\textrm{\RAPPOR}} - \PP\right|\right|_2^2,
\end{equation}
where $\hat{\PP}_{\text{KRR}}$ is the empirical estimate of $\PP$ under \KRR, $\hat{\PP}_{\textrm{\RAPPOR}}$ is the empirical estimate of $\PP$ under \KRAPPOR,  and $\hat{\PP}$ is the empirical estimator under \KRAPPOR.
\end{proposition}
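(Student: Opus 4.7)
The plan is to reduce the inequality to a purely algebraic statement in $\varepsilon$ and $\kk$ by directly substituting the closed-form expressions from Propositions~\ref{prop:rr_perf} and~\ref{prop:rappor_perf}. Note that the ``data-dependent'' term $(1 - \sum_i \pp_i^2)/n$ appears identically in both expected squared $\ell_2$ losses, so the comparison
\begin{equation*}
\mathbb{E}\|\hat{\PP}_{\text{KRR}} - \PP\|_2^2 \;\leq\; \mathbb{E}\|\hat{\PP}_{\textsc{Rappor}} - \PP\|_2^2
\end{equation*}
collapses to an inequality that depends only on $\varepsilon$ and $\kk$:
\begin{equation*}
\frac{(\kk-1)\bigl(\kk + 2(e^{\varepsilon}-1)\bigr)}{(e^{\varepsilon}-1)^2} \;\leq\; \frac{\kk\,e^{\varepsilon/2}}{(e^{\varepsilon/2}-1)^2}.
\end{equation*}
In particular, the ``for all $\PP$'' quantifier is free and the problem becomes deterministic.

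Next, I would introduce the change of variable $t = e^{\varepsilon/2} \geq 1$, noting that $e^{\varepsilon} - 1 = (t-1)(t+1)$ and therefore $(e^{\varepsilon}-1)^2 = (t-1)^2(t+1)^2$. After clearing the common factor $(t-1)^2$ and multiplying out $(t+1)^2$, the inequality simplifies (after a straightforward expansion and cancellation) to the cubic inequality
\begin{equation*}
g(t) \;:=\; \kk\, t^3 + 2t^2 + \kk\, t - (\kk-1)(\kk-2) \;\geq\; 0.
\end{equation*}
The hypothesis $\varepsilon \geq \ln(\kk/2)$ translates to $t \geq \sqrt{\kk/2}$, so the task is to verify $g(t)\geq 0$ on $[\sqrt{\kk/2},\infty)$.

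Since $g'(t) = 3\kk t^2 + 4t + \kk > 0$ for $t > 0$, $g$ is strictly increasing on the relevant domain, and it suffices to check $g(\sqrt{\kk/2}) \geq 0$. Plugging in $t = \sqrt{\kk/2}$ gives
\begin{equation*}
g\bigl(\sqrt{\kk/2}\bigr) = \frac{\kk^{5/2}}{2\sqrt{2}} + \frac{\kk^{3/2}}{\sqrt{2}} - \kk^2 + 4\kk - 2.
\end{equation*}
The dominant term grows like $\kk^{5/2}$, which beats $\kk^2$ once $\kk \geq 8$ (since $\kk^{5/2}/(2\sqrt{2}) \geq \kk^2 \iff \sqrt{\kk}\geq 2\sqrt{2}$); for the handful of remaining small cases $\kk \in \{2,\dots,7\}$ the value can be checked by direct evaluation (e.g.\ $g(\sqrt{1}) = 6$ at $\kk=2$, $g(\sqrt{3/2}) \approx 10.2$ at $\kk=3$, etc.).

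The only mildly delicate step is this last one: showing the single-variable inequality $g(\sqrt{\kk/2})\geq 0$ cleanly for all integer $\kk \geq 2$ without casework. A tidy way to avoid an explicit split is to lower-bound each positive summand separately using the monotonicity of $\kk \mapsto \kk^{5/2}$ against $\kk \mapsto \kk^2$, combined with the residual positive terms $\frac{\kk^{3/2}}{\sqrt{2}} + 4\kk - 2$, which are enough to absorb the gap for the small-$\kk$ regime. I expect this final bookkeeping — rather than any conceptual step — to be the main (modest) obstacle.
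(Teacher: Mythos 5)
Your proposal is correct, and it follows the same overall strategy as the paper's supplementary proof: both cancel the common bias-free term $(1-\sum_i \pp_i^2)/n$, reduce the claim to an inequality in $(\varepsilon,\kk)$ alone, argue monotonicity in $\varepsilon$, and verify the left endpoint. The execution differs in two ways, both to your advantage. First, the paper forms the ratio $f(\varepsilon,\kk) = \frac{\kk}{\kk-1}\bigl(\frac{e^\varepsilon-1}{e^{\varepsilon/2}-1}\bigr)^2\frac{e^{\varepsilon/2}}{2e^\varepsilon+\kk-2}$ and simply \emph{asserts} that it is increasing in $\varepsilon$; your substitution $t=e^{\varepsilon/2}$ clears denominators into the cubic $g(t)=\kk t^3+2t^2+\kk t-(\kk-1)(\kk-2)$, whose monotonicity is immediate from $g'(t)=3\kk t^2+4t+\kk>0$, so nothing is left unjustified. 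Second, and more importantly, the paper's proof evaluates the endpoint at $\varepsilon=\ln\kk$, which only establishes the inequality for $\varepsilon\ge\ln\kk$ even though the proposition claims it for $\varepsilon\ge\ln(\kk/2)$; you check the endpoint at $t=\sqrt{\kk/2}$, i.e.\ at the threshold actually stated, so your argument covers the full claimed range. Your endpoint verification is also cleaner: the $\kk^{5/2}/(2\sqrt{2})$ term dominates $\kk^2$ for $\kk\ge 8$, leaving only six small cases to evaluate directly (versus the paper's unproved claim that a certain discrete function is minimized at $\kk=7$, with a numerical value that does not appear to be computed correctly, though the conclusion still holds). The only residual loose end in your write-up is the wish to avoid the finite casework for $\kk\in\{2,\dots,7\}$; since these are six explicit evaluations of a polynomial, just doing them is perfectly rigorous and you should not regard it as an obstacle. (A minor shared caveat: for $\kk=2$ the threshold is $\varepsilon=0$, where both risks diverge; the comparison is meaningful only for $\varepsilon>0$, where your multiplication by $(t-1)^2(t+1)^2>0$ is valid.)
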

\begin{proof}
See Supplementary Section \ref{proof_rr_high_eps_l2_win}.
\end{proof}
\subsection{Simulation Analysis}
\label{sec:sim_results}

To complement the theoretical analysis above, we ran simulations of \KRR and \KRAPPOR varying the alphabet size $\kScalar$, the privacy level $\varepsilon$, the number of users $n$, and the true distribution $p$ from which the samples were drawn. In all cases, we report the mean over \num{10000} evaluations of $\|\hat{\PP} - \hat{\PP}_{\text{decoded}}\|_1$ where $\hat{\PP}$ is the ground truth sample drawn from the true distribution and $\hat{\PP}_{\text{decoded}}$ is the decoded \KRR or \KRAPPOR distribution. We vary $\varepsilon$ over a range that corresponds to the moderate-to-low privacy regimes in our theoretical analysis above, observing that even large values of $\varepsilon$ can provide plausible deniability impossible under un-noised logging.

We compare using the $\ell_1$ distance of the two distributions because in most applications we want to estimate all values well, emphasizing neither very large values (as an $\ell_2$ or higher metric might) nor very small values (as information theoretic metrics might). Supplementary Figures \ref{fig:l2_decoders} and \ref{fig:l2_krr_vs_rappor}, analogous to the ones in this section, demonstrate that the choice of distance metric does not qualitatively affect our conclusions on the decoding strategies for \KRR or \KRAPPOR nor on the regimes in which each is superior.

The distributions we considered in simulation were binomial distributions with parameter in $\{.1, .2, .3, .4, .5\}$ , Zipf distribution with parameter in $\{1, 2, 3, 4, 5\}$, multinomial distributions drawn from a symmetric Dirichlet distribution with parameter $\vec{\mathbf{1}}$, and the geometric distribution with mean $\kScalar/5$. The geometric distribution is shown in Supplementary Figure~\ref{fig:geometric_ground_truth}. We focus primarily on the geometric distribution here because qualitatively it shows the same patterns for decoding as the full set of binomial and Zipf distributions and it is sufficiently skewed to represent many real-world datasets. It is also the distribution for which \KRAPPOR does the best relative to \KRR over the largest range of $\kScalar$ and $\varepsilon$ in our simulations.

\subsubsection{Decoding}
\label{sec:rr_and_rappor_decoding}

We first consider the impact of the choice of decoding mechanism used for \KRR and \KRAPPOR. We find that the best decoder in practice for both \KRR and \KRAPPOR on skewed distributions is the \textit{projected decoder} which projects the $\hat{\PP}_{\mathrm{KRR}}$ or $\hat{\PP}_{\mathrm{\RAPPOR}}$ onto the probability simplex $\simplex^\kScalar$ using the method described in Algorithm~1 of \cite{DBLP:journals/corr/WangC13a}. For \KRR, we compare the projected empirical decoder to the normalized empirical decoder (which truncates negative values and renormalizes) and to the maximum likelihood decoder (see Supplementary Section~\ref{sec:rr_ml_decoder}).  For \KRAPPOR, we compare the standard decoder, normalized decoder, and projected decoder. Figure~\ref{fig:decoders} shows that the projected decoder is substantially better than the other decoders for both \KRR and \KRAPPOR for the whole range of $\kScalar$ and $\varepsilon$ for the geometric distribution. We find this result holds as we vary the number of users from $30$ to $10^6$ and for all distributions we evaluated except for the Dirichlet distribution, which is the least skewed. For the Dirichlet distribution, the normalized decoder variant is best for both \KRR and \KRAPPOR. Because the projected decoder is best on all the skewed distributions we expect to see in practice, we use it exclusively for the open-alphabet experiments in Section~\ref{sec:open_alphabets}.

\subsubsection{\KRR vs \KRAPPOR}
\label{sec:rr_vs_rappor}

\begin{figure*}
\vspace{-.25in}
\centering
\begin{tabular}{m{.2in}cc}
&
\begin{subfigure}[b]{.35\linewidth}
\includegraphics[width=\linewidth]{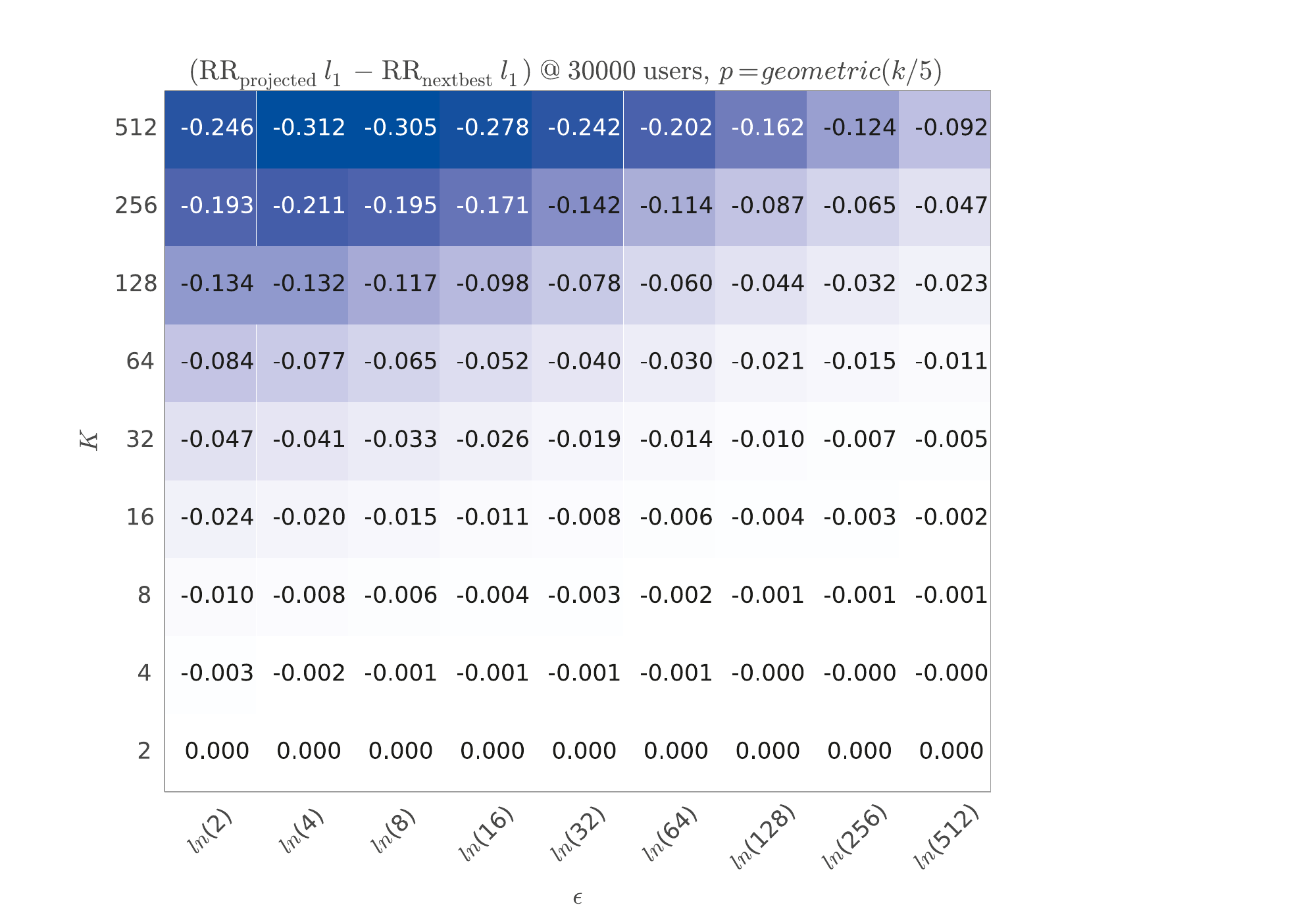}
\end{subfigure}
&
\begin{subfigure}[b]{.35\linewidth}
\includegraphics[width=\linewidth]{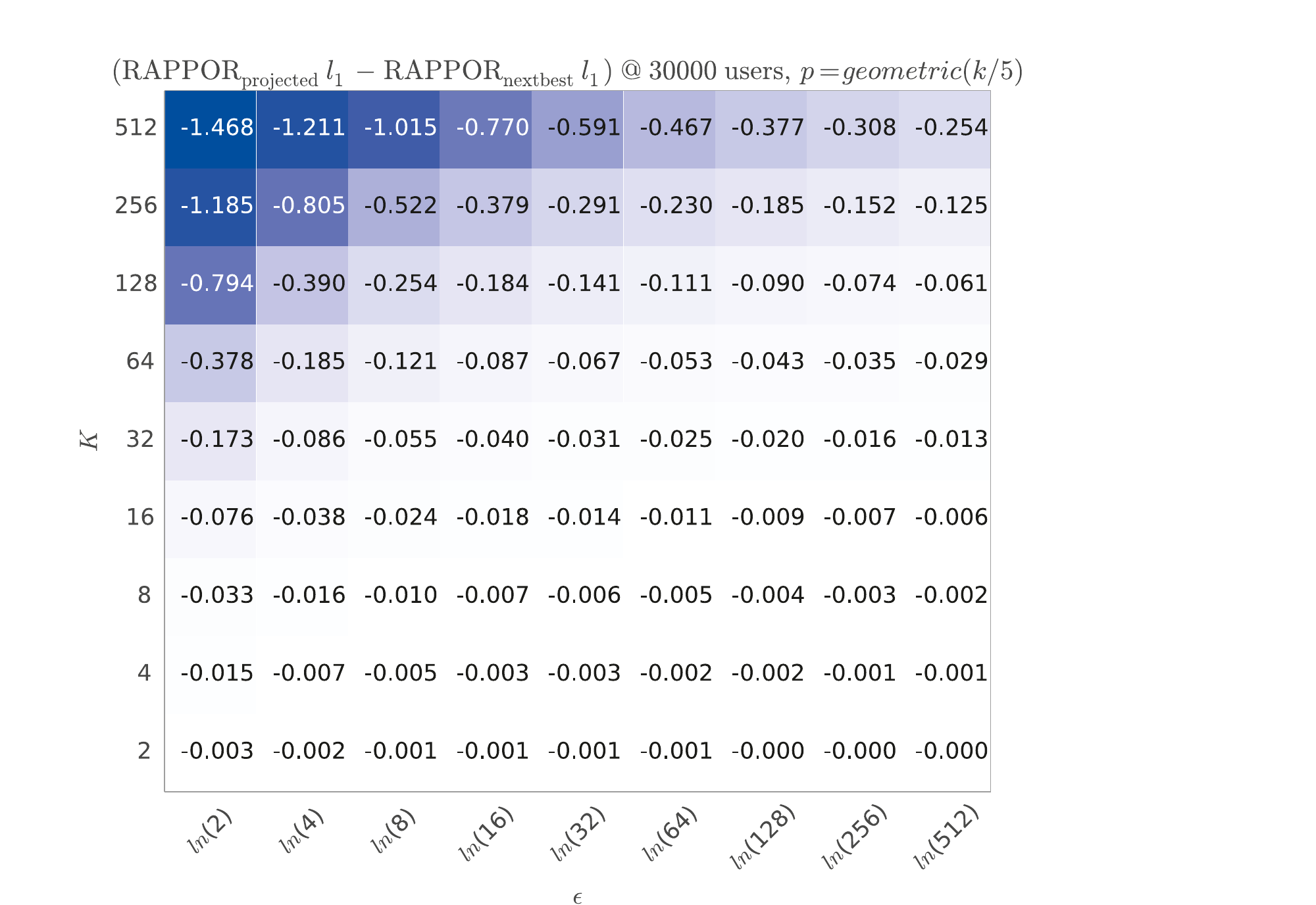}
\end{subfigure}
\end{tabular}
\vspace{-1em}
\caption{
The improvement in $\ell_1$ decoding of the projected \KRR decoder (left) and projected \KRAPPOR decoder (right). Each grid varies the size of the alphabet $\kScalar$ (rows) and privacy parameter $\varepsilon$ (columns). Each cell shows the difference in $\ell_1$ magnitude that the projected decoder has over the ML and normalized \KRR decoders (left) or the standard and normalized \KRAPPOR decoders (right). Negative values mean improvement of the projected decoder over the next best alternative.}
\label{fig:decoders}
\end{figure*}

\begin{figure*}
\vspace{-.36in}
\centering
\begin{tabular}{m{.2in}cc}
\parbox[t]{2mm}{\rotatebox{90}{\rlap{\hspace{.6in}Geometric}}}
&
\begin{subfigure}[b]{.35\linewidth}
\includegraphics[width=\linewidth]{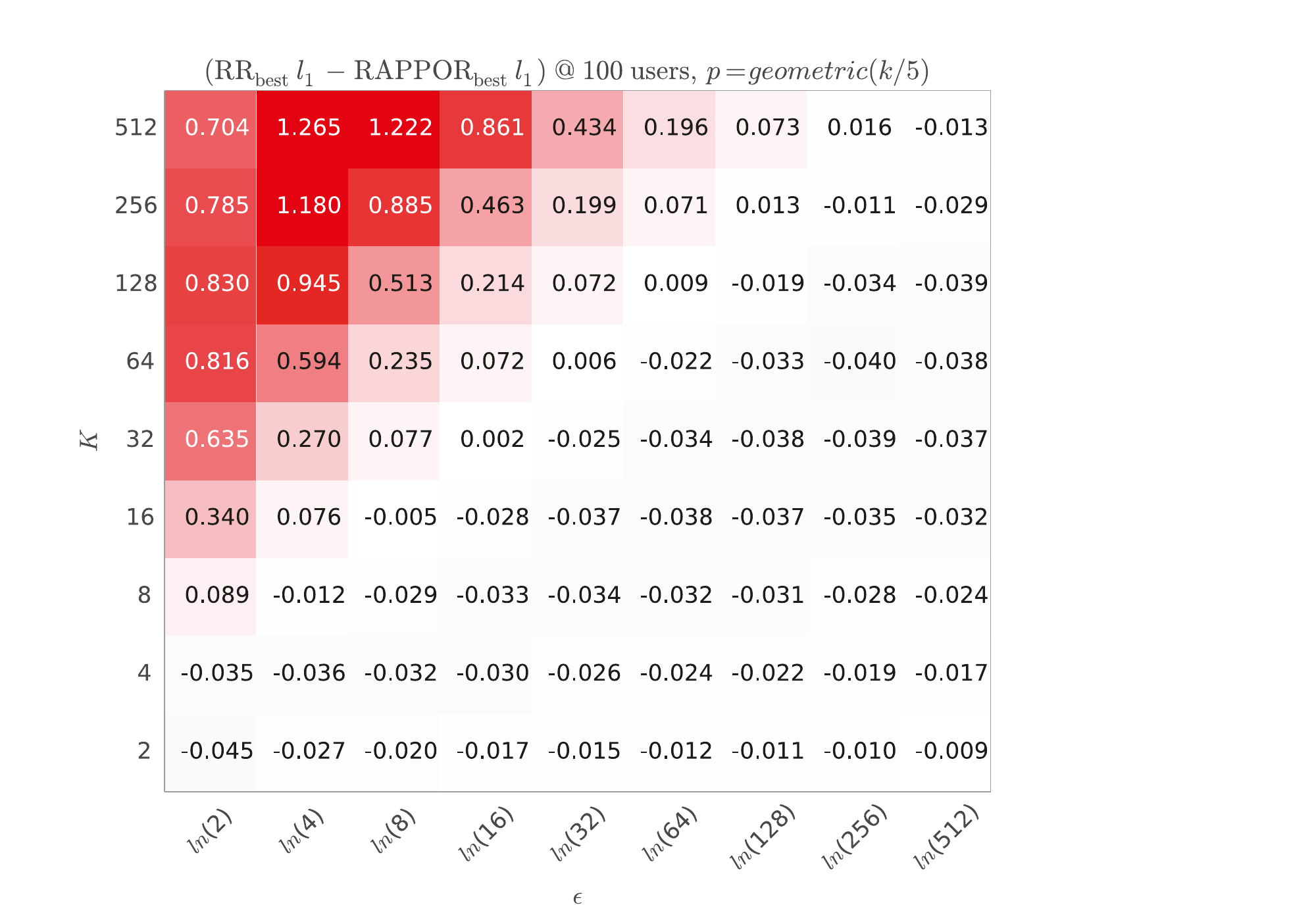}
\end{subfigure}
&
\begin{subfigure}[b]{.35\linewidth}
\includegraphics[width=\linewidth]{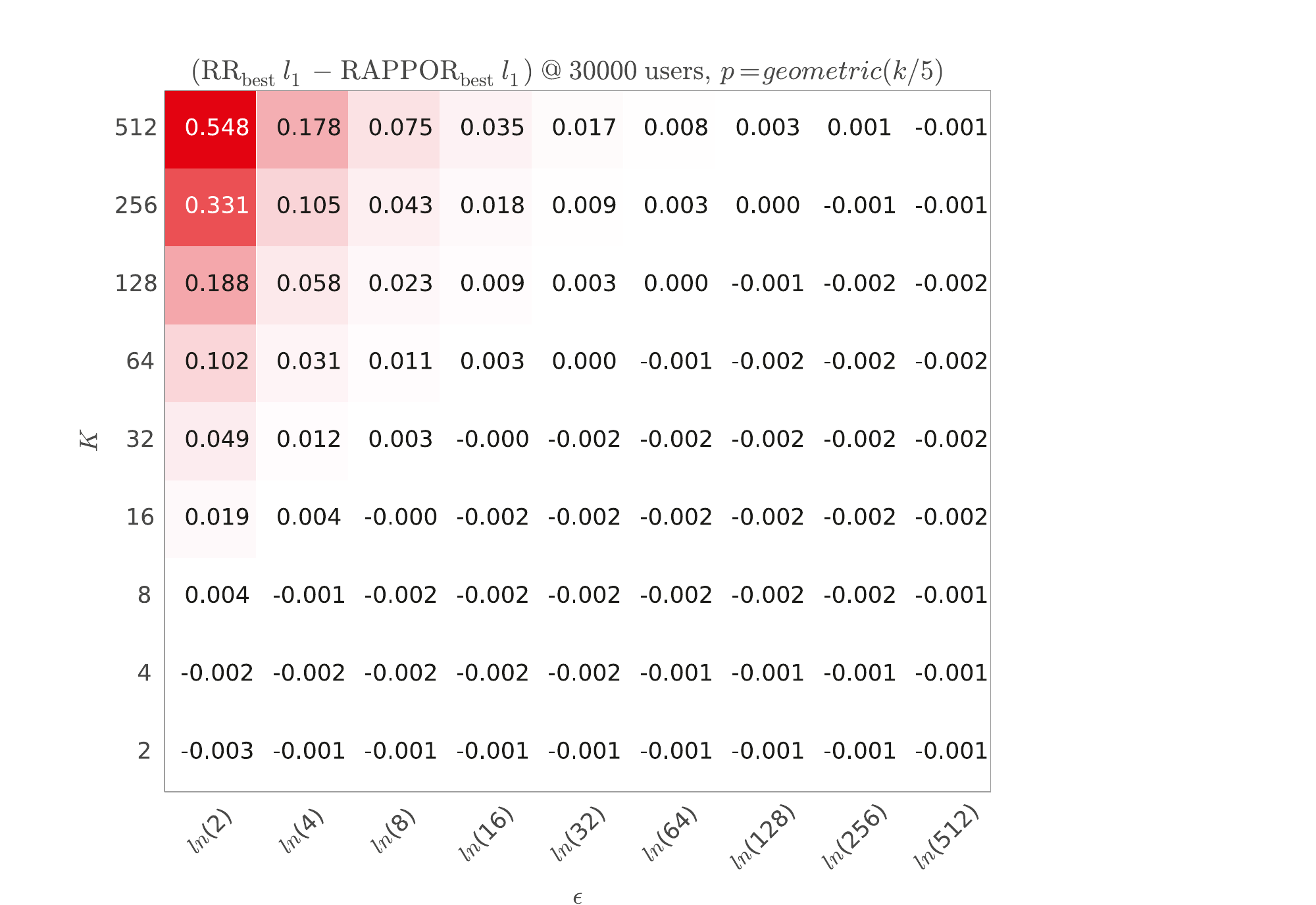}
\end{subfigure}
\\
\parbox[t]{2mm}{\rotatebox{90}{\rlap{\hspace{.6in}Dirichlet}}}
&
\begin{subfigure}[b]{.35\linewidth}
\vspace{-1em}
\includegraphics[width=\linewidth]{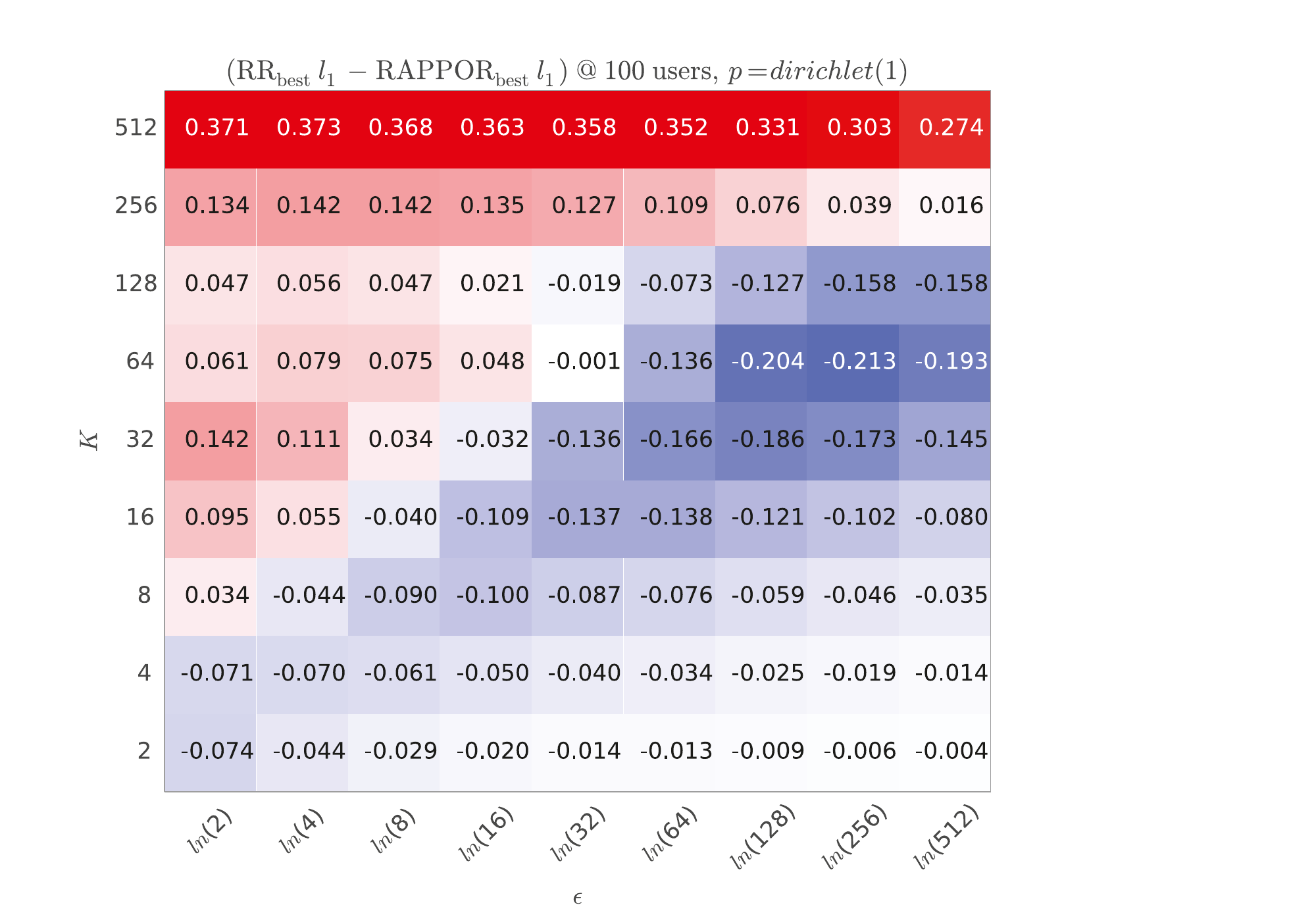}
\end{subfigure}
&
\begin{subfigure}[b]{.35\linewidth}
\vspace{-1em}
\includegraphics[width=\linewidth]{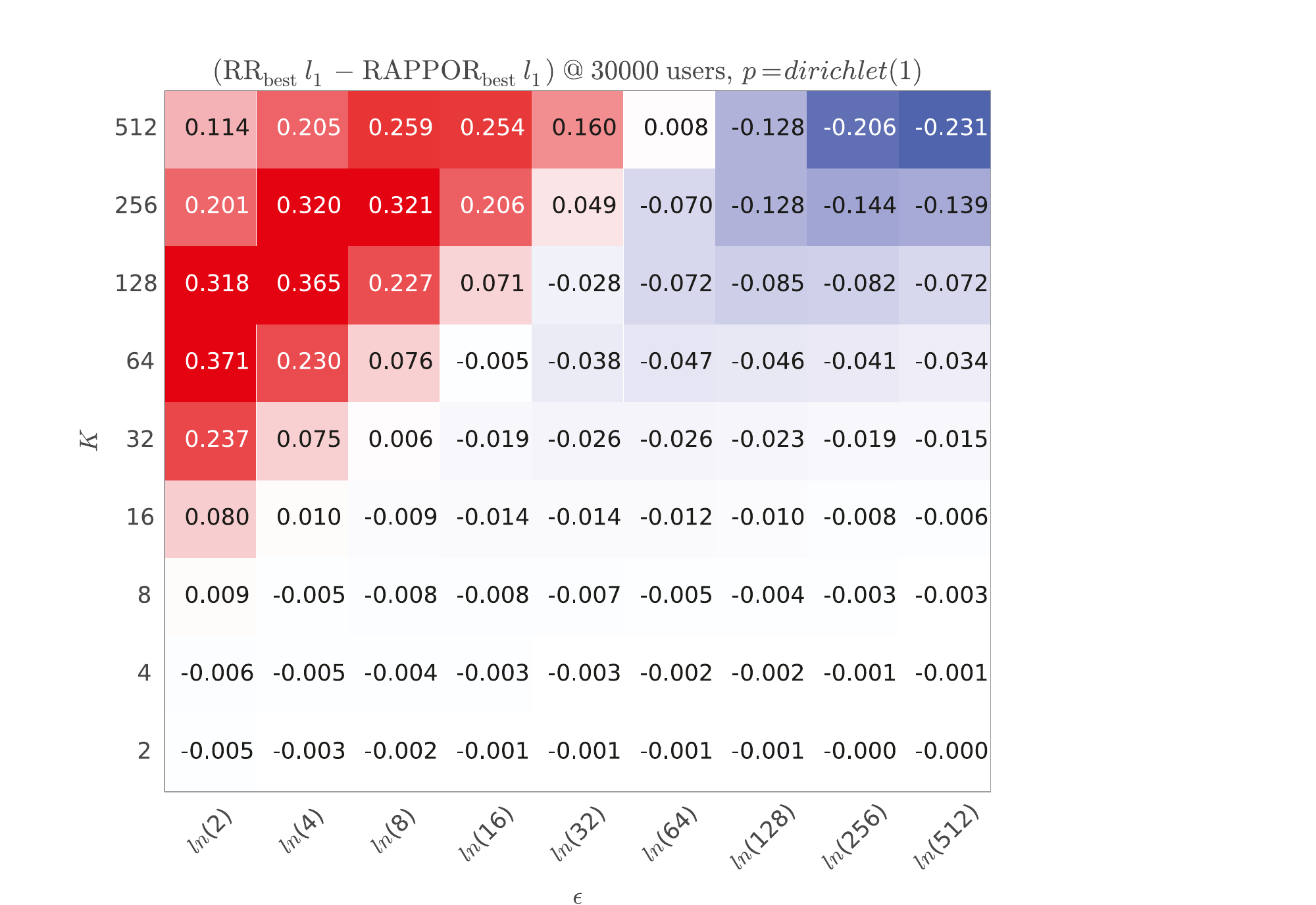}
\end{subfigure}

\end{tabular}
\vspace{-1em}
\caption{The improvement (negative values, blue) of the best \KRR decoder over the best \KRAPPOR decoder varying the size of the alphabet $\kScalar$ (rows) and privacy parameter $\varepsilon$ (columns). The left charts focus on small numbers of users (100); the right charts show a large number of users (30000, also representative of larger numbers of users). The top charts show the geometric distribution (skewed) and the bottom charts show the Dirichlet distribution (flat).}
\label{fig:krr_vs_rappor}
\end{figure*}

\begin{figure*}
\vspace{-.36in}
\captionsetup[subfigure]{aboveskip=-5pt,belowskip=-5pt}
\centering
\begin{tabular}{cc}
\begin{subfigure}[b]{.4\linewidth}
\includegraphics[width=\linewidth]{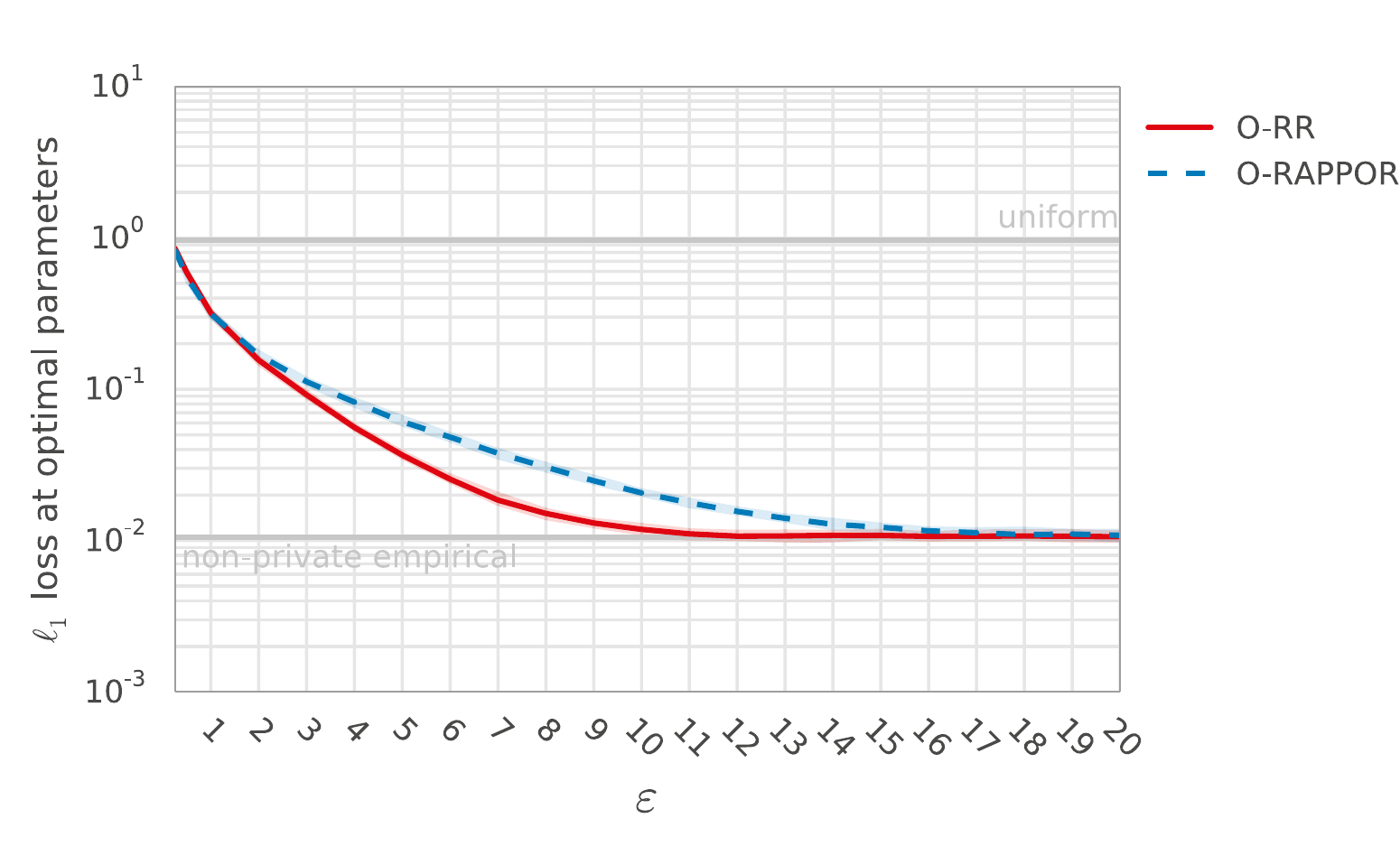}
\caption{Open alphabets.}
\label{fig:orr_vs_rappor:open}
\end{subfigure}
&
\begin{subfigure}[b]{.4\linewidth}
\includegraphics[width=\linewidth]{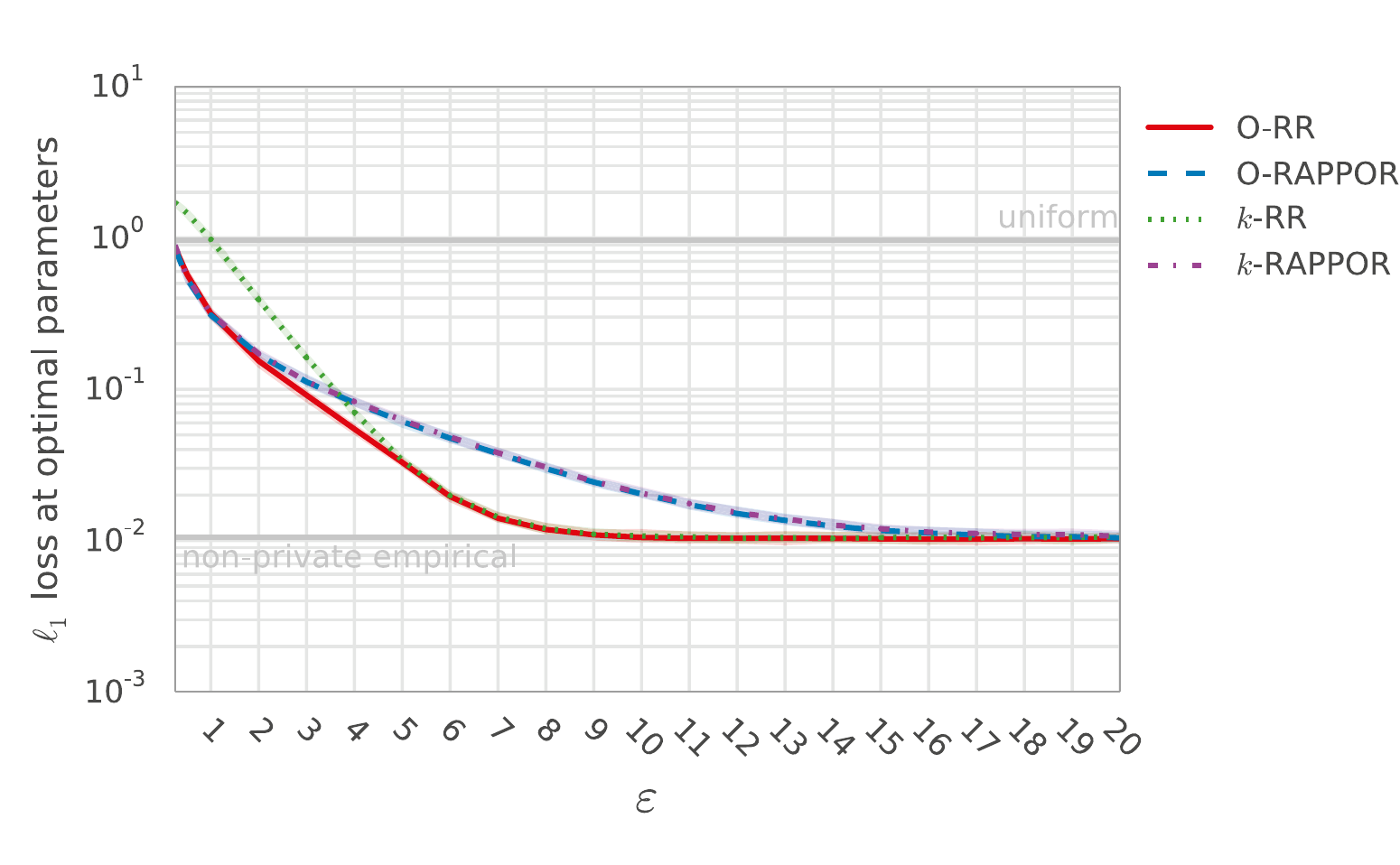}
\caption{Closed alphabets.}
\label{fig:orr_vs_rappor:closed}
\end{subfigure}
\end{tabular}
\vspace{-.5em}
\caption{
$\ell_1$ loss of \ORR and \ORAPPOR for $n=10^6$ on the geometric distribution
when applied to unknown input alphabets (via hash functions, \subref*{fig:orr_vs_rappor:open})
and to known input alphabets (via perfect hashing, \subref*{fig:orr_vs_rappor:closed}).
Lines show median $\ell_1$ loss with 90\% confidence intervals over 50 samples.
Free parameters are set via grid search over
$\kScalar \in [2, 4, 8, \ldots, 2048, 4096]$,
$\cScalar \in [1, 2, 4, \ldots, 512, 1024]$,
$\hScalar \in [1, 2, 4, 8, 16]$ for each $\varepsilon$.
Note that the \KRAPPOR and \ORAPPOR lines in
\protect\subref*{fig:orr_vs_rappor:closed} are nearly indistinguishable.
Baselines indicate expected loss from (1) using an empirical estimator
directly on the input $\sVector$ and (2) using the uniform distribution
as the $\hat{\pVector}$ estimate.
}
\label{fig:orr_vs_rappor}
\end{figure*}

To construct a fair, empirical comparison of \KRR and \KRAPPOR, we employ the same methodology used above in selecting decoders. Figure~\ref{fig:krr_vs_rappor} shows the difference between the best \KRR decoder and the best \KRAPPOR decoder (for a particular $\kScalar$ and $\varepsilon$). For most cells, the best decoder is the projected decoder described above.

Note that the best \KRAPPOR decoder is consistently better than the best \KRR decoder for relatively large $\kScalar$ and low $\varepsilon$. However, \KRR is slightly better than \KRAPPOR in all conditions where $k < e^\varepsilon$ (bottom-right triangle), an empirical result for $\ell_1$ that complements Proposition~\ref{prop:rr_high_eps_l2_win}'s statement about ML decoders in $\ell_2$. All of the skewed distributions manifest the same pattern as the geometric distribution. As the number of users increases, \KRR's advantage over \KRAPPOR in the low privacy environment shrinks. In the next sections, we will examine the use of cohorts to improve decoding and to handle larger, open alphabets.

\section{Open Alphabets, Hashing, and Cohorts}

\label{sec:open_alphabets}
In practice, the set of values that may need to be collected may not be easily enumerable in advance, preventing a direct application of the binary and $\kScalar$-ary formulations of private distribution estimation. Consider a population of $\nScalar$ users, where each user $\iScalar$ possesses a symbol $\sScalar_\iScalar$ drawn from a large set of symbols $\sSet$ whose membership is not known in advance.  This scenario is common in practice; for example, in Chrome's estimation of the distribution of home page settings \cite{erlingsson2014rappor}.  Building on this intuitive example, we assume for the remainder of the paper that symbols $\sScalar_\iScalar$ are strings, but we note that the methods described are applicable to any hashable structures.

\subsection{\ORR: \KRR with hashing and cohorts}
\label{sec:orr}
\KRR is effective for privatizing over known alphabets.  Inspired by \cite{erlingsson2014rappor}, we extend \KRR to open alphabets by combining two primary intuitions: hashing and cohorts. Let $\hash(\sScalar)$ be a function mapping $\sSet \rightarrow \naturals$ with a low collision rate, i.e. $\hash(\sScalar) = \hash(\sScalar')$ with very low probability for $\sScalar' \ne \sScalar$. With hashing, we could use \KRR to guarantee local privacy over an alphabet of size $\kScalar$ by having each client report $\qKrrMatrix(\hash(\sScalar)\mod{\kScalar})$. However, as we will see, hashing alone is not enough to provide high utility because of the increased rate of collisions introduced by the modulus.

Complementing hashing, we also apply the idea of hash \textit{cohorts}: each user $\iScalar$ is assigned to a cohort $\cScalar_\iScalar$ sampled i.i.d. from the uniform distribution over $\cSet = \{1, ..., \CScalar\}$. Each cohort $\cScalar \in \cSet$ provides an independent view of the underlying distribution of strings by projecting the space of strings $\sSet$ onto a smaller space of symbols $\xSet$ using an independent hash function $\hash_\cScalar$.  The users in a cohort use their cohort's hash function to partition $\sSet$ into $\kScalar$ disjoint subsets by computing $\xScalar_\iScalar  = \hash_{\cScalar_\iScalar} (\sScalar_\iScalar) \mod{\kScalar} = \hash_{\cScalar_\iScalar}^{(\kScalar)}(\sScalar_\iScalar)$.  Each subset contains approximately the same number of strings, and because each cohort uses a different hash function, the induced partitions for different cohorts are orthogonal:
$\prob(\xScalar_\iScalar = \x_j | \cScalar_\iScalar \neq \cScalar_j) \approx \frac{1}{\kScalar}$ even when $\sScalar_\iScalar = \sScalar_j$.

\subsubsection{Encoding and Decoding}
For encoding, the \ORR privatization mechanism can
be viewed as a sampling distribution independent of $\cSet$. Therefore, $\qOrrMatrix (\yScalar, \cScalar | \sScalar)$ is given by
\begin{equation}
 \frac{1}{\CScalar(e^{\varepsilon} + \kScalar - 1)}
\left\{
\begin{array}{rl}
  e^{\varepsilon} & \text{ if } \hash_\cScalar^{(\kScalar)}(\sScalar) = \yScalar,\\
  1               & \text{ if } \hash_\cScalar^{(\kScalar)}(\sScalar) \neq \yScalar.\\
\end{array}
\right.
\end{equation}
For decoding, fix candidate set $\sSet$ and interpret the
privatization mechanism $\qOrrMatrix$ as a $\kScalar \CScalar \times \SScalar$
row-stochastic matrix:




\begin{equation}
\qOrrMatrix = \frac{1}{\CScalar} \frac{1}{e^{\varepsilon} + \kScalar - 1} \left(\one + (e^{\varepsilon} - 1)\hMatrix \right)
\end{equation}
where:
\begin{equation}
\hMatrix(\yScalar, \cScalar | \sScalar) = \unity_{\{\hash_{\cScalar}^{(\kScalar)}(\sScalar) = \yScalar\}}
\end{equation}
Note that $\hMatrix$ is a $\kScalar \CScalar \times \SScalar$ sparse binary matrix
encoding the hashed outputs for each cohort, wherein each column of
$\hMatrix$ has exactly $\CScalar$ non-zero entries.


Now $\mVector = \pVector \qOrrMatrix$ is the expected output distribution for true probability vector $\pVector$,
allowing us to form an empirical estimator by using standard least-squares
techniques to solve the linear system:
%
\begin{equation}
\label{eq:rr_strings_regression}
\hat{\pVector}_\text{ORR} \hMatrix = \frac{1}{e^{\varepsilon}-1} \left( \CScalar (e^{\varepsilon}+ \kScalar - 1) \hat{\mVector} - \one \right).
\end{equation}

Note that when $\CScalar=1$ and $\hMatrix$ is the identity matrix,
\eqref{eq:rr_strings_regression}
reduces to standard \KRR empirical estimator as seen
in \eqref{eq:rr_empirical}.

As with the \KRR empirical estimator, $\hat{\pVector}_\text{ORR}$ may have negative entries.  Section~\ref{sec:rr_normalization} describes methods for constraining $\hat{\pVector}_\text{ORR}$ to $\simplex^\kScalar$, of which simplex projection is demonstrated to offer superior performance in Section \ref{sec:sim_results}.  The remainder of the paper assumes that \ORR uses the simplex projection strategy.

\subsection{\ORAPPOR}
\label{sec:orappor}
\RAPPOR also extends from $\kScalar$-ary alphabets to open alphabets using
hashing and cohorts \cite{erlingsson2014rappor}; we refer to this extension
herein as \ORAPPOR.  However, the \KRAPPOR
mechanism uses a size $|\tilde{\xSet}| = 2^\kScalar$ input representation as
opposed to \KRR's size $|\xSet| = \kScalar$ representation.
Taking advantage of the larger input space, \ORAPPOR uses an
independent $\hScalar$-hash Bloom filter $\bloom_\cScalar^{(\kScalar)}$ for
each cohort before applying the \KRAPPOR mechanism---i.e. the
$\jScalar$-th bit of $\xScalar_\iScalar$ is 1 if
$\hash_{\cScalar, \hScalar'}^{(k)}(\sScalar_\iScalar) = \jScalar$
for any $\hScalar' \in [1 \ldots \hScalar]$, where
$\hash_{\cScalar, \hScalar'}^{(\kScalar)}$ are a set of $\hScalar\CScalar$
mutually independent hash functions modulo $\kScalar$.

Decoding for \ORAPPOR is described in \cite{erlingsson2014rappor}
and follows a similar strategy as for \ORR.  However, because this paper
focuses on distribution estimation rather than heavy hitter detection,
we eliminate both the Lasso regression stage and filtering of imputed
frequencies relative to Bonferroni corrected thresholds, retaining just
the regular least-squares regression.

\subsection{Simulation Analysis}
\label{sec:open_simulation_analysis}
We ran simulations of \ORR and \ORAPPOR for $\nScalar=10^6$
users with input drawn from an alphabet of $\SScalar=256$ symbols under a
geometric distribution with mean=$\SScalar/5$ (see
Supplementary Figure~\protect\ref{fig:geometric_ground_truth}).
As described in Section~\protect\ref{sec:sim_results}, the geometric distribution is
representative of actual data and relatively easy for \KRAPPOR and challenging
for \KRR.  Free parameters were set to minimize the median $\ell_1$ loss.
Similar results for $\SScalar=4096$ and $\nScalar=10^6$ and $10^8$ are included
in the Supplementary Material.


In Figure~\ref{fig:orr_vs_rappor:open}, we see that under these conditions,
\ORR matches the utility of \ORAPPOR in both the very low and high privacy
regimes and exceeds the utility of \ORAPPOR over midrange privacy settings.

For \ORR, we find that the optimal $\kScalar$ depends directly on $\varepsilon$,
that increasing $\CScalar$ consistently improves performance in the low-to-mid
privacy regime, and that $\CScalar=1$ noticably underperforms across
the range of privacy levels.  For \ORAPPOR, we find that performance improves as
$\kScalar$ increases (with $\kScalar=4096$ near the asymptotic limit), that
$\CScalar=1$ noticably underperforms across the range of privacy values, but
with all $\CScalar \ge 2$ performing indistinguishably.  Finally, we find that
the optimal value for $\hScalar$ is consistently 1, indicating that Bloom filters
provide no utility improvement beyond simple hashing.
See Supplementary Figure~\ref{fig:open_set_params} for details.

\subsection{Improved Utility for Closed Alphabets}
\label{sec:krr_with_hashing}

\ORR and \ORAPPOR extend $\kScalar$-ary mechanisms to open alphabets
through the use of hash functions and cohorts. These same mechanisms may also
be applied to closed alphabets known \textit{a priori}.  While direct application is
possible, the reliance on hash functions exposes both mechanism to
unnecessary risk of hash collision.

Instead, we modify the \ORR and \ORAPPOR mechanisms, replacing
each cohort's generic hash functions with minimal perfect hash functions mapping
$\sSet$ to $[0 \ldots \SScalar-1]$ before applying the modulo $\kScalar$ operation.
In most closed-alphabet applications, $\sSet = [0 \ldots \SScalar-1]$, in which case
these minimal perfect hash functions are simply permutations.  Also note that
in this setting, \ORR and and \ORAPPOR reduce to exactly their $\kScalar$-ary
counterparts when $\CScalar$ and $\hScalar$ are both 1 except that the
output symbols are permuted.

In Figure~\ref{fig:orr_vs_rappor:closed}, we evaluate these modified
mechanisms using the same method described in
Section~\ref{sec:open_simulation_analysis}
(note that the utilities of \KRAPPOR and \ORAPPOR are nearly
indistinguishable).
\ORAPPOR benefits
little from the introduction of minimal perfect hash functions.
In contrast, \ORR's utility improves significantly, meeting or exceeding the
utility of all other mechanisms at all considered $\varepsilon$.

\section{Conclusion}

Data improves products, services, and our understanding of the world. But its collection comes with risks to the individuals represented in the data as well as to the institutions responsible for the data's stewardship. This paper's focus on distribution estimation under local privacy takes one step toward a world where the benefits of data-driven insights are decoupled from the collection of raw data. Our new theoretical and empirical results show that combining cohort-style hashing with the $\kScalar$-ary extension of the classical randomized response mechanism admits practical, state of the art results for locally private logging.

In many applications, data is collected
to enable the making of a specific decision.  In such settings, the
nature of the decision frequently determines the required level
of utility, and the number of reports to be collected $\nScalar$ is
pre-determined by the size of the existing user base.  Thus, the differential
privacy practitioner's role is often to offer users as much privacy as
possible while still extracting sufficient utility at the given $\nScalar$.
Our results suggest that \ORR may play a crucial role for such a
practitioner, offering a single mechanism that provides maximal privacy
at any desired utility level simply by adjusting the mechanism's parameters.

In future work, we plan to examine estimation of non-stationary distributions as they change over time, a common scenario in data logged from user interactions. We will also consider what utility improvements may be possible when some responses need more privacy than others, another common scenario in practice. Much more work remains before we can dispel the collection of un-noised data altogether.

\textbf{Acknowledgements.} Thanks to \'{U}lfar Erlingsson, Ilya Mironov, and Andrey Zhmoginov for their comments on drafts of this paper.

\bibliography{refs}

\begin{thebibliography}{20}
\providecommand{\natexlab}[1]{#1}
\providecommand{\url}[1]{\texttt{#1}}
\expandafter\ifx\csname urlstyle\endcsname\relax
  \providecommand{\doi}[1]{doi: #1}\else
  \providecommand{\doi}{doi: \begingroup \urlstyle{rm}\Url}\fi

\bibitem[Bassily \& Smith(2015)Bassily and Smith]{bassily2015local}
Bassily, Raef and Smith, Adam.
\newblock Local, private, efficient protocols for succinct histograms.
\newblock \emph{arXiv preprint arXiv:1504.04686}, 2015.

\bibitem[Blocki et~al.(2016)Blocki, Datta, and
  Bonneau]{blocki2016differentially}
Blocki, Jeremiah, Datta, Anupam, and Bonneau, Joseph.
\newblock Differentially private password frequency lists.
\newblock 2016.

\bibitem[Boyd \& Vandenberghe(2004)Boyd and Vandenberghe]{boyd2004convex}
Boyd, Stephen and Vandenberghe, Lieven.
\newblock \emph{Convex optimization}.
\newblock Cambridge university press, 2004.

\bibitem[Chan et~al.(2012)Chan, Li, Shi, and Xu]{chan2012differentially}
Chan, T-H~Hubert, Li, Mingfei, Shi, Elaine, and Xu, Wenchang.
\newblock Differentially private continual monitoring of heavy hitters from
  distributed streams.
\newblock In \emph{Privacy Enhancing Technologies}, pp.\  140--159. Springer,
  2012.

\bibitem[Diakonikolas et~al.(2015)Diakonikolas, Hardt, and
  Schmidt]{diakonikolas2015differentially}
Diakonikolas, Ilias, Hardt, Moritz, and Schmidt, Ludwig.
\newblock Differentially private learning of structured discrete distributions.
\newblock In \emph{Advances in Neural Information Processing Systems}, pp.\
  2557--2565, 2015.

\bibitem[Duchi et~al.(2013{\natexlab{a}})Duchi, Wainwright, and
  Jordan]{duchi2013locala}
Duchi, John, Wainwright, Martin~J, and Jordan, Michael~I.
\newblock Local privacy and minimax bounds: Sharp rates for probability
  estimation.
\newblock In \emph{Advances in Neural Information Processing Systems}, pp.\
  1529--1537, 2013{\natexlab{a}}.

\bibitem[Duchi et~al.(2013{\natexlab{b}})Duchi, Jordan, and
  Wainwright]{duchi2013local}
Duchi, John~C, Jordan, Michael~I, and Wainwright, Martin~J.
\newblock Local privacy, data processing inequalities, and statistical minimax
  rates.
\newblock \emph{arXiv preprint arXiv:1302.3203}, 2013{\natexlab{b}}.

\bibitem[Dwork(2006)]{Dwo06}
Dwork, C.
\newblock Differential privacy.
\newblock In \emph{Automata, languages and programming}, pp.\  1--12. Springer,
  2006.

\bibitem[Dwork \& Lei(2009)Dwork and Lei]{DL09}
Dwork, C. and Lei, J.
\newblock Differential privacy and robust statistics.
\newblock In \emph{Proceedings of the 41st annual ACM symposium on Theory of
  computing}, pp.\  371--380. ACM, 2009.

\bibitem[Dwork et~al.(2006)Dwork, McSherry, Nissim, and Smith]{DMNS06}
Dwork, C., McSherry, F., Nissim, K., and Smith, A.
\newblock Calibrating noise to sensitivity in private data analysis.
\newblock In \emph{Theory of Cryptography}, pp.\  265--284. Springer, 2006.

\bibitem[Dwork(2008)]{dwork2008differential}
Dwork, Cynthia.
\newblock Differential privacy: A survey of results.
\newblock In \emph{Theory and applications of models of computation}, pp.\
  1--19. Springer, 2008.

\bibitem[Erlingsson et~al.(2014)Erlingsson, Pihur, and
  Korolova]{erlingsson2014rappor}
Erlingsson, {\'U}lfar, Pihur, Vasyl, and Korolova, Aleksandra.
\newblock Rappor: Randomized aggregatable privacy-preserving ordinal response.
\newblock In \emph{Proceedings of the 2014 ACM SIGSAC Conference on Computer
  and Communications Security}, pp.\  1054--1067. ACM, 2014.

\bibitem[Hsu et~al.(2012)Hsu, Khanna, and Roth]{hsu2012distributed}
Hsu, Justin, Khanna, Sanjeev, and Roth, Aaron.
\newblock Distributed private heavy hitters.
\newblock In \emph{Automata, Languages, and Programming}, pp.\  461--472.
  Springer, 2012.

\bibitem[Kairouz et~al.(2014)Kairouz, Oh, and Viswanath]{kairouz2014extremal}
Kairouz, Peter, Oh, Sewoong, and Viswanath, Pramod.
\newblock Extremal mechanisms for local differential privacy.
\newblock In \emph{Advances in Neural Information Processing Systems}, pp.\
  2879--2887, 2014.

\bibitem[Kamath et~al.(2015)Kamath, Orlitsky, Pichapati, and
  Suresh]{kamath2015learning}
Kamath, Sudeep, Orlitsky, Alon, Pichapati, Venkatadheeraj, and Suresh,
  Ananda~Theertha.
\newblock On learning distributions from their samples.
\newblock In \emph{Proceedings of The 28th Conference on Learning Theory}, pp.\
   1066--1100, 2015.

\bibitem[Lehmann \& Casella(1998)Lehmann and Casella]{lehmann1998theory}
Lehmann, Erich~Leo and Casella, George.
\newblock \emph{Theory of point estimation}, volume~31.
\newblock Springer Science \& Business Media, 1998.

\bibitem[McCallum et~al.(1998)McCallum, Nigam, et~al.]{mccallum1998comparison}
McCallum, Andrew, Nigam, Kamal, et~al.
\newblock A comparison of event models for naive bayes text classification.
\newblock In \emph{AAAI-98 workshop on learning for text categorization},
  volume 752, pp.\  41--48. Citeseer, 1998.

\bibitem[Narayanan \& Shmatikov(2008)Narayanan and
  Shmatikov]{narayanan2008robust}
Narayanan, Arvind and Shmatikov, Vitaly.
\newblock Robust de-anonymization of large sparse datasets.
\newblock In \emph{Security and Privacy, 2008. SP 2008. IEEE Symposium on},
  pp.\  111--125. IEEE, 2008.

\bibitem[Wang \& Carreira{-}Perpi{\~{n}}{\'{a}}n(2013)Wang and
  Carreira{-}Perpi{\~{n}}{\'{a}}n]{DBLP:journals/corr/WangC13a}
Wang, Weiran and Carreira{-}Perpi{\~{n}}{\'{a}}n, Miguel~{\'{A}}.
\newblock Projection onto the probability simplex: An efficient algorithm with
  a simple proof, and an application.
\newblock \emph{CoRR}, abs/1309.1541, 2013.
\newblock URL \url{http://arxiv.org/abs/1309.1541}.

\bibitem[Warner(1965)]{warner1965randomized}
Warner, Stanley~L.
\newblock Randomized response: A survey technique for eliminating evasive
  answer bias.
\newblock \emph{Journal of the American Statistical Association}, 60\penalty0
  (309):\penalty0 63--69, 1965.

\end{thebibliography}
\bibliographystyle{icml2016}

\newpage
\onecolumn
\appendix

\begin{center}
{\Large Supplementary Material: Discrete Distribution Estimation Under Local Privacy}
\end{center}

\section{Proof of Theorem \ref{thm:opt_bin}}
\label{proof_opt_bin}
As argued in the proof sketch of Theorem \ref{thm:opt_bin}, it suffices to show that  $r_{\LL,\varepsilon, \kk, \n}(\Q)$ obeys the data processing inequality. Precisely, we need to show that for any row stochastic matrix $\mathbf{W}$, $r_{\LL,\varepsilon, \kk, \n}(\mathbf{W}\Q) \geq r_{\LL,\varepsilon, \kk, \n}(\Q)$. Observe that this is equivalent to showing that $r_{\LL,\varepsilon, \kk, \n}(\Q) \geq r_{\LL, \kk, \n}$, where $r_{\LL, \kk, \n}$ is the minimax risk in the non-private setting.

Consider the set of all randomized estimators $\hat{\PP}$. Under randomized estimators, the minimax risk is given by
\begin{equation*}
r_{\LL, k,n} = \inf_{\hat{\PP}} \sup_{\PP \in \mathbb{S}^\kk}  \underset{X^n \sim \PP, \hat{ \PP}}{\mathbb{E}} \LL(\PP,  \hat{\PP}),
\end{equation*}
where the expectation is taken over the randomness in the observations $X_1, \cdots, X_n$ and the randomness in $\hat{\PP}$. Under a differentially private mechanism $\Q$, the minimax risk is given by
\begin{equation*}
r_{\LL,\varepsilon, \kk, \n}(\Q)  = \inf_{\hat{\PP}_{\Q}} \sup_{\PP \in \mathbb{S}^\kk}  \underset{Y^n \sim \PP\Q, \hat{\PP}_{\Q}}{\mathbb{E}} \LL(\PP,  \hat{\PP}_{\Q}),
\end{equation*}
where the expectation is taken over the randomness in the private observations $Y_1, \cdots, Y_n$ and the randomness in $\hat{\PP}_{\Q}$.

Assume that there exists a (potentially randomized) estimator $\hat{\PP}^{*}_{\Q}$ that achieves $r_{\LL,\varepsilon, \kk, \n}(\Q)$. Consider the following randomized estimator: $\Q$ is first applied to $X_1, \cdots, X_n$ individually and $\hat{\PP}^{*}_{\Q}$ is then jointly applied to the outputs of $\Q$. This estimator achieves a risk of $r_{\LL,\varepsilon, \kk, \n}(\Q)$. Therefore, $r_{\LL, k,n} \leq r_{\LL,\varepsilon, \kk, \n}(\Q)$.

If there is no estimator that can achieve $r_{\LL,\varepsilon, \kk, \n}(\Q)$, then there exists a sequence of (potentially randomized) estimators $\{\hat{\PP}^{i}_{\Q}\}$ such that $\lim_{i \rightarrow \infty} \hat{\PP}^{i}_{\Q}$ achieves the minimax risk. In other words, if $r^{i}_{\LL,\varepsilon, \kk, \n}(\Q)$ represents the risk under $\hat{\PP}^{i}_{\Q}$, then $\lim_{i \rightarrow \infty} r^{i}_{\LL,\varepsilon, \kk, \n}(\Q) =  r_{\LL,\varepsilon, \kk, \n}(\Q)$. Using an argument similar to the one presented above, we get that  $r_{\LL, k,n} \leq r^{i}_{\LL,\varepsilon, \kk, \n}(\Q)$. Taking the limit as $i$ goes to infinity on both sides, we get that  $r_{\LL, k,n} \leq r_{\LL,\varepsilon, \kk, \n}(\Q)$. This finishes the proof.
\vfill

\section{Proof of Proposition \ref{prop:rr_perf}}
\label{proof_rr_perf}

Fix $\Q$ to $\qKrrMatrix$ and $\hat{\PP}$ to be the empirical estimator given in \eqref{eq:rr_empirical}. In this case, we have that
\begin{eqnarray*}
\underset{Y^n \sim \MM\left(\qKrrMatrix\right)}{\mathbb{E}} \left|\left|\hat{\PP} - \PP\right|\right|_2^2 &=& \underset{Y^n \sim \MM\left(\qKrrMatrix\right)}{\mathbb{E}}  \left|\left| \frac{e^{\varepsilon} + \kk - 1}{e^{\varepsilon} - 1}\hat{\MM} - \frac{1}{e^{\varepsilon} - 1}  - \PP\right|\right|_2^2 \\
&=& \underset{Y^n \sim \MM\left(\qKrrMatrix\right)}{\mathbb{E}}  \left|\left| \frac{e^{\varepsilon} + \kk - 1}{e^{\varepsilon} - 1}\left(\hat{\MM} - \MM\right)\right|\right|_2^2 \\
&=& \left(\frac{e^{\varepsilon} + \kk - 1}{e^{\varepsilon} - 1}\right)^2 \underset{Y^n \sim \MM\left(\qKrrMatrix\right)}{\mathbb{E}}   \left|\left|\hat{\MM} - \MM\right|\right|_2^2 \\
& = & \left(\frac{e^{\varepsilon} + \kk - 1}{e^{\varepsilon} - 1}\right)^2 \frac{1 - \sum_{i=1}^{k} m_i^2}{n} \\
& = & \frac{1}{n}\left(\frac{e^{\varepsilon} + \kk - 1}{e^{\varepsilon} - 1}\right)^2 \left(1 - \dfrac{\sum_{i=1}^{\kk} \left\{ (e^{\varepsilon} - 1)^2 \pp_i^2 + 2 (e^{\varepsilon}-1)\pp_i + 1 \right\}}{(e^{\varepsilon} + \kk - 1 )^2} \right) \\
&=& \frac{ \left(e^{\varepsilon} + \kk - 1\right)^2  - 2(e^{\varepsilon} -1)   -\kk -(e^{\varepsilon} - 1)^2\sum_{i=1}^{\kk}\pp_i^2}{n(e^{\varepsilon} - 1)^2} \\
&=& \frac{ \left( \left(e^{\varepsilon} - 1\right) + \kk \right)^2  - 2(e^{\varepsilon} -1)   -\kk}{n(e^{\varepsilon} - 1)^2}
    -\frac{\left(e^{\varepsilon} - 1\right)^2}{n\left(e^{\varepsilon} - 1\right)^2}
    +\frac{1}{n}
    -\frac{\sum_{i=1}^{\kk}\pp_i^2}{n} \\
&=& \frac{ \left(e^{\varepsilon} - 1\right)^2 + 2 \kk\left(e^{\varepsilon} - 1\right) + k^2 - 2\left(e^{\varepsilon} - 1\right) - k - \left(e^{\varepsilon} - 1\right)^2}{n(e^{\varepsilon} - 1)^2}
    +\frac{1 - \sum_{i=1}^{\kk}\pp_i^2}{n} \\
&=& \frac{ 2 \left(\kk - 1\right)\left(e^{\varepsilon} - 1\right) + k\left(k-1\right)}{n(e^{\varepsilon} - 1)^2}
    +\frac{1 - \sum_{i=1}^{\kk}\pp_i^2}{n} \\
&=& \frac{k-1}{n}\left(\frac{2\left(e^{\varepsilon} - 1\right) + k}{\left(e^{\varepsilon} - 1\right)^2}\right)
    +\frac{1 - \sum_{i=1}^{\kk}\pp_i^2}{n},
\end{eqnarray*}
and
\begin{eqnarray*}
\underset{Y^n \sim \MM\left(\qKrrMatrix\right)}{\mathbb{E}} \left|\left|\hat{\PP} - \PP\right|\right|_1 &=&
\left(\frac{e^{\varepsilon} + \kk - 1}{e^{\varepsilon} - 1}\right) \underset{Y^n \sim \MM\left(\qKrrMatrix\right)}{\mathbb{E}}   \left|\left|\hat{\MM} - \MM\right|\right|_1 \\
& = & \left(\frac{e^{\varepsilon} + \kk - 1}{e^{\varepsilon} - 1}\right) \sum_{i=1}^{\kk} \mathbb{E} \left| m_i - \hat{m_i} \right|\\
& \approx & \left(\frac{e^{\varepsilon} + \kk - 1}{e^{\varepsilon} - 1}\right) \sum_{i = 1}^{\kk} \sqrt{\frac{2m_i(1-m_i)}{\pi n}} \\
& = & \frac{1}{e^{\varepsilon} - 1}\sum_{i = 1}^{\kk} \sqrt{\frac{2((e^{\varepsilon} - 1)\pp_i +1)((e^{\varepsilon} - 1)(1-\pp_i) + \kk - 1)}{\pi n}}.
\end{eqnarray*}
\vfill

\section{Proof of Proposition \ref{prop:rappor_perf}}
\label{proof_rappor_perf}

Fix $\Q$ to $\Q_{\text{\KRAPPOR}}$ and $\hat{\PP}$ to be the empirical estimator given in \eqref{eq:rappor_empirical}, and let $C = \frac{e^{\varepsilon/2} - 1}{e^{\varepsilon/2} + 1}$, $B = \frac{1}{e^{\varepsilon/2} + 1}$, and $A = e^{\varepsilon/2} - 1$. Then $C = BA$,  $1 - B = e^{\varepsilon/2}B$, and from Section \ref{sec:rappor} $m_i = \pp_i C + B$. Using this notation, we have that
 \begin{eqnarray*}
 \underset{Y^n \sim \MM\left(\Q_{\text{\KRAPPOR}}\right)}{\mathbb{E}} \left|\left|\hat{\PP} - \PP\right|\right|_2^2 &=& \underset{Y^n \sim \MM\left(\Q_{\text{\KRAPPOR}}\right)}{\mathbb{E}}  \left|\left| \frac{e^{\varepsilon/2} + 1}{e^{\varepsilon/2} - 1}\hat{\MM} - \frac{1}{e^{\varepsilon/2} - 1}  - \PP\right|\right|_2^2 \\
 &=& \underset{Y^n \sim \MM\left(\Q_{\text{\KRAPPOR}}\right)}{\mathbb{E}}  \left|\left| \frac{e^{\varepsilon/2} + 1}{e^{\varepsilon/2} - 1}\left(\hat{\MM} - \MM\right)\right|\right|_2^2 \\
 &=& \left(\frac{e^{\varepsilon/2} + 1}{e^{\varepsilon/2} - 1}\right)^2 \underset{Y^n \sim \MM\left(\Q_{\text{\KRAPPOR}}\right)}{\mathbb{E}}   \left|\left|\hat{\MM} - \MM\right|\right|_2^2 \\
 & = & \frac{1}{nC^2}\left(C + \kk B - \sum_{i = 1}^{\kk} (\pp_i C + B)^2 \right) \\
  & = & \frac{1}{n}\left(1 - \sum_{i=1}^{\kk} \pp_i^2 \right) + \frac{1}{nC^2}\left(C - C^2 + \kk B - \kk B^2 - 2CB\right) \\
& = &   \frac{1}{n}\left(1 - \sum_{i=1}^{\kk} \pp_i^2 \right) + \frac{1}{nBA^2} \left( A - BA^2 + \kk(1 -B) - 2BA \right) \\
& = & \frac{1}{n}\left(1 - \sum_{i=1}^{\kk} \pp_i^2 \right) + \frac{1}{n} \frac{\kk e^{\varepsilon/2}}{(e^{\varepsilon/2} - 1)^2},
\end{eqnarray*}
and
\begin{eqnarray*}
\underset{Y^n \sim \MM\left(\qKrrMatrix\right)}{\mathbb{E}} \left|\left|\hat{\PP} - \PP\right|\right|_1 &=& \left(\frac{e^{\varepsilon/2} + 1}{e^{\varepsilon/2} - 1}\right) \underset{Y^n \sim \MM\left(\Q_{\text{\KRAPPOR}}\right)}{\mathbb{E}}   \left|\left|\hat{\MM} - \MM\right|\right|_1 \\
&=& \left(\frac{e^{\varepsilon/2} + 1}{e^{\varepsilon/2} - 1}\right)  \sum_{i=1}^{\kk} \mathbb{E} \left| m_i - \hat{m_i} \right| \\
 & \approx & \left(\frac{e^{\varepsilon/2} + 1}{e^{\varepsilon/2} - 1}\right) \sum_{i = 1}^{\kk} \sqrt{\frac{2m_i(1-m_i)}{\pi n}} \\
  & = &  \sum_{i = 1}^{\kk} \sqrt{\dfrac{2((e^{\varepsilon/2} - 1)\pp_i +1)((e^{\varepsilon/2} - 1)(1-\pp_i) + 1)}{\pi n (e^{\varepsilon/2} - 1)^2}}.
\end{eqnarray*}

\section{Proof of Proposition \ref{prop:rr_high_eps_l2_win}}
\label{proof_rr_high_eps_l2_win}

We want to show that for all $\PP \in \mathbb{S}^\kk$ and all $\varepsilon \geq \ln \kk$,
\begin{equation}
\mathbb{E}\left|\left|\hat{\PP}_{\textrm{KRR}} - \PP\right|\right|_2^2  \leq \mathbb{E}\left|\left|\hat{\PP}_{\textrm{\RAPPOR}} - \PP\right|\right|_2^2,
\end{equation}
where $\hat{\PP}_{\text{KRR}}$ is the empirical estimate of $\PP$ under \KRR, $\hat{\PP}_{\textrm{\RAPPOR}}$ is the empirical estimate of $\PP$ under \KRAPPOR,  and $\hat{\PP}$ is the empirical estimator under \KRAPPOR.

From propositions \ref{prop:rr_perf} and \ref{prop:rappor_perf}, we have that
\begin{equation*}
\mathbb{E}\left|\left|\hat{\PP}_{\textrm{KRR}} - \PP\right|\right|_2^2 = \frac{1 - \sum_{i = 1}^ {\kk} \pp_i^2}{n} + \frac{\kk - 1}{n}\left( \frac{2}{e^{\varepsilon} - 1} + \frac{\kk}{(e^{\varepsilon} - 1)^2} \right) ,
\end{equation*}
and
\begin{equation*}
\mathbb{E}\left|\left|\hat{\PP}_{\textrm{\RAPPOR}} - \PP\right|\right|_2^2 = \frac{1-\sum_{i=1}^{\kk}\pp_i^2}{n} + \frac{\kk e^{\varepsilon/2}}{n(e^{\varepsilon/2} - 1)^2}.
\end{equation*}
Therefore, we just have to prove that
\begin{equation*}
(\kk - 1)\left( \frac{2}{e^{\varepsilon} - 1} + \frac{\kk}{(e^{\varepsilon} - 1)^2} \right)  \leq \frac{\kk e^{\varepsilon/2}}{(e^{\varepsilon/2} - 1)^2},
\end{equation*}
for $ \varepsilon \geq \ln \kk$. Alternatively, we can show that
\begin{equation*}
f(\varepsilon, \kk) = \frac{\kk}{\kk - 1} \left(\frac{e^\varepsilon -1}{e^{\varepsilon/2}-1}\right)^2\frac{e^{\varepsilon/2}}{2e^{\varepsilon}+k -2} \geq 1,
\end{equation*}
for $ \varepsilon \geq \ln \kk$. Observe that $f(\varepsilon, \kk)$ is an increasing function of $\varepsilon$ and therefore, it suffices to show that
\begin{equation}
f(\ln \kk, \kk) = \frac{\kk}{\kk-1} \left(\frac{\kk-1}{\sqrt{\kk}-1}\right)^2 \frac{\sqrt{\kk}}{3\kk-2} = \frac{\kk}{3\kk - 2} \frac{\sqrt{\kk}(\kk - 1)}{(\sqrt{\kk} - 1)^2} \geq 1.
\end{equation}
As a discrete function of $\kk \in \{2, 3, .... \}$, $f(\ln \kk, \kk)$ admits a unique minimum at $\kk = 7$. Therefore, we just need to verify that $f(\ln 7, 7)  > 1$. Indeed, $f(\ln 7, 7) = 3.1559 > 1$.

\section{Discrete Distribution Estimation}
\label{sec:disc_dist_est}
Consider the $(\kk-1)$-dimensional probability simplex
\begin{equation*}
\mathbb{S}^\kk = \{ \pVector=(\pp_1, ..., \pp_\kk)| \pp_i \geq 0, \sum_{i=1}^{\kk}\pp_i = 1\}.
\end{equation*}
The discrete distribution estimation problem is defined as follows. Given a vector $\pVector \in \mathbb{S}^\kk$, samples $X_1, ..., X_n$ are drawn i.i.d according to $\pVector$.
Our goal is to estimate the probability vector $\pVector$ from the observation vector $X^n = \left(X_1, ..., X_n\right)$.

An estimator $\hat{\pVector}$ is a mapping from $X^n$ to a point in $\mathbb{S}^\kk$. The performance of $\hat{\pVector}$ may be measured via a loss function $\LL$ that computes a distance-like metric between $\hat{\pVector}$ and $\pVector$.  Common loss functions include, among others, the absolute error loss $\ell_1(\pVector,\hat{\pVector}) = \sum_{i = 1}^{k} |\pp_i - \hat{\pp}_i|$ and the quadratic loss $\ell_2^2(\pVector,\hat{\pVector}) = \sum_{i = 1}^{k} (\pp_i - \hat{\pp}_i)^2$.
The choice of the loss function depends on the application; for example, $\ell_1$ loss is commonly used in classification and other machine learning applications.
Given a loss function $\LL$, the expected loss under $\hat{\pVector}$ after observing $n$ i.i.d samples is given by
\begin{equation}
r_{\LL, k,n} (\pVector,\hat{\pVector}) = \underset{X^n \sim \text{Multimial}(n, \pVector)}{\mathbb{E}} \LL(\pVector,  \hat{\pVector}) .
\end{equation}


\subsection{Maximum likelihood and empirical estimation}
\label{sec:ml_and_empirical}
In the absence of a prior on $\pVector$, a natural and commonly used estimator of $\pVector$ is the maximum likelihood (ML) estimator. The maximum likelihood estimate $\hat{\pVector}_{\text{ML}}$ of $\pVector$ is defined as
\begin{equation*}
\hat{\pVector}_{\text{ML}} = \underset{\pVector \in \mathbb{S}^\kk}{\text{argmax}}~\prob\left(X_1, ..., X_n|\pVector\right)
\end{equation*}
In this setting, it is easy to show that the maximum likelihood estimate is equivalent to the empirical estimator of $\pVector$, given by
$\hat{p}_i = T_i/n$
where $T_i$ is the frequency of element $i$. Observe that the empirical estimator is an unbiased estimator for $\pVector$ because $\mathbb{E}[\hat{\pp}_i] = \pp_i$ for any $\kk, n$, and $i$. Under maximum likelihood estimation, the $\ell_2^2$ loss is the most tractable and simplest to analyze loss function. Because $T_i \sim \text{Binomial}(\pp_i,n)$, we have $\mathbb{E}[T_i] = n\pp_i$, $\text{Var}(T_i) = n\pp_i (1 - \pp_i)$, and the expected $\ell_2^2$ loss of the empirical estimator is given by
\begin{eqnarray*}
r_{\ell_2^2,k,n}(\PP,\hat{\PP}_{\text{ML}}) &=& \mathbb{E} ||\hat{\PP}_{\text{ML}} - \PP||_2^2 = \sum_{i=1}^{\kk} \mathbb{E} \left(\frac{T_i}{n}-\pp_i\right)^2 \nonumber \\
&=& \sum_{i=1}^{\kk} \frac{\text{Var}(T_i)}{n^2} =  \frac{1 - \sum_{i=1}^{k}\pp_i^2}{n}.
\end{eqnarray*}
Let $\PP_{\text{U}} = \left(\frac{1}{\kk}, \cdots,\frac{1}{k}\right)$ and observe that
\begin{equation}
\label{eq:l2_empirical_risk}
r_{\ell_2^2, k,n}(\PP,\hat{\PP}_{\text{ML}}) \leq r_{\ell_2^2, k,n}(\PP_{\text{U}},\hat{\PP}_{\text{ML}}) = \frac{1 - \frac{1}{k}}{n}.
\end{equation}
In other words, the uniform distribution is the worst distribution for the empirical estimator under the $\ell_2^2$ loss.
From \cite{kamath2015learning}, the asymptotic performance of the empirical estimator under the $\ell_1$ loss functions is given by
\begin{equation*}
r_{\ell_1, k,n} (\PP,\hat{\PP}_{\text{ML}}) \approx  \sum_{i=1}^{\kk} \sqrt{\frac{2\pp_i(1-\pp_i)}{\pi n}},
\end{equation*}
where $a_n \approx b_n$ means $\lim_{n \rightarrow \infty} a_n/b_n = 1$. As in the $\ell_2^2$ case, notice that
  \begin{equation}
  \label{eq:l1_empirical_risk}
r_{\ell_1, k,n} (\PP,\hat{\PP}_{\text{ML}}) \leq r_{k,n}^{\ell_1} (\PP_{\text{U}},\hat{\PP}_{\text{ML}}) \approx \sqrt{\frac{2(\kk-1)}{\pi n}},
\end{equation}
for any $\PP \in \mathbb{S}^\kk$. In other words, the uniform distribution is the worst distribution for the empirical estimator under the $\ell_1$ loss as well. Observe that the $\ell_1$ loss scales as $\sqrt{k/n}$ whereas the $\ell_2^2$ loss scales as $1/n$.

\subsection{Minimax estimation}
Another popular estimator that is widely studied in the absence of a prior is the minimax estimator $\hat{\PP}_{\text{MM}}$. The minimax estimator minimizes the expected loss under the worst distribution $\PP$:
\begin{equation}
\hat{\PP}_{\text{MM}} = \argmin_{\hat{\PP}} \max_{\PP \in \mathbb{S}^\kk} \underset{X^n \sim \PP}{\mathbb{E}} \LL(\PP,  \hat{\PP}).
\end{equation}
The minimax risk is therefore defined as
\begin{equation*}
r_{\LL, k,n} = \min_{\hat{\PP}} \max_{\PP \in \mathbb{S}^\kk} \underset{X^n \sim \PP}{\mathbb{E}} \LL(\PP,  \hat{\PP}).
\end{equation*}
For the $\ell_2^2$ loss, it is shown in \cite{lehmann1998theory} that
\begin{equation}
\hat{\pp}_i = \frac{\frac{\sqrt{n}}{k} + \sum_{j=1}^{n}\unity_{\{X_j = i\}}}{\sqrt{n} + n} = \frac{\frac{\sqrt{n}}{k} + T_i}{\sqrt{n} + n},
\end{equation}
is the minimax estimator, and that the minimax risk is
\begin{equation}
\label{eq:l2_minimax_risk}
r_{\ell_2^2, k,n} = \frac{1 - \frac{1}{k}}{(\sqrt{n}+1)^2}.
\end{equation}
Observe that unlike the empirical estimator, the minimax estimator is not even asymptotically unbiased. Moreover, it improves on the empirical estimator only slightly (compare Equations \eqref{eq:l2_empirical_risk} to \eqref{eq:l2_minimax_risk}), increasing the the denominator from $n$ to $n + 2\sqrt{n} + 1$ under the worst case distribution (the uniform distribution). This explains why the minimax estimator is almost never used in practice.

The minimax estimator under $\ell_1$ loss is not known. However, the minimax risk is known for the case when $\kScalar$ is fixed and $n$ is increased. In  this case, it is shown in \cite{kamath2015learning} that
\begin{equation}
  \label{eq:l1_minimax_risk}
r_{\ell_1,k,n}  = \sqrt{\frac{2(\kk-1)}{\pi n}} + O\left(\frac{1}{n^{3/4}}\right).
\end{equation}
Comparing Equations \eqref{eq:l1_empirical_risk} to \eqref{eq:l1_minimax_risk}, we see that the worst case loss under the empirical estimator is again roughly as good as the minimax risk.

\section{Maximum Likelihood Estimation for $\kScalar$-ary Mechanisms}

\subsection{\KRR}
\label{sec:rr_ml_decoder}
\begin{proposition}
\label{prop:max_est_krr}
The maximum likelihood estimator of $\PP$ under \KRR is given by
\begin{equation}
\hat{\pp}_i = \left[\frac{T_i}{\lambda} - \frac{1}{e^{\varepsilon} - 1}\right]^{+},
\end{equation}
where $[x]^+ = \max(0,x)$, $T_i$ is the frequency of element $i$ calculated from $Y^n$, and $\lambda$ is chosen so that
\begin{equation}
\sum_{i=1}^{\kk} \left[\frac{T_i}{\lambda} - \frac{1}{e^{\varepsilon} - 1}\right]^{+}  = 1.
\end{equation}
Moreover, finding $\lambda$ can be done in $O(\kk\log\kk)$ steps.
\end{proposition}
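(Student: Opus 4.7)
The plan is to set up the constrained maximum likelihood problem and solve it via KKT, then argue that the resulting water-filling solution can be evaluated after sorting the counts.

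First, I would write down the log-likelihood. Under $\qKrrMatrix$, the private observations $Y_1,\ldots,Y_n$ are i.i.d.\ according to the affine transform $m_i = \alpha \pp_i + \beta$ with $\alpha = (e^{\varepsilon}-1)/(e^{\varepsilon}+\kk-1)$ and $\beta = 1/(e^{\varepsilon}+\kk-1)$. Letting $T_i$ denote the observed frequency of symbol $i$ in $Y^n$, the log-likelihood (up to constants) is
\begin{equation*}
\ell(\PP) \;=\; \sum_{i=1}^\kk T_i \log(\alpha \pp_i + \beta),
\end{equation*}
which is strictly concave in $\PP$. Thus the MLE is the unique solution to maximizing $\ell(\PP)$ subject to $\pp_i \ge 0$ and $\sum_i \pp_i = 1$.

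Next I would apply KKT. Introducing a multiplier $\mu$ for the equality constraint and multipliers $\nu_i \ge 0$ for the nonnegativity constraints with complementary slackness $\nu_i \pp_i = 0$, stationarity gives
\begin{equation*}
\frac{\alpha T_i}{\alpha \pp_i + \beta} \;=\; \mu - \nu_i, \qquad i=1,\ldots,\kk.
\end{equation*}
On the active set $\{i: \pp_i > 0\}$ we have $\nu_i = 0$, yielding $\pp_i = T_i/\mu - \beta/\alpha = T_i/\mu - 1/(e^{\varepsilon}-1)$. On the inactive set $\{i: \pp_i = 0\}$, feasibility of $\nu_i \ge 0$ is equivalent to $T_i/\mu - 1/(e^{\varepsilon}-1) \le 0$. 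Combining the two cases and renaming $\mu$ to $\lambda$ gives the stated closed form $\hat{\pp}_i = [T_i/\lambda - 1/(e^{\varepsilon}-1)]^+$, with $\lambda$ determined by the simplex constraint $\sum_i \hat{\pp}_i = 1$. Uniqueness of $\lambda$ follows because the map $\lambda \mapsto \sum_i [T_i/\lambda - 1/(e^{\varepsilon}-1)]^+$ is continuous and strictly decreasing in $\lambda$ over the range in which any summand is active.

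Finally, for the $O(\kk \log \kk)$ claim I would invoke the standard water-filling / simplex-projection argument: sort the counts so that $T_{(1)} \ge T_{(2)} \ge \cdots \ge T_{(\kk)}$, which dominates the cost at $O(\kk \log \kk)$. The active set at the optimum is a prefix $\{(1),\ldots,(r)\}$ of this ordering, because $\hat{\pp}_i$ is a monotone function of $T_i$. For each candidate $r$, the constraint $\sum_{j=1}^r \hat{\pp}_{(j)} = 1$ gives a closed-form value $\lambda_r = \sum_{j=1}^r T_{(j)} / (1 + r/(e^{\varepsilon}-1))$; we pick the largest $r$ for which $T_{(r)}/\lambda_r - 1/(e^{\varepsilon}-1) > 0$ and $T_{(r+1)}/\lambda_r - 1/(e^{\varepsilon}-1) \le 0$. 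This sweep is $O(\kk)$ using prefix sums, so the total cost is the sorting cost $O(\kk \log \kk)$. The only subtle step is verifying that the active set must be a prefix of the sorted order and that $\lambda_r$ is monotone in $r$, which makes the sweep well-defined; this is a direct consequence of the KKT conditions derived above.
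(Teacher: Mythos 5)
Your proposal is correct and follows essentially the same route as the paper: both reduce the log-likelihood to the separable concave program $\max_{\pVector\in\mathbb{S}^\kk}\sum_i T_i\log\left((e^{\varepsilon}-1)\pp_i+1\right)$ and solve it by water-filling over the simplex. The only differences are presentational --- you derive the KKT conditions explicitly where the paper cites the standard water-filling result, and you describe the $O(\kk\log\kk)$ computation as a sorted prefix-sum sweep rather than the paper's iterative zeroing-out procedure; both are equivalent and correct.
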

The proof of the above proposition is provided in Supplementary Section \ref{proof_max_est_krr}.

\subsection{Proof of Proposition \ref{prop:max_est_krr}}
\label{proof_max_est_krr}
The maximum likelihood estimator under \KRR is the solution to
\begin{equation*}
\hat{\pVector}_{\text{ML}} = \underset{\pVector \in \mathbb{S}^\kk}{\text{argmax}}~\prob\left(Y_1, ..., Y_n|\pVector\right),
\end{equation*}
where the $Y_i$'s are the outputs of \KRR. Since the $\log(.)$ function is a monotonic function, the above maximum likelihood estimation problem is equivalent to
\begin{equation*}
\hat{\pVector}_{\text{ML}} = \underset{\pVector \in \mathbb{S}^\kk}{\text{argmax}}~ \log \prob\left(Y_1, ..., Y_n|\pVector\right).
\end{equation*}
Given that
\begin{eqnarray*}
\prob\left(Y_1, ..., Y_n|\pVector\right) &=& \prod_{i=1}^{n} \prob \left(Y_i | \pVector \right) \\
&=&  \prod_{i=1}^{n} \left( \sum_{j = 1}^{\kk} \qKrrMatrix(Y_i|X_i = j)\pp_j \right),
\end{eqnarray*}
we have that
\begin{equation*}
\log \prob\left(Y_1, ..., Y_n|\pVector\right) = \sum_{i=1}^{n} \log \left( \sum_{j = 1}^{\kk} \qKrrMatrix(Y_i|X_i = j)\pp_j \right).
\end{equation*}
Observe that
\begin{eqnarray}
\sum_{j = 1}^{\kk} \qKrrMatrix(Y_i|X_i = j)\pp_j & = & \qKrrMatrix(Y_i|X_i = Y_i)\pp_{Y_i} + \sum_{j \neq Y_i} \qKrrMatrix(Y_i|X_i = j)\pp_j \\
& =& \frac{e^\varepsilon}{e^\varepsilon+\kk - 1}\pp_{Y_i} + \frac{1}{e^\varepsilon+\kk - 1}(1 - \pp_{Y_i}) \\
& =& \frac{1}{e^\varepsilon+\kk - 1}\left( (e^\varepsilon - 1)\pp_{Y_i}  +1 \right),
\end{eqnarray}
and therefore,
\begin{equation*}
\sum_{i=1}^{n} \log \left( \sum_{j = 1}^{\kk} \qKrrMatrix(Y_i|X_i = j)\pp_j \right) = \sum_{i = 1}^{k} T_i \log \left( \frac{1}{e^\varepsilon+\kk - 1}\left( (e^\varepsilon - 1)\pp_{i}  +1 \right) \right),
\end{equation*}
where $T_i$ is the number of $Y$'s that are equal to $i$ (i.e., the frequency of element $i$ in the observed sequence $Y^n$). Thus, the maximum likelihood estimation problem under \KRR is equivalent to
\begin{equation*}
\hat{\pVector}_{\text{ML}} = \underset{\pVector \in \mathbb{S}^\kk}{\text{argmax}}~\sum_{i = 1}^{k} T_i \log \left( (e^\varepsilon - 1)\pp_{i}  +1 \right).
\end{equation*}
The above constrained optimization problem is a convex optimization problem that is well studied in the literature under the rubric of water-filling algorithms. From \cite{boyd2004convex}, the solution to this problem is given by
\begin{equation*}
\hat{\pp}_i = \left[\frac{T_i}{\lambda} - \frac{1}{e^{\varepsilon} - 1}\right]^{+},
\end{equation*}
where $[x]^+ = \max(0,x)$ and $\lambda$ is chosen so that
\begin{equation*}
\sum_{i=1}^{\kk} \left[\frac{T_i}{\lambda} - \frac{1}{e^{\varepsilon} - 1}\right]^{+}  = 1.
\end{equation*}
Given the $T_i$'s, $\pVector$ is computed according to the empirical estimator. If all the $\hat{\pp}_i$'s are non-negative, then the maximum likelihood estimate is the same as the empirical estimate. If not, $\hat{\pVector}$ is sorted, its negative entries are zeroed out, and lambda is computed according to the above equation. Given lambda, a new $\hat{\pVector}$  can be computed and the above process can be repeated until all the entries of $\hat{\pVector}$ are non-negative. Notice that sorting happens once and the process is repeated at most $\kk - 1$ times. Therefore, the computational complexity of this algorithm is upper bounded by $\kk \log \kk + \kk$ which is $O(\kk\log\kk)$.
\subsection{\KRAPPOR}
\begin{proposition}
\label{prop:max_est_rappor}
The maximum likelihood estimator of $\PP$ under \KRAPPOR is
\begin{equation*}
\begin{aligned}
 \underset{\PP \in \mathbb{S}^\kk}{\text{argmax}} \sum_{j = 1}^{\kk} (n - T_j) \log\left( (1- \delta) - (1 -2\delta)\pp_j\right)  \\
~~~~~~~~~~~~~~~~~~~~~~~~~~~~~  + T_j \log\left((1-2\delta)\pp_j + \delta\right)
\end{aligned}
\end{equation*}
where $T_j = \sum_{i = 1}^{n} Y_i^{(j)}$ and $\delta = 1/(e^{\varepsilon/2} + 1)$.
\end{proposition}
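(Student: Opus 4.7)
The plan is to write down the likelihood of $Y^n$ by viewing each bit position independently. From Section~\ref{sec:rappor}, the $j$-th coordinate of each user's report satisfies $\prob(Y_i^{(j)} = 1) = (1-2\delta)p_j + \delta$, where $\delta = 1/(e^{\varepsilon/2}+1)$, and hence $\prob(Y_i^{(j)} = 0) = (1-\delta) - (1-2\delta)p_j$. First I would treat each coordinate $j$ across users as $n$ i.i.d. Bernoulli trials with the above success probability, so that the contribution of coordinate $j$ to the likelihood, after collecting exponents via $T_j = \sum_{i=1}^n Y_i^{(j)}$, becomes
\begin{equation*}
L_j(\PP) = \bigl[(1-2\delta)p_j + \delta\bigr]^{T_j}\,\bigl[(1-\delta) - (1-2\delta)p_j\bigr]^{n-T_j}.
\end{equation*}

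Next, I would form the total (factored) likelihood $L(\PP) = \prod_{j=1}^{k} L_j(\PP)$, take logarithms, and use monotonicity of $\log$ to convert the MLE into
\begin{equation*}
\hat{\PP}_{\text{ML}} = \underset{\PP \in \mathbb{S}^\kk}{\text{argmax}}~\sum_{j=1}^{\kk} \Bigl\{ T_j \log\bigl((1-2\delta)p_j + \delta\bigr) + (n - T_j)\log\bigl((1-\delta) - (1-2\delta)p_j\bigr)\Bigr\},
\end{equation*}
which is exactly the statement of the proposition. The objective is a sum of concave functions of the individual $p_j$, so the optimization is a concave program over the simplex and admits a unique maximizer.

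The main conceptual subtlety, and the part I would flag explicitly, is that under \KRAPPOR the coordinates of a single $Y_i$ are only conditionally independent given $X_i$ and not unconditionally independent, since $\tilde{X}_i$ is always a standard basis vector. Thus the likelihood written above is the \emph{marginal} likelihood based on the per-coordinate statistics $(T_1,\dots,T_\kk)$, which are sufficient in the sense used throughout the paper (e.g. in the empirical estimator of \eqref{eq:rappor_empirical}), rather than the joint likelihood of the full bit vectors $Y_1,\dots,Y_n$. Once this modeling choice is fixed, the derivation above is a direct analogue of the \KRR argument in Section~\ref{proof_max_est_krr}, with the added observation that unlike the \KRR case no water-filling style closed form is available, because each $p_j$ now appears inside two distinct $\log$ terms with opposite signs of the coefficient of $p_j$; the simplex projection must therefore be handled numerically.
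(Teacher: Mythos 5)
Your derivation is correct and follows essentially the same route as the paper's proof: factor the likelihood over the $\kk$ coordinates, collect the exponents into $T_j = \sum_i Y_i^{(j)}$, and take logarithms to obtain exactly the stated objective. Your explicit caveat is in fact more careful than the paper itself, which writes $\prob(Y_1,\dots,Y_n\mid\pVector)=\prod_{i}\prod_{j}\prob(Y_i^{(j)}\mid\pVector)$ as though the coordinates of each $Y_i$ were marginally independent given $\pVector$ (they are only conditionally independent given $X_i$), so both you and the paper are really maximizing the same per-coordinate composite likelihood rather than the true joint likelihood.
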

The proof of the above proposition is provided in Supplementary Section \ref{proof_max_est_rappor}. Observe that unlike \KRR, a $\kScalar$-dimensional convex program has to be solved in this case to determine the maximum likelihood estimate of $\PP$.

\subsection{Proof of Proposition \ref{prop:max_est_rappor}}
\label{proof_max_est_rappor}
The maximum likelihood estimator under \KRAPPOR is the solution to
\begin{equation*}
\hat{\pVector}_{\text{ML}} = \underset{\pVector \in \mathbb{S}^\kk}{\text{argmax}}~\prob\left(Y_1, ..., Y_n|\pVector\right),
\end{equation*}
where the $Y_i$'s are the outputs of \KRAPPOR. Since the $\log(.)$ function is a monotonic function, the above maximum likelihood estimation problem is equivalent to
\begin{equation*}
\hat{\pVector}_{\text{ML}} = \underset{\pVector \in \mathbb{S}^\kk}{\text{argmax}}~ \log \prob\left(Y_1, ..., Y_n|\pVector\right).
\end{equation*}
Recall that under \KRAPPOR,  $Y_i = [ Y^{(1)}_i, \cdots, Y^{(\kk)}_i]$ is a $\kk$-dimensional binary vector, which implies that
\begin{equation}
\prob(Y_i^{(j)} = 1) =  \left(\frac{e^{\varepsilon/2} - 1}{e^{\varepsilon/2} + 1}\right) \pp_j  + \frac{1}{e^{\varepsilon/2} + 1},
\end{equation}
for all $i \in \{1, \cdots, n\}$ and $j \in \{1, \cdots, \kk\}$. Therefore,
\begin{eqnarray*}
\log \prob\left(Y_1, ..., Y_n|\pVector\right) & =& \log \prod_{i=1}^{n}\prod_{j=1}^{\kk} \left(Y_i^{(j)}(\pp_j(1 - \delta) + (1 -\pp_j)\delta) + (1 - Y_i^{(j)})(\pp_j\delta + (1-\pp_j)(1 - \delta)) \right) \\
& = & \sum_{i=1}^{n}\sum_{j=1}^{\kk} \log  \left(Y_i^{(j)}(\pp_j(1 - \delta) + (1 -\pp_j)\delta) + (1 - Y_i^{(j)})(\pp_j\delta + (1-\pp_j)(1 - \delta)) \right) \\
& = & \sum_{i=1}^{n}\sum_{j=1}^{\kk} \log  \left( (1 - 2\delta)(2Y_i^{(j)} -1) \pp_j - Y_i^{(j)}(1-2\delta) + (1-\delta) \right),
\end{eqnarray*}
where $\delta = 1/(1+ e^{\varepsilon/2})$. Therefore, under \KRAPPOR, the maximum likelihood estimation problem is given by
\begin{equation*}
 \underset{\PP \in \mathbb{S}^\kk}{\text{argmax}} \sum_{j = 1}^{\kk} (n - T_j) \log\left( (1- \delta) - (1 -2\delta)\pp_j\right)+ T_j \log\left((1-2\delta)\pp_j + \delta\right)
\end{equation*}
where $T_j = \sum_{i = 1}^{n} Y_i^{(j)}$.

\section{Conditions for Accurate Decoding under \KRR}

For accurate decoding, we must satisfy three criteria:
(i) $\kScalar$ and $\CScalar$ must be large enough that the input strings to be distinguishable,
(ii) $\kScalar$ and $\CScalar$ must be large enough that the linear system in \eqref{eq:rr_strings_regression} is not underconstrained, and
(iii) $\nScalar$ must be large enough that the variance on estimated probability vector $\hat{\pVector}$ is small.

Let us first consider string distinguishability.  Each string $\sScalar \in \sSet$
is associated with a $\CScalar$-tuple of hashes it can produces in the various cohorts:
$\hash_\cSet^{(\kScalar)}(\sScalar) = \langle \hash_1^{(\kScalar)}(\sScalar), \hash_2^{(\kScalar)}(\sScalar), \cdots, \hash_\CScalar^{(\kScalar)}(\sScalar) \rangle \in \xSet ^ \CScalar$.
Two strings $\sScalar_\iScalar \in \sSet$ and $\sScalar_\jScalar \in \sSet$ are distinguishable from one
another under the encoding scheme if $\hash_\cSet^{(\kScalar)}(\sScalar_\iScalar) \neq \hash_\cSet^{(\kScalar)}(\sScalar_\jScalar)$,
and a string $\sScalar$ is distinguishable within the set $\sSet$ if
$\hash_\cSet^{(\kScalar)}(\sScalar) \neq \hash_\cSet^{(\kScalar)}(\sScalar_\jScalar) \forall \sScalar_\jScalar \in \sSet \backslash {\sScalar}$.

Because $\hash_\cSet^{(\kScalar)}(\sScalar)$ is distributed uniformly over $\xSet^\CScalar$,
$\prob(\hash_\cSet^{(\kScalar)}(\sScalar) = \xVector_\cSet) \approx \frac{1}{\kScalar^\CScalar}$
for all $\xVector_\cSet \in \xSet^\CScalar$.  It follows that the probability of two strings being distinguishable is also $\frac{1}{\kScalar^\CScalar}$. Furthermore, the probability that exactly one string from $\sSet$ produces the hash tuple $\xVector_\cSet$ is:

\begin{equation*}
\text{Binomial}(1; \frac{1}{\kScalar^\CScalar}, \SScalar)
=  \frac{\SScalar (\kScalar^\CScalar-1)^{\SScalar-1}}{(\kScalar^\CScalar)^{\SScalar}}
\end{equation*}

Thus, the expected number of $\xVector_\cSet \in \xSet^\CScalar$ associated with exactly one
string in $\sSet$, which is also the expected number of distinguishable strings
in a set $\sSet$ is:
\begin{equation}
\sum_{\xVector_\cSet \in \cY^\CScalar} \left(\frac{\SScalar(\kScalar^\CScalar-1)^{\SScalar-1}}{(\kScalar^\CScalar)^{\SScalar}}\right) = \SScalar \left(\frac{\kScalar^\CScalar-1}{\kScalar^\CScalar}\right)^{\SScalar-1}
\end{equation}
and the probability that a string $\sScalar$ is distinguishable within the set $\sSet$ is $\left(\frac{\kScalar^\CScalar-1}{\kScalar^\CScalar}\right)^{\SScalar-1}$.

Consider a probability distribution $\pVector \in \simplex^{\SScalar}$.
The expected recoverable probability mass is the the mass associated with the distinguishable
strings within the set $\sSet$ is
$
\sum_{\sScalar \in \sSet} \pScalar_\sScalar \left(\frac{\kScalar^\CScalar-1}{\kScalar^\CScalar}\right)^{\SScalar-1}
=  \left(\frac{\kScalar^\CScalar-1}{\kScalar^\CScalar}\right)^{\SScalar-1}
$
Therefore, if we hope to recover at least $\PTScalar$ of the probability mass,
we require $\left(\frac{\kScalar^\CScalar-1}{\kScalar^\CScalar}\right)^{\SScalar-1} \ge \PTScalar$, or equivalently, $\kScalar^\CScalar \ge \frac{1}{1 - \PTScalar^\frac{1}{\SScalar-1}}$.

Now consider ensuring that the linear system in
\eqref{eq:rr_strings_regression} is not underconstrained.  The system has
$\SScalar$ variables and $\kScalar \CScalar$ independent equations.  Thus,
the system is not underconstrained so long as $\kScalar \CScalar \ge \SScalar$.



\newpage
\section{Supplementary Figures}

\begin{figure}[h!]
\centering
\includegraphics[width=0.7\linewidth]{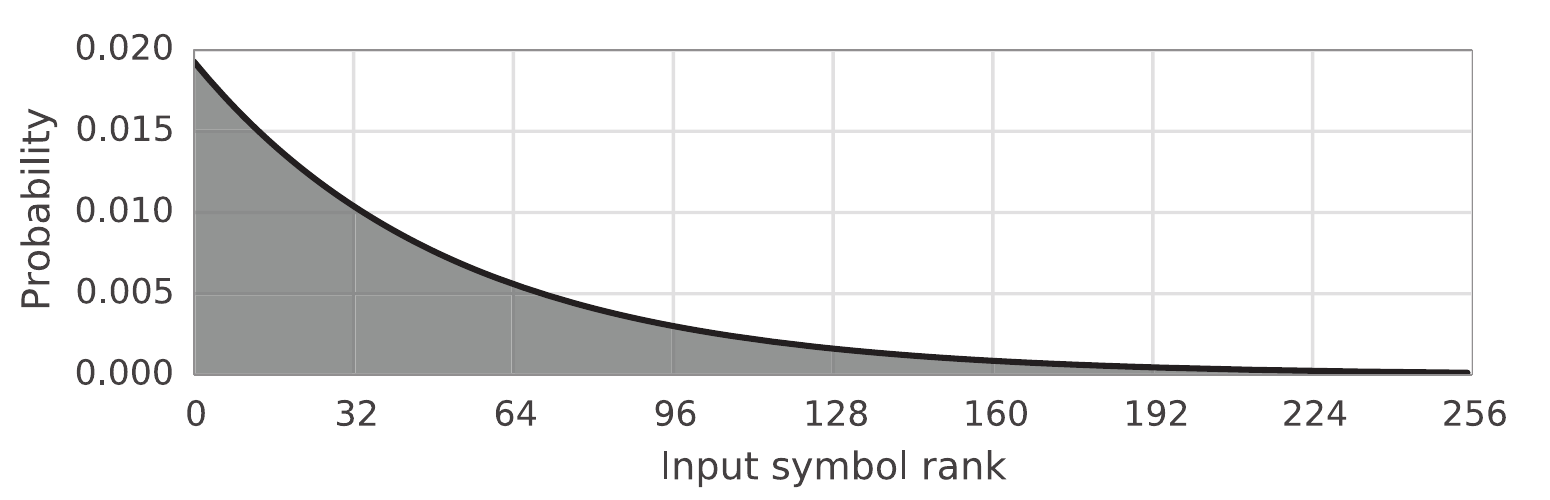}
\caption{The true input distribution $\pVector$ for open-set and closed-set
experiments in sections~\ref{sec:sim_results} and \ref{sec:open_alphabets} is
the geometric distribution with mean at $|\text{input alphabet}|/5$,
truncated and renormalized.  In the $\kScalar$-ary experiments of
Section~\ref{sec:sim_results}, the input alphabet is size $\kScalar$;
in the open alphabet experiments of Section~\ref{sec:open_alphabets},
the input alphabet is size $\SScalar = 256$.}
\label{fig:geometric_ground_truth}
\end{figure}

\begin{figure*}[h]
\centering
\begin{tabular}{m{.2in}cc}
&
\begin{subfigure}[b]{.45\linewidth}
\includegraphics[width=\linewidth]{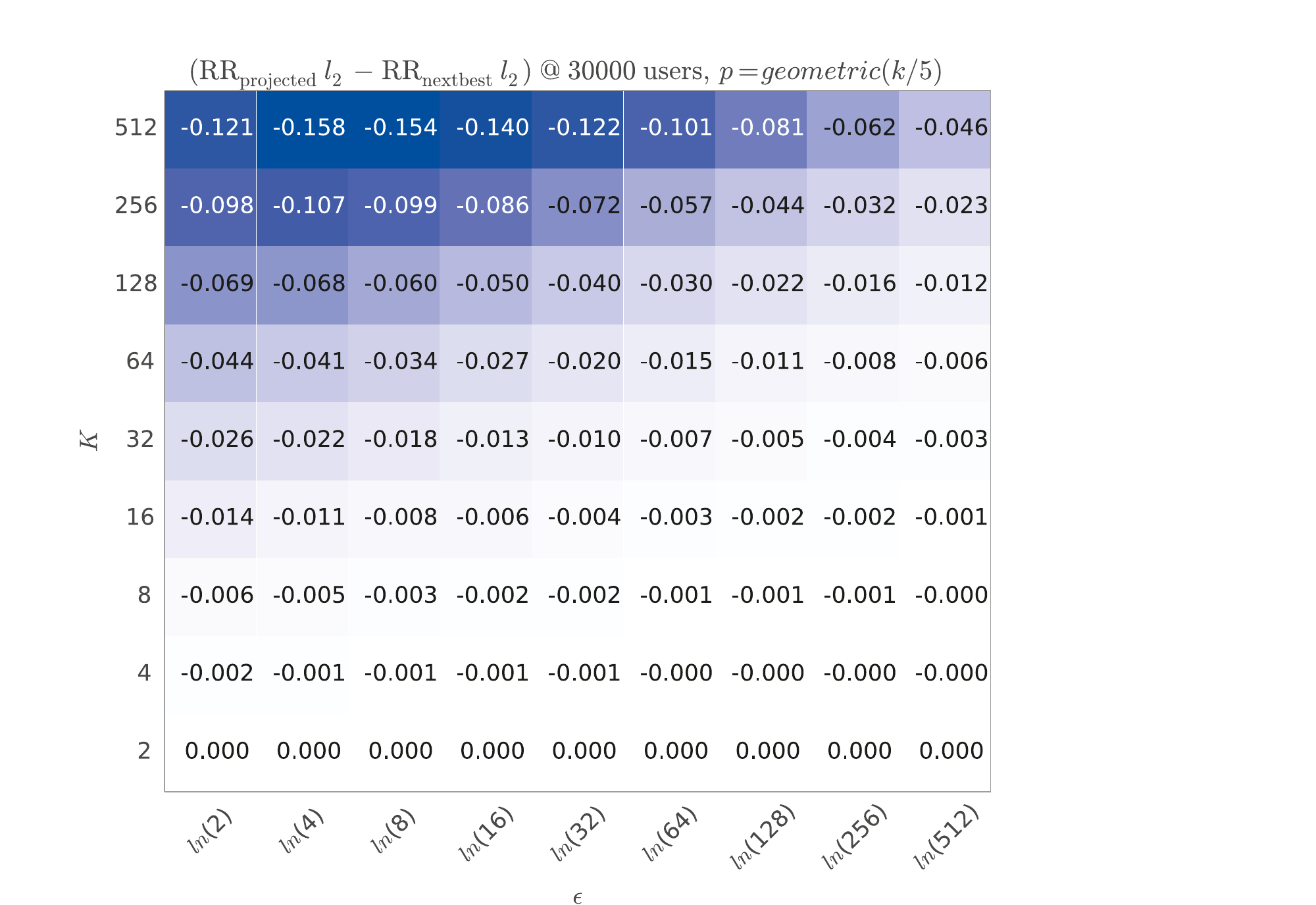}
\end{subfigure}
&
\begin{subfigure}[b]{.45\linewidth}
\includegraphics[width=\linewidth]{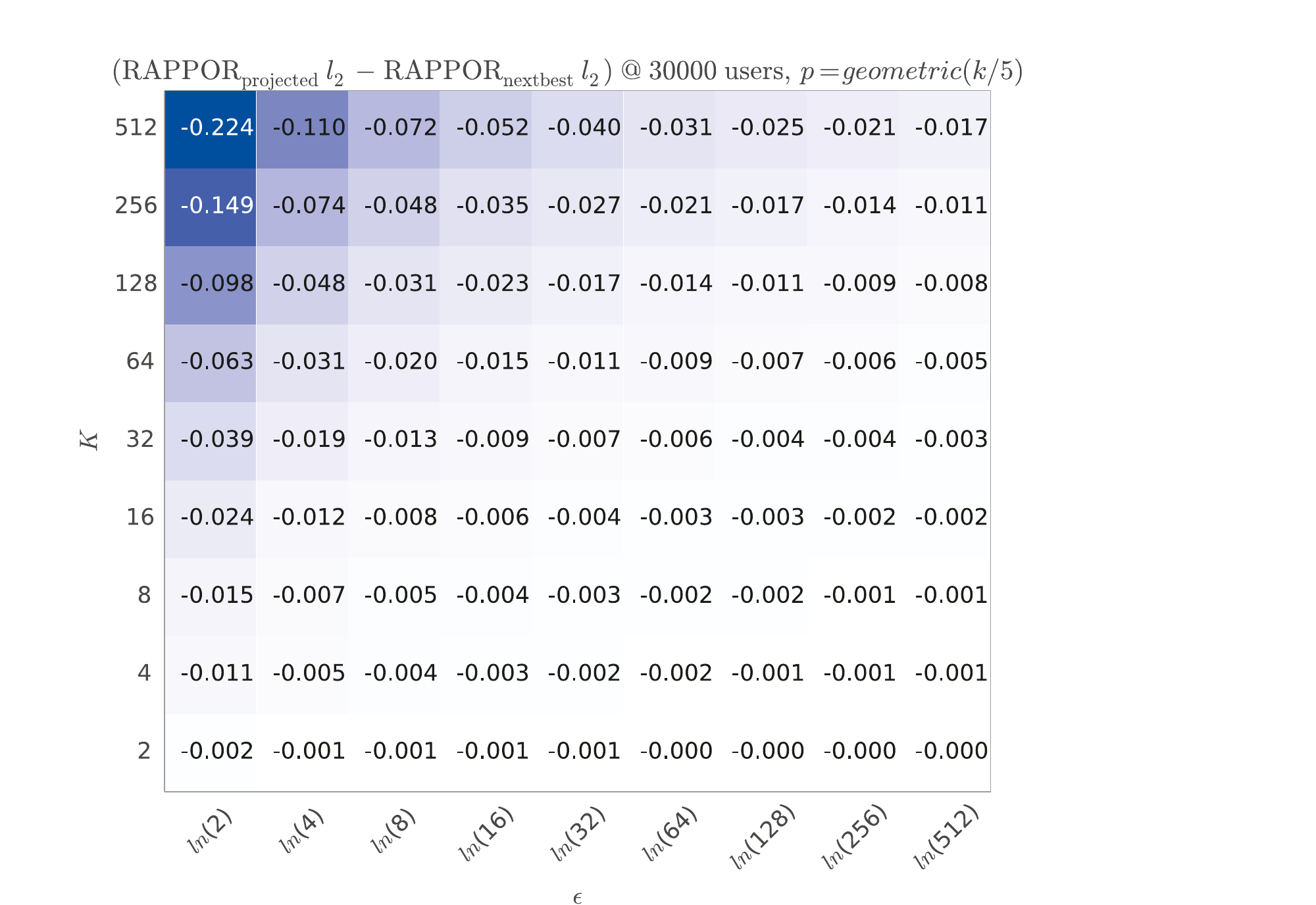}
\end{subfigure}
\end{tabular}
\caption{The improvement in $\ell_2$ decoding of the projected \KRR decoder (left) and projected \KRAPPOR decoder (right). This figure demonstrates that the same patterns hold in $\ell_2$ as in $\ell_1$ for the conditions shown in Figure~\ref{fig:decoders}.}
\label{fig:l2_decoders}
\end{figure*}

\begin{figure*}[h]
\centering
\begin{tabular}{m{.2in}cc}
& 100 users & 30000 users
\\
\parbox[t]{2mm}{\rotatebox{90}{\rlap{\hspace{.6in}geometric}}}
&
\begin{subfigure}[b]{.45\linewidth}
\includegraphics[width=\linewidth]{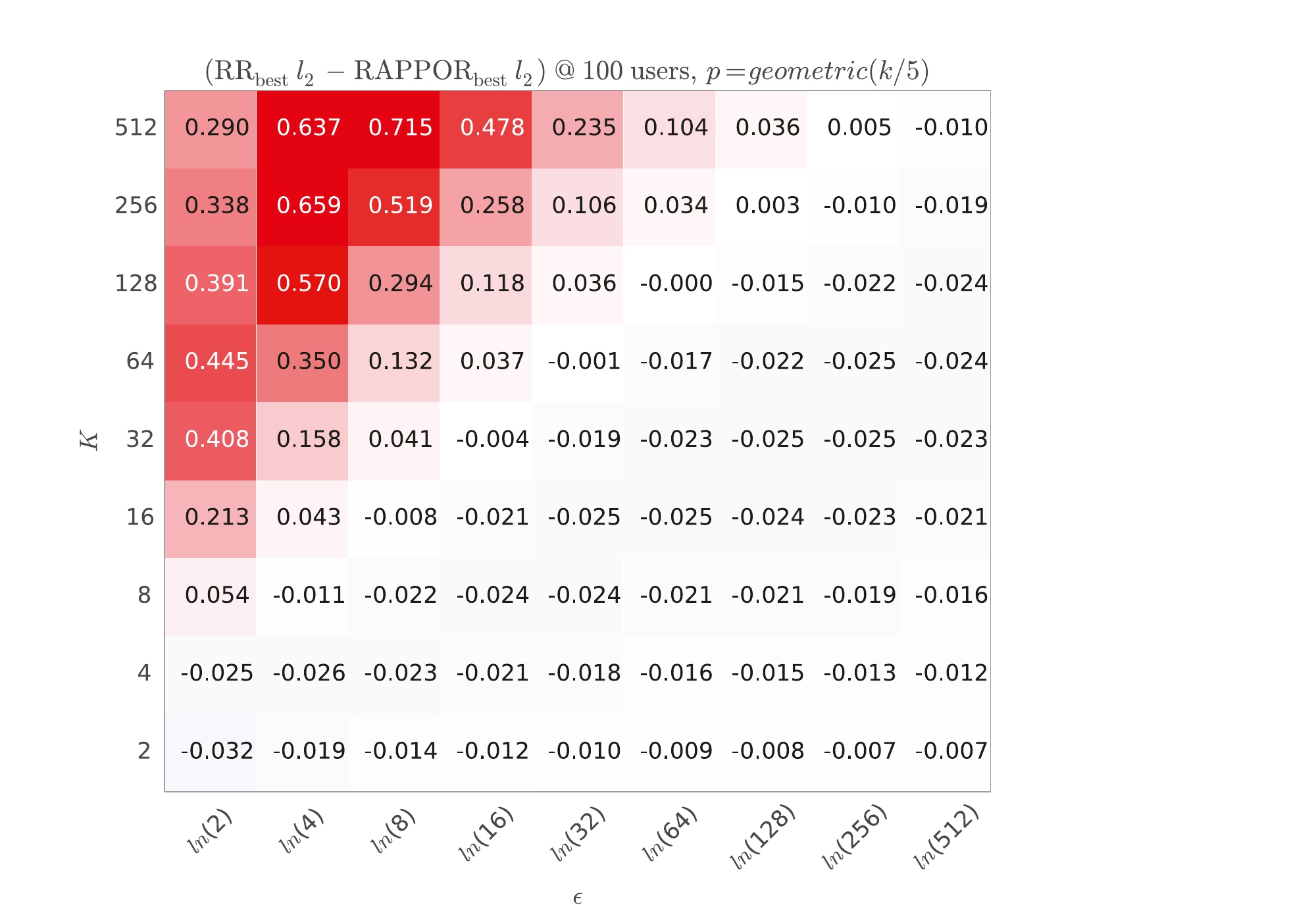}
\end{subfigure}
&
\begin{subfigure}[b]{.45\linewidth}
\includegraphics[width=\linewidth]{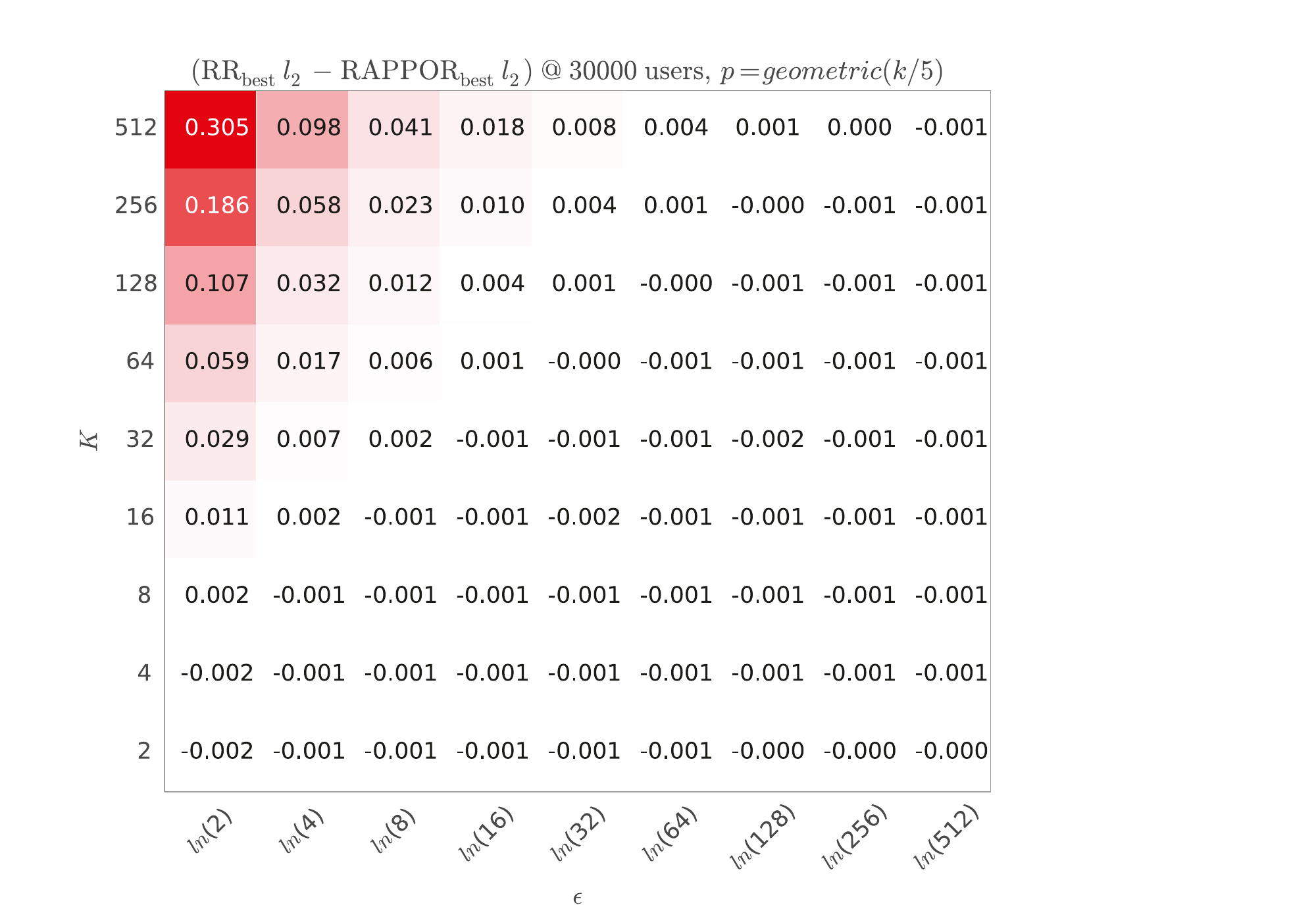}
\end{subfigure}
\\
\parbox[t]{2mm}{\rotatebox{90}{\rlap{\hspace{.6in}dirichlet}}}
&
\begin{subfigure}[b]{.45\linewidth}
\includegraphics[width=\linewidth]{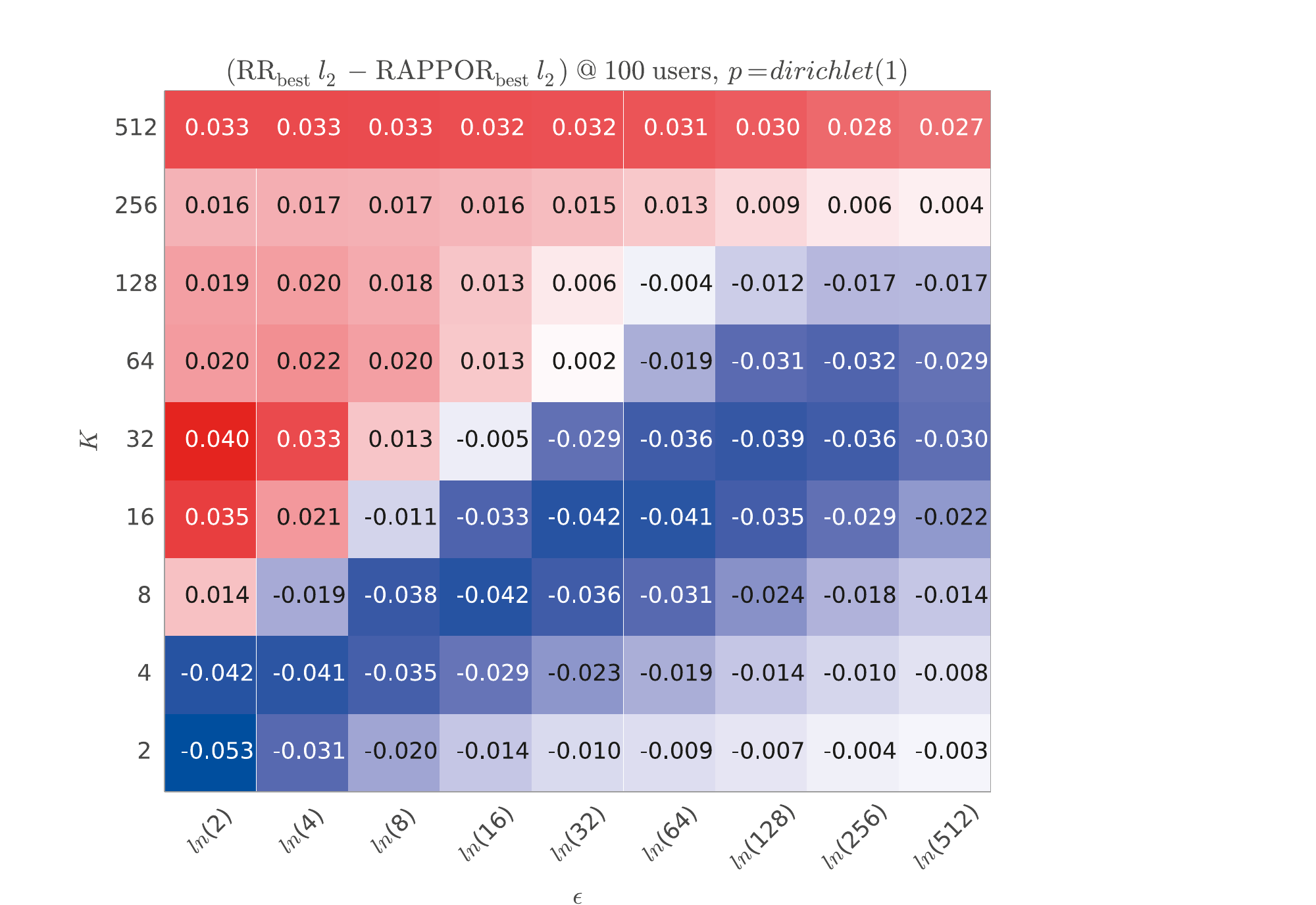}
\end{subfigure}
&
\begin{subfigure}[b]{.45\linewidth}
\includegraphics[width=\linewidth]{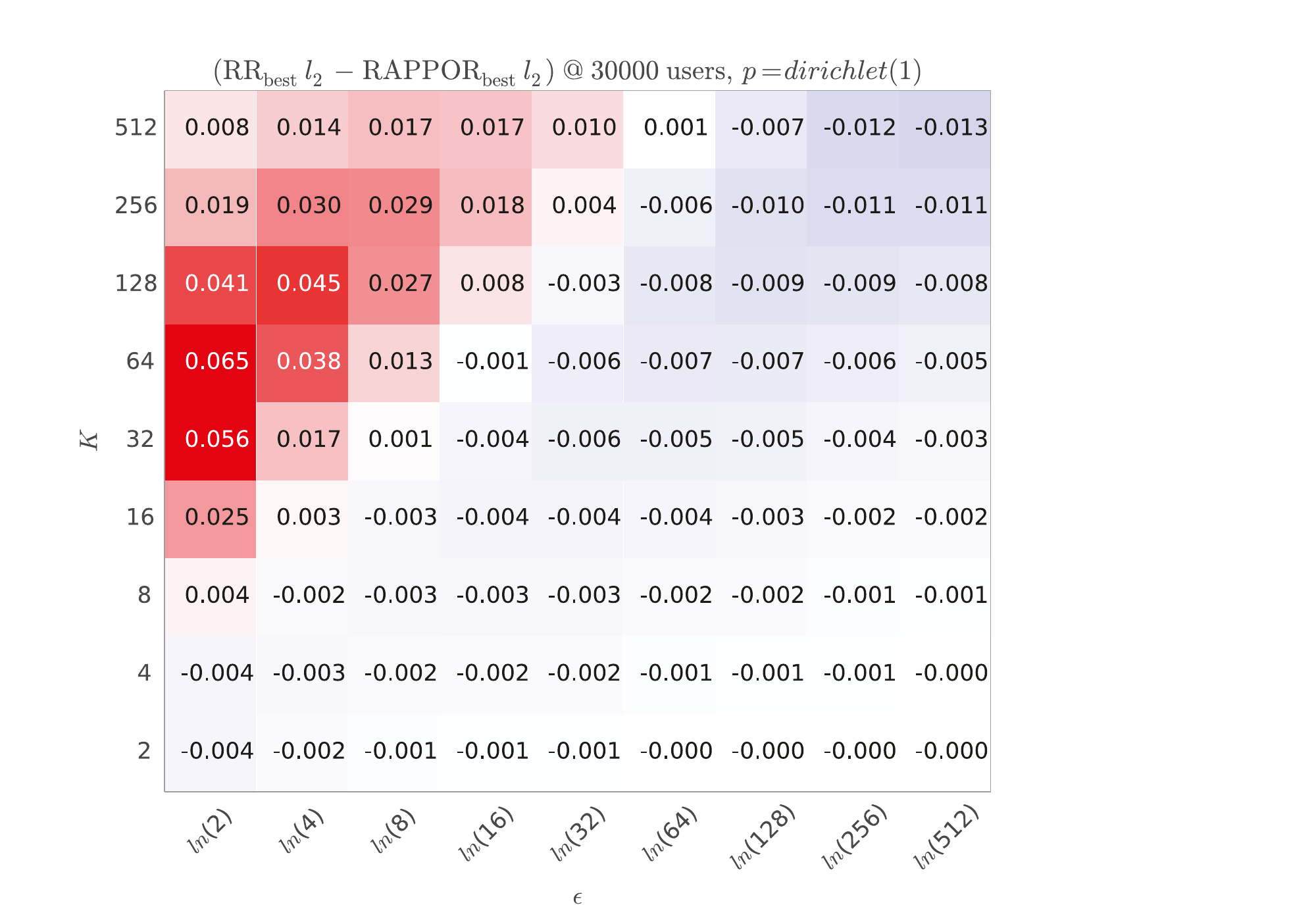}
\end{subfigure}
\end{tabular}
\caption{The improvement (negative values, blue) of the best \KRR decoder over the best \KRAPPOR decoder varying the size of the alphabet $\kScalar$ (rows) and privacy parameter $\varepsilon$ (columns). This figure demonstrates that the same patterns hold in $\ell_2$ as in $\ell_1$ for the conditions shown in Figure~\ref{fig:krr_vs_rappor}.}
\label{fig:l2_krr_vs_rappor}
\end{figure*}
\clearpage

\begin{figure*}
\centering

\begin{tabular}{cc}

\begin{subfigure}[b]{.45\linewidth}
\includegraphics[width=\linewidth]{figures/open_set.pdf}
\caption{Full $\varepsilon$ range.}
\end{subfigure}
&
\begin{subfigure}[b]{.45\linewidth}
\includegraphics[width=\linewidth]{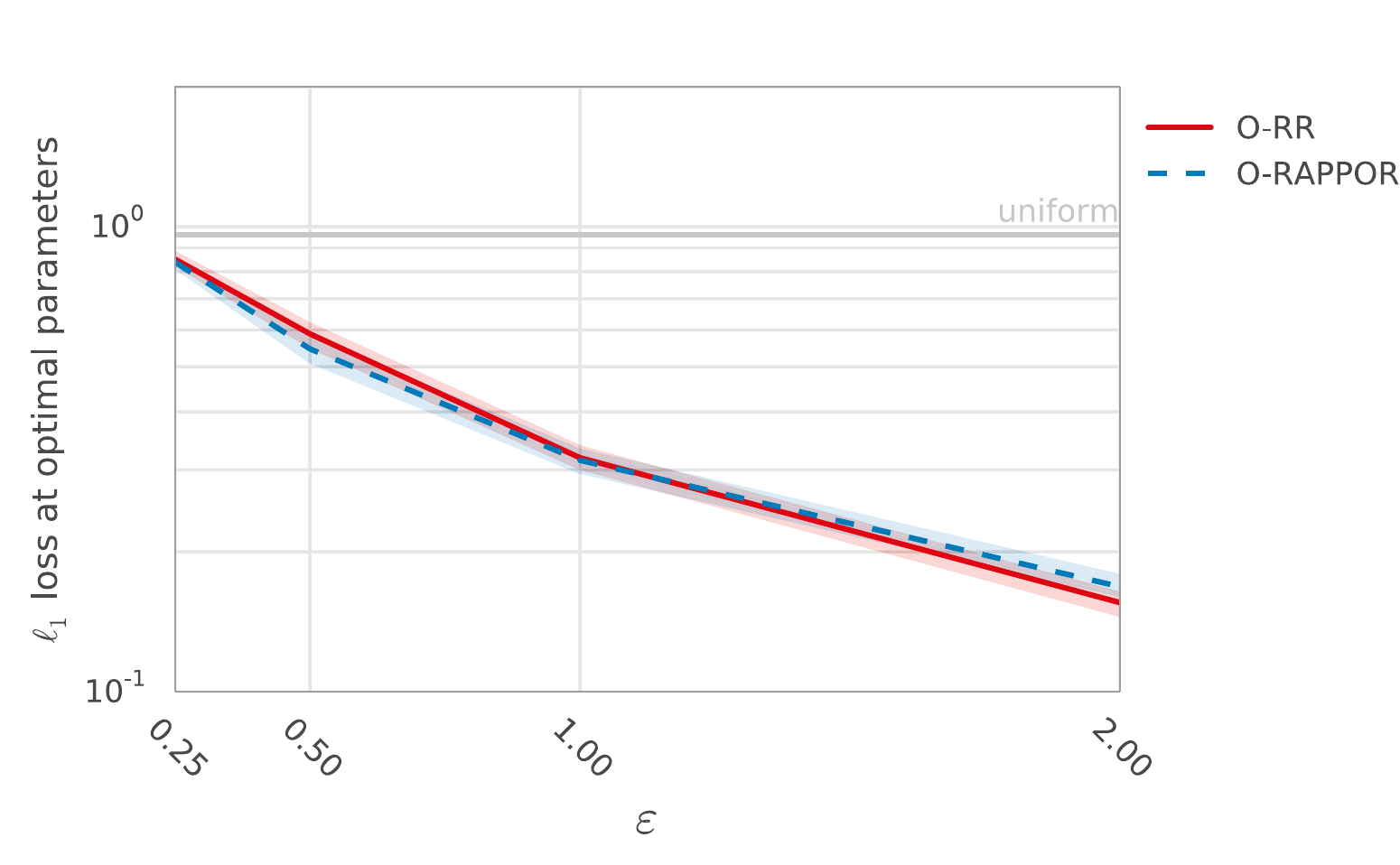}
\caption{Low $\varepsilon$ range.}
\end{subfigure}
\end{tabular}
\caption{
$\ell_1$ loss when decoding open alphabets using the \ORR and \ORAPPOR
for $n=10^6$ users with input drawn from an
alphabet of $S=256$ symbols under a geometric distribution with mean=$S/5$,
as depicted in Figure~\protect\ref{fig:geometric_ground_truth}.
Free parameters are set via grid search over
$\kScalar \in [2, 4, 8, \ldots, 2048, 4096]$,
$\cScalar \in [1, 2, 4, \ldots, 512, 1024]$,
$\hScalar \in [1, 2, 4, 8, 16]$
to minimize the median loss over 50 samples at the given $\varepsilon$
value.  Lines show median $\ell_1$ loss while the (narrow) shaded regions indicate 90\%
confidence intervals (over 50 samples).
Baselines indicate expected loss from (1) using an empirical estimator
directly on the input $\sVector$ and (2) using the uniform distribution
as the $\hat{\pVector}$ estimate.
}

\label{fig:open_set_zoom}
\end{figure*}

\begin{figure*}
\centering

\begin{tabular}{cc}

\begin{subfigure}[b]{.45\linewidth}
\includegraphics[width=\linewidth]{figures/closed_set.pdf}
\caption{Full $\varepsilon$ range.}
\end{subfigure}
&
\begin{subfigure}[b]{.45\linewidth}
\includegraphics[width=\linewidth]{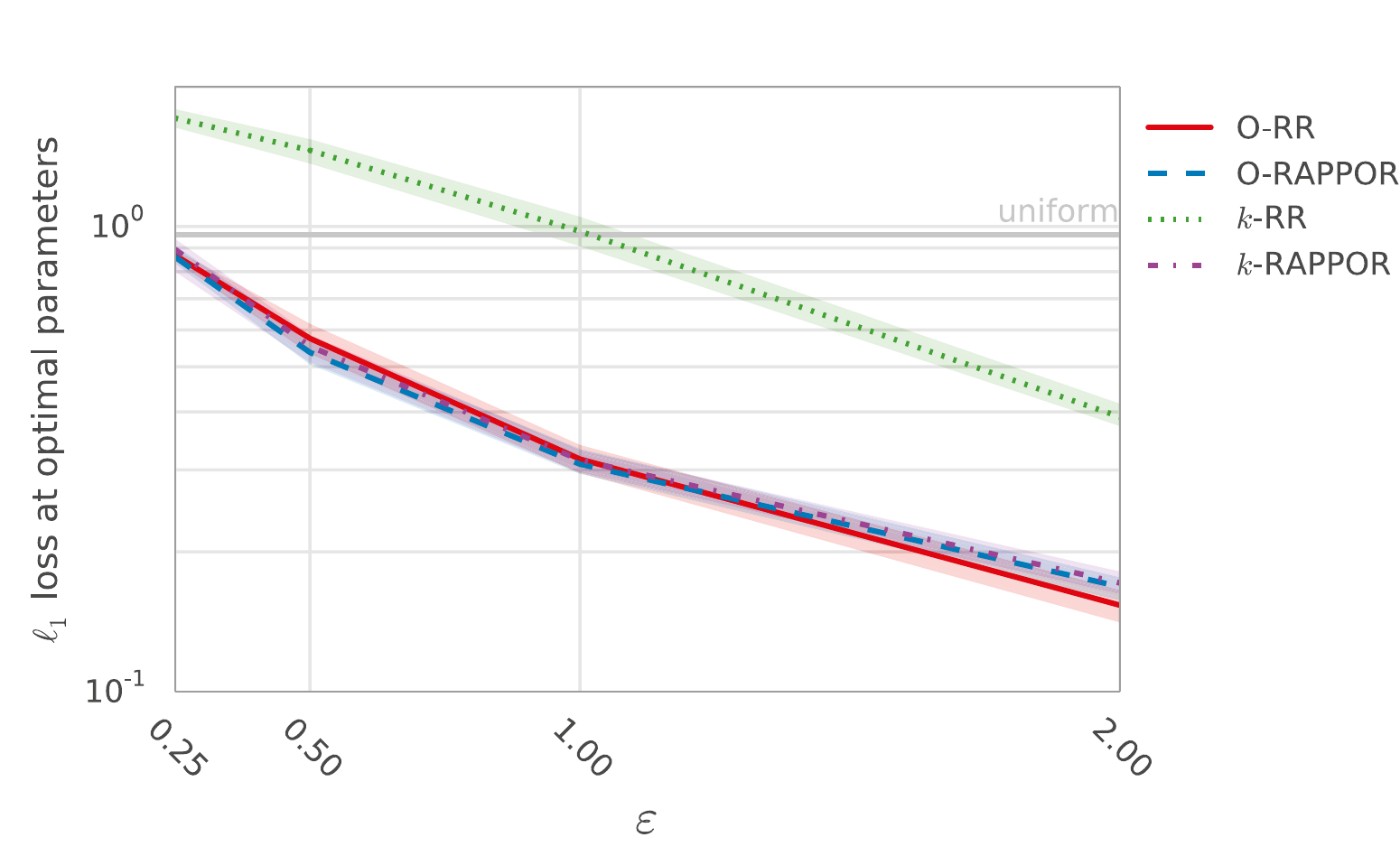}
\caption{Low $\varepsilon$ range.}
\end{subfigure}
\end{tabular}
\caption{
$\ell_1$ loss when decoding decoding a known alphabet
using the \ORR and \ORAPPOR (via permutative perfect hash functions)
for $n=10^6$ users with input drawn from an
alphabet of $S=256$ symbols under a geometric distribution with mean=$S/5$,
as depicted in Figure~\protect\ref{fig:geometric_ground_truth}.
Free parameters are set via grid search over
$\kScalar \in [2, 4, 8, \ldots, 2048, 4096]$,
$\cScalar \in [1, 2, 4, \ldots, 512, 1024]$,
$\hScalar \in [1, 2, 4, 8, 16]$
to minimize the median loss over 50 samples at the given $\varepsilon$
value.  Lines show median $\ell_1$ loss while the (narrow) shaded regions indicate 90\%
confidence intervals (over 50 samples).
Note that the \KRAPPOR and \ORAPPOR lines in
\protect\subref*{fig:orr_vs_rappor:closed} are nearly indistinguishable.
Baselines indicate expected loss from (1) using an empirical estimator
directly on the input $\sVector$ and (2) using the uniform distribution
as the $\hat{\pVector}$ estimate.
}

\label{fig:closed_set_zoom}
\end{figure*}

\clearpage

\begin{figure*}
\centering

\begin{subfigure}[b]{.45\linewidth}
\includegraphics[width=\linewidth]{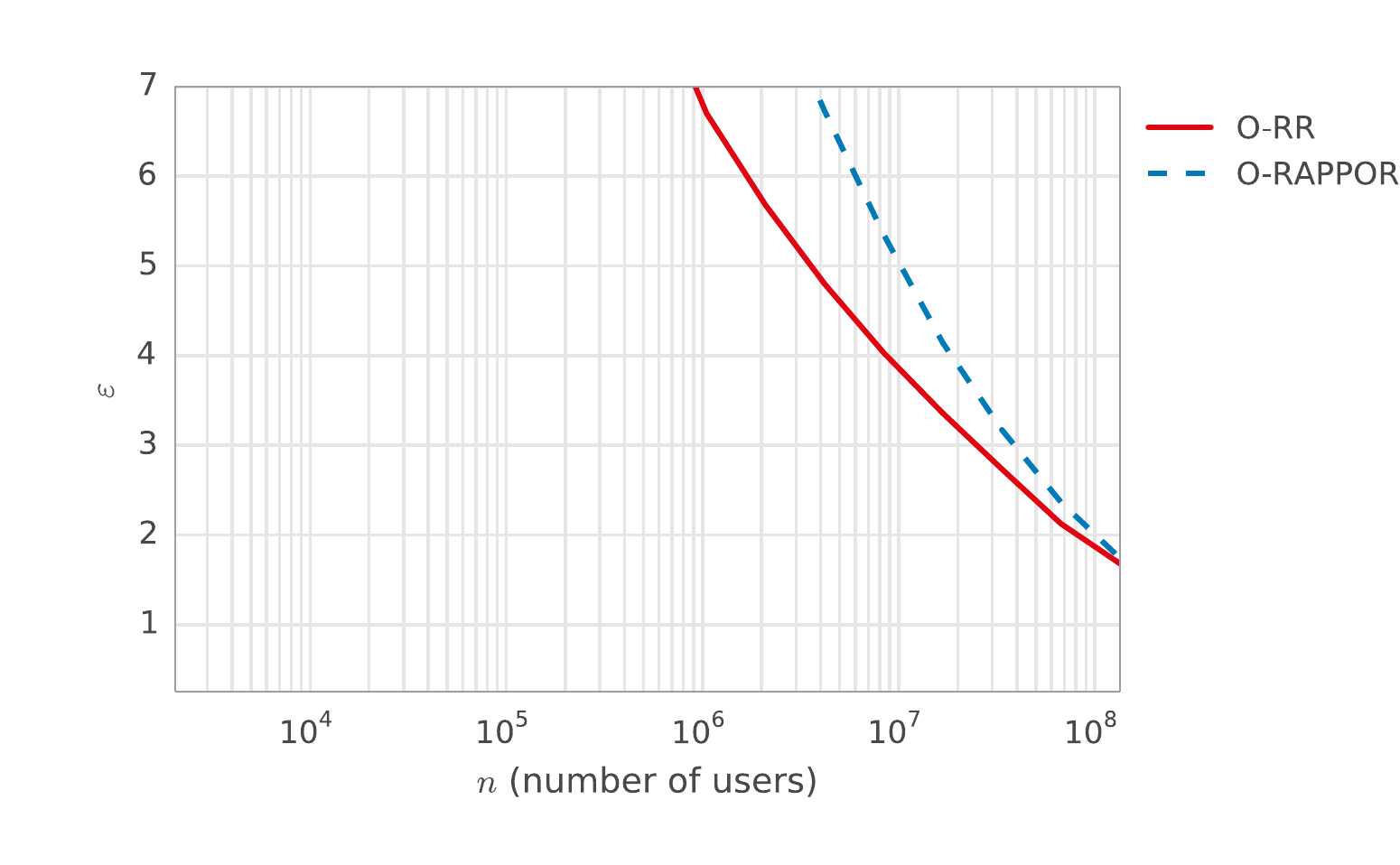}
\caption{$\ell_1=0.02$}
\end{subfigure}
\begin{subfigure}[b]{.45\linewidth}
\includegraphics[width=\linewidth]{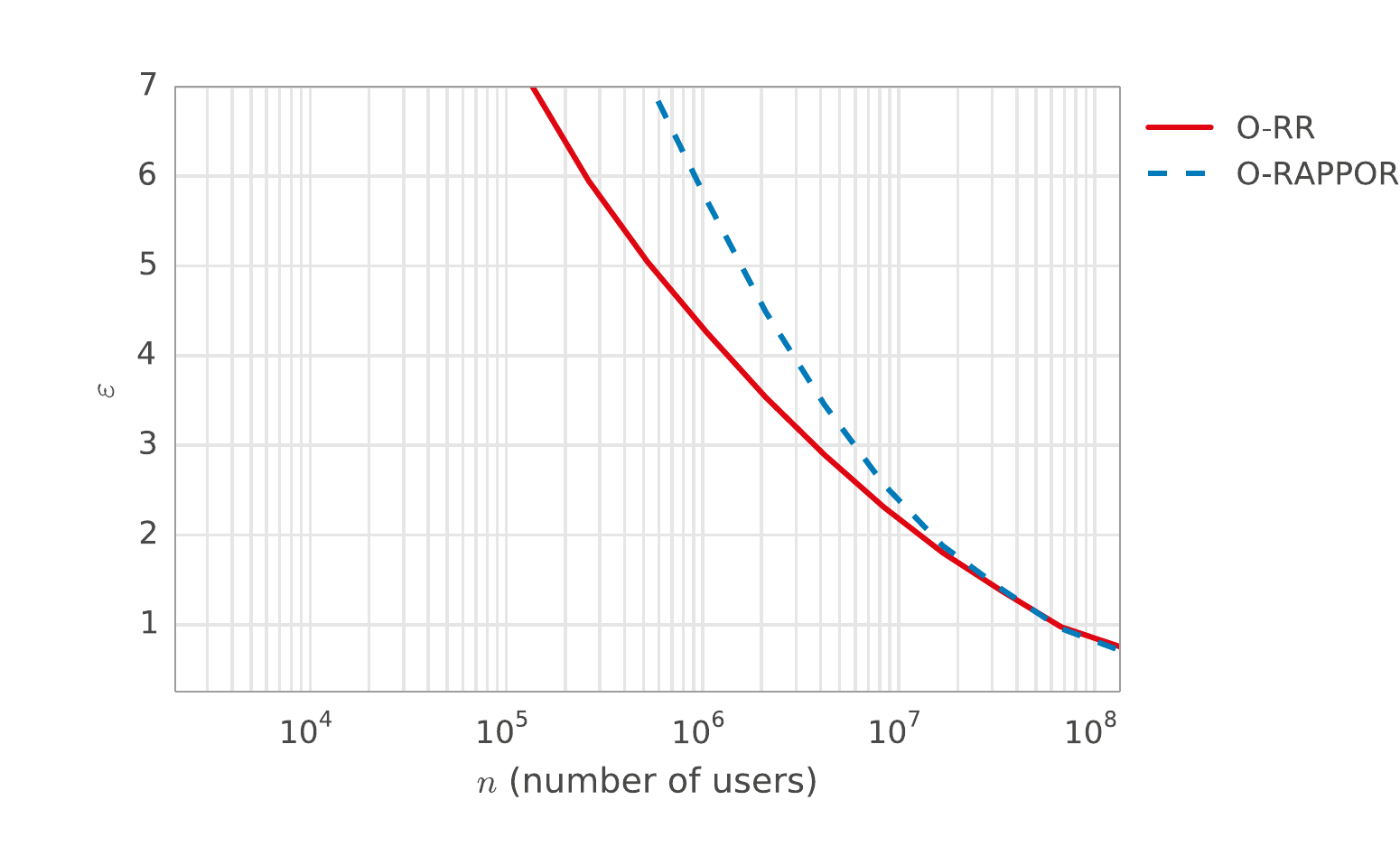}
\caption{$\ell_1=0.05$}
\end{subfigure}

\begin{subfigure}[b]{.45\linewidth}
\includegraphics[width=\linewidth]{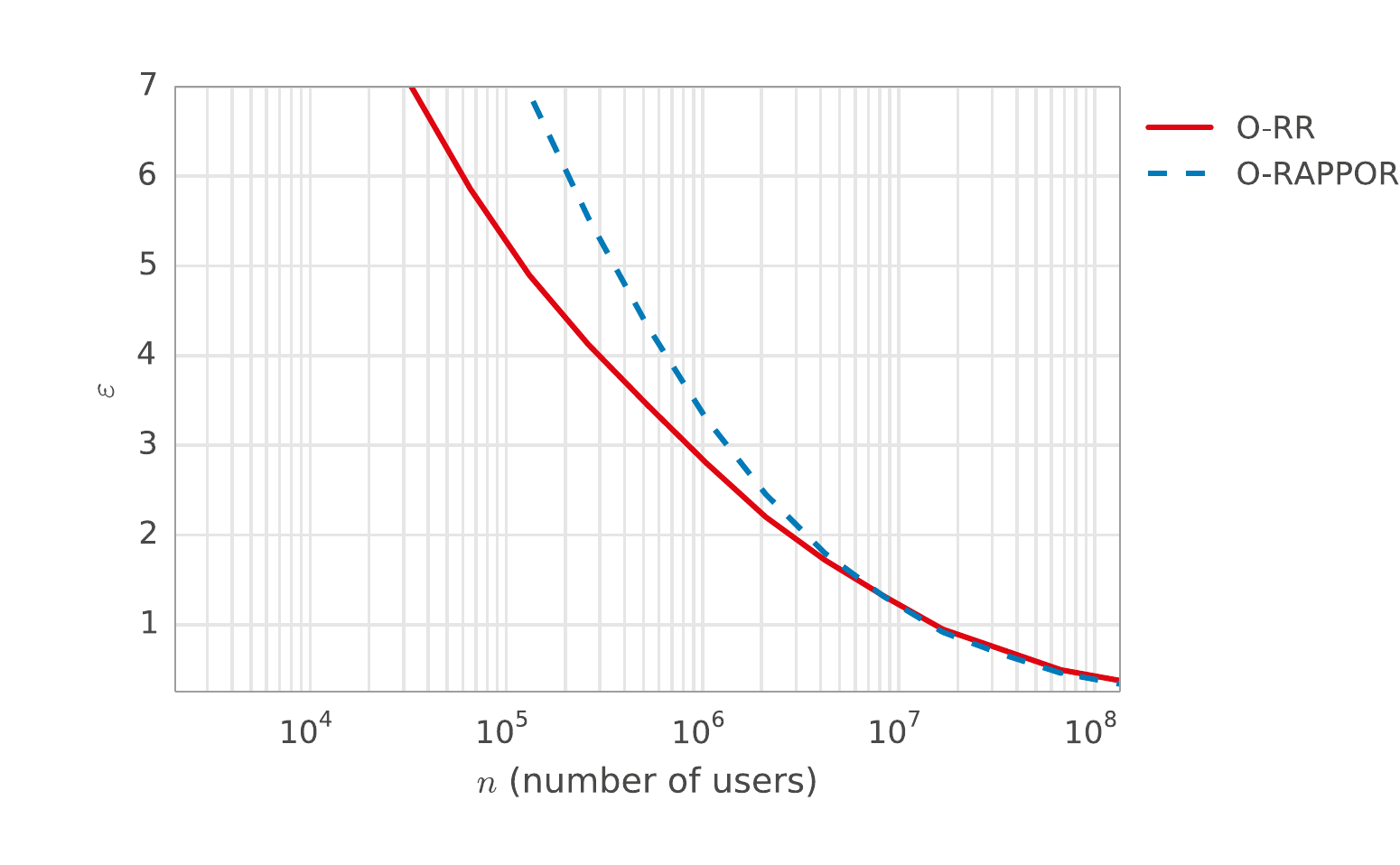}
\caption{$\ell_1=0.10$}
\end{subfigure}
\begin{subfigure}[b]{.45\linewidth}
\includegraphics[width=\linewidth]{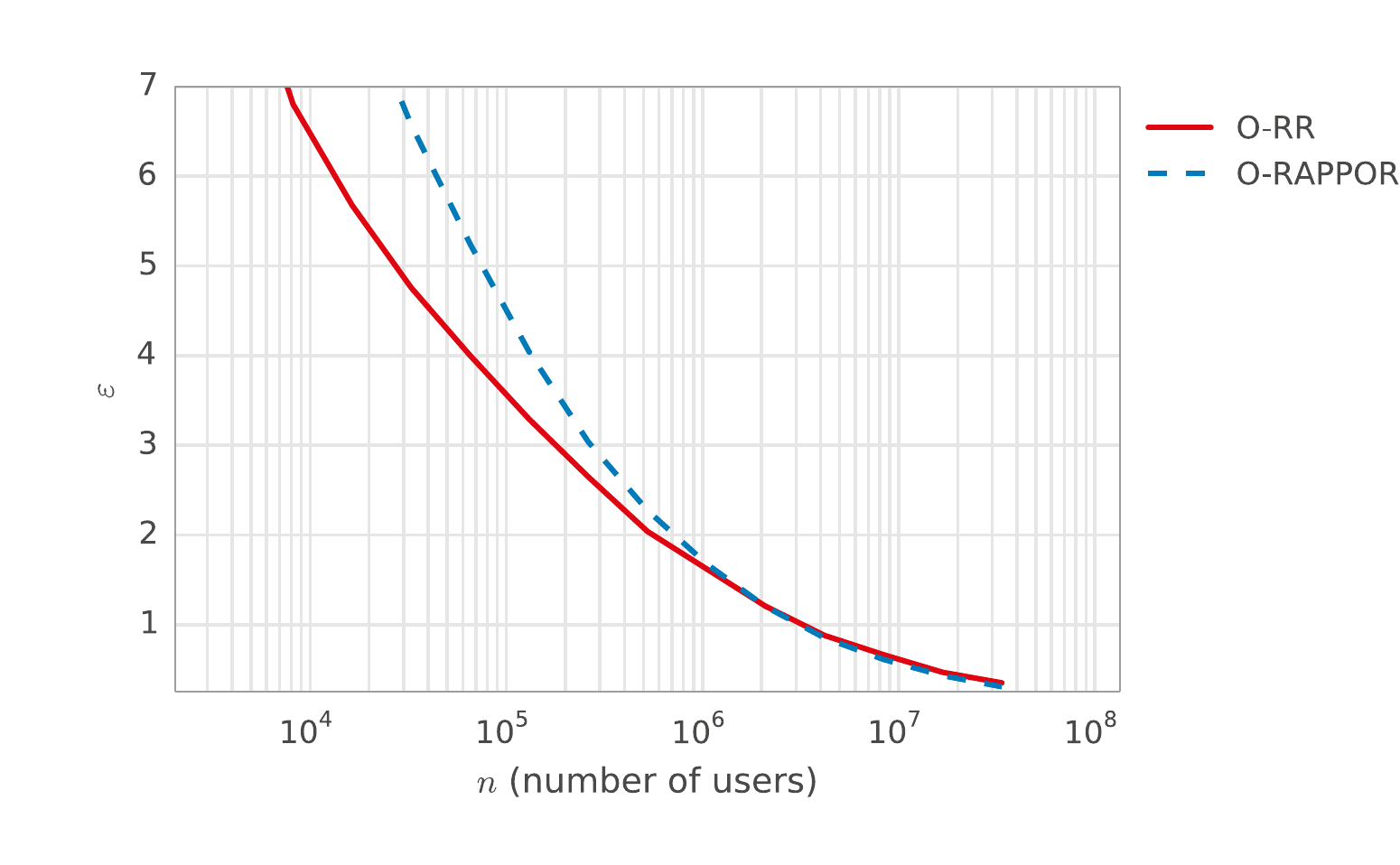}
\caption{$\ell_1=0.20$}
\end{subfigure}

\begin{subfigure}[b]{.45\linewidth}
\includegraphics[width=\linewidth]{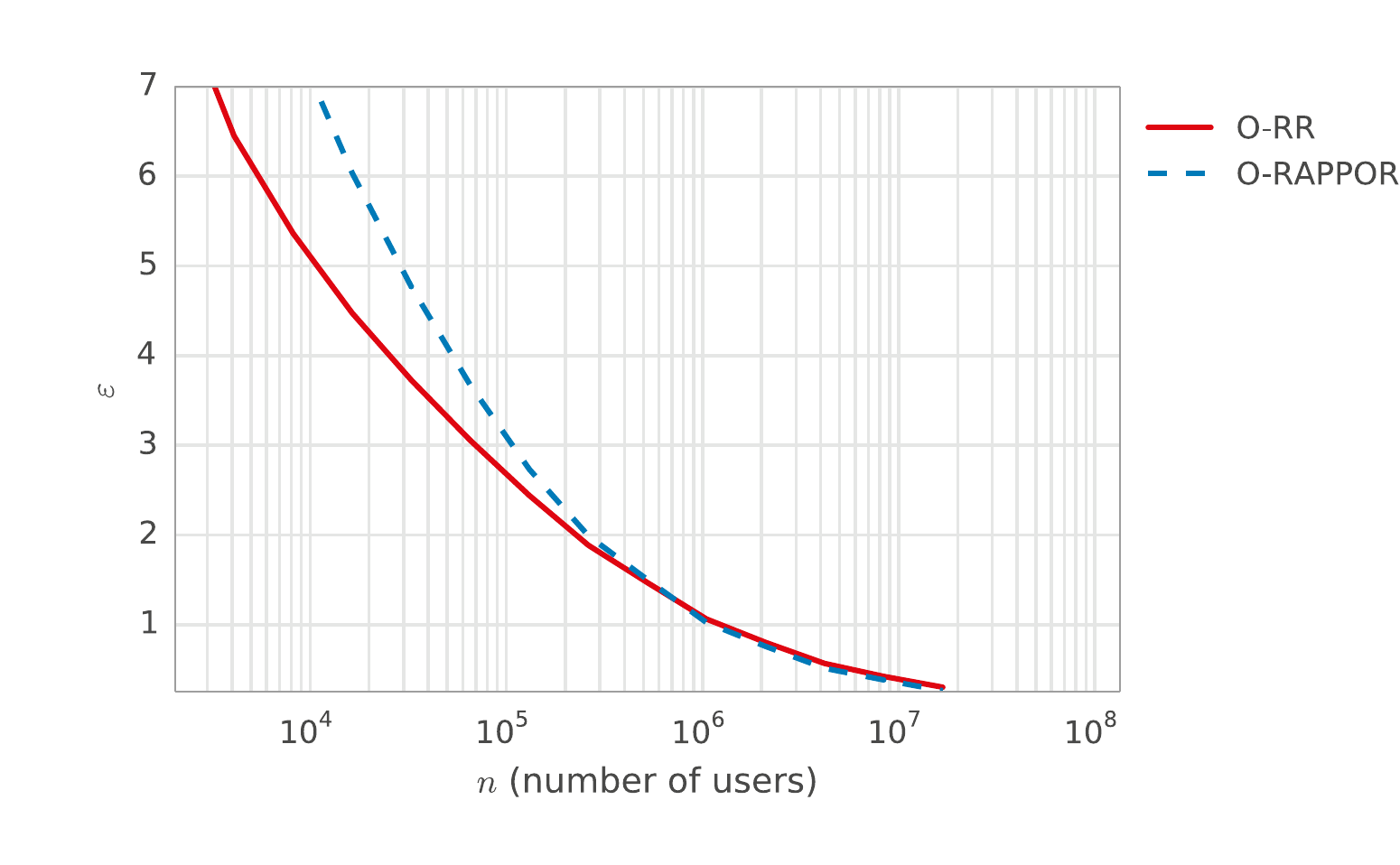}
\caption{$\ell_1=0.30$}
\end{subfigure}

\caption{
Taking $\ell_1$ loss (the utility) and $\nScalar$ (the number of users) as
fixed requirements (as is the case in many practical scenarios), we
approximate the degree of privacy $\varepsilon$ that can be obtained under
\ORR and \ORAPPOR for open alphabets (lower $\varepsilon$ is better).
Input is generated from an
alphabet of $S=256$ symbols under a geometric distribution with mean=$S/5$,
as depicted in Figure~\protect\ref{fig:geometric_ground_truth}.
Free parameters are set via grid search to minimize the median loss over 50
samples at the given $\varepsilon$ and fixed parameter values.
}
\label{fig:privacy_at_n_open}
\end{figure*}

\begin{figure*}
\centering

\begin{subfigure}[b]{.45\linewidth}
\includegraphics[width=\linewidth]{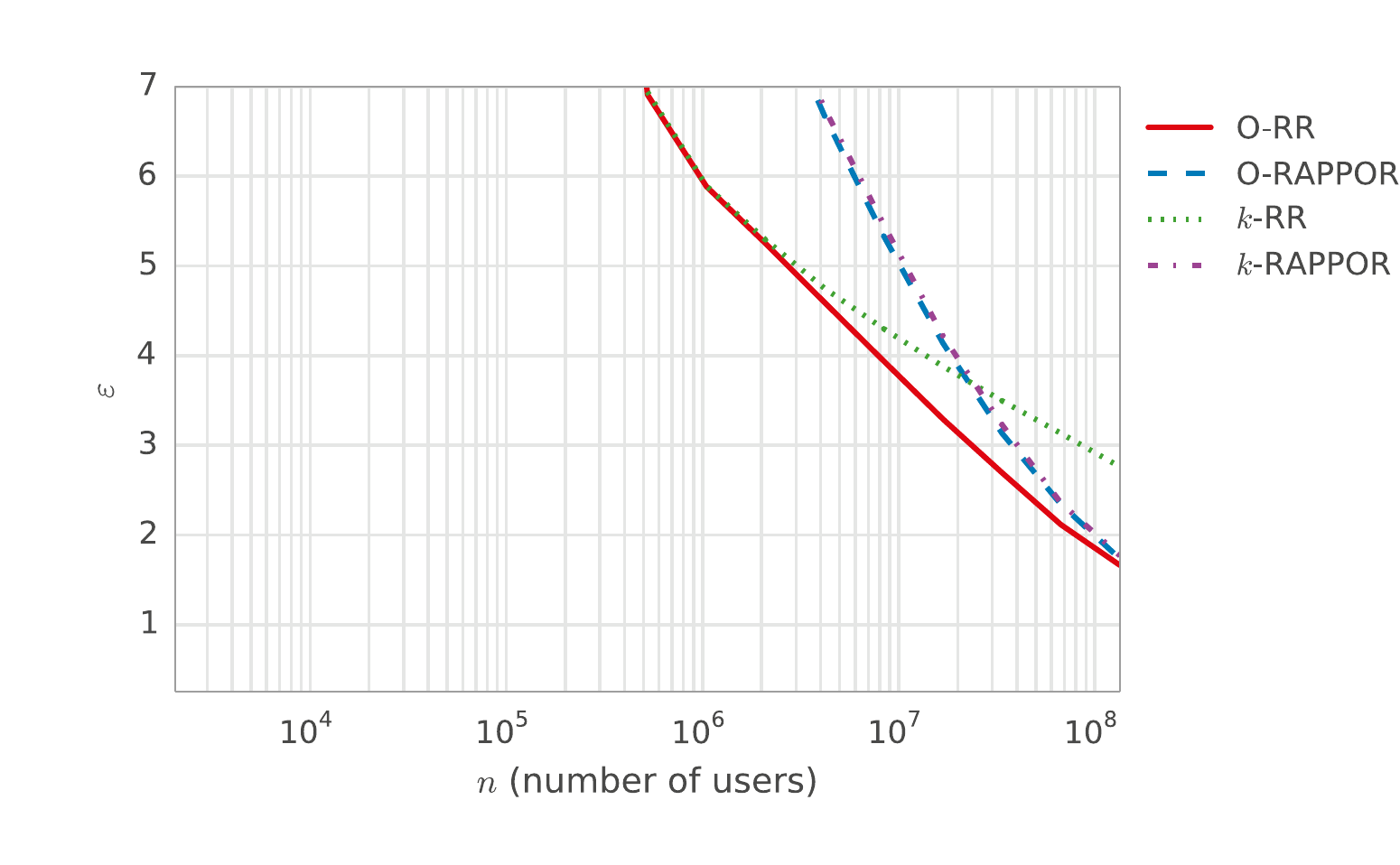}
\caption{$\ell_1=0.02$}
\end{subfigure}
\begin{subfigure}[b]{.45\linewidth}
\includegraphics[width=\linewidth]{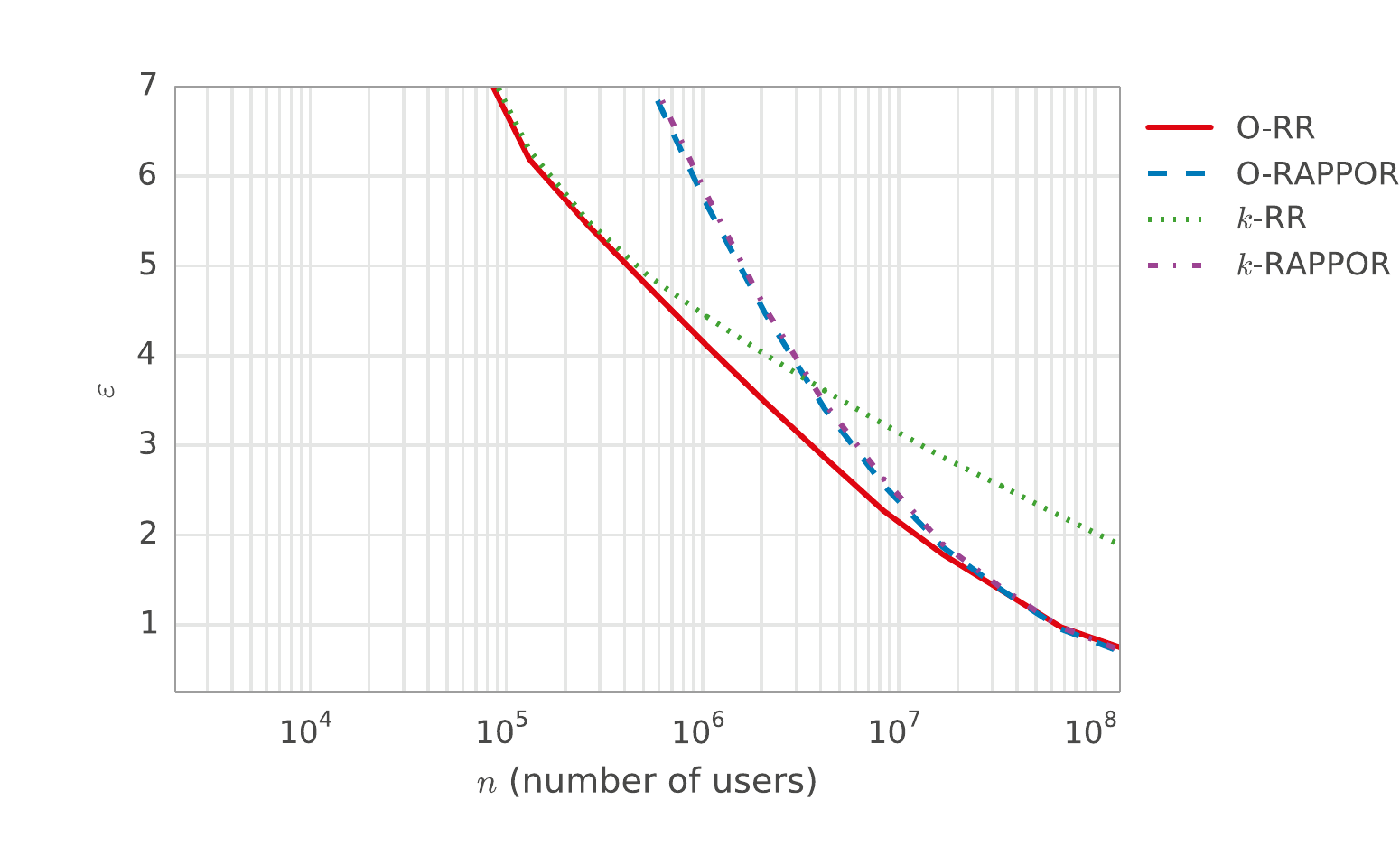}
\caption{$\ell_1=0.05$}
\end{subfigure}

\begin{subfigure}[b]{.45\linewidth}
\includegraphics[width=\linewidth]{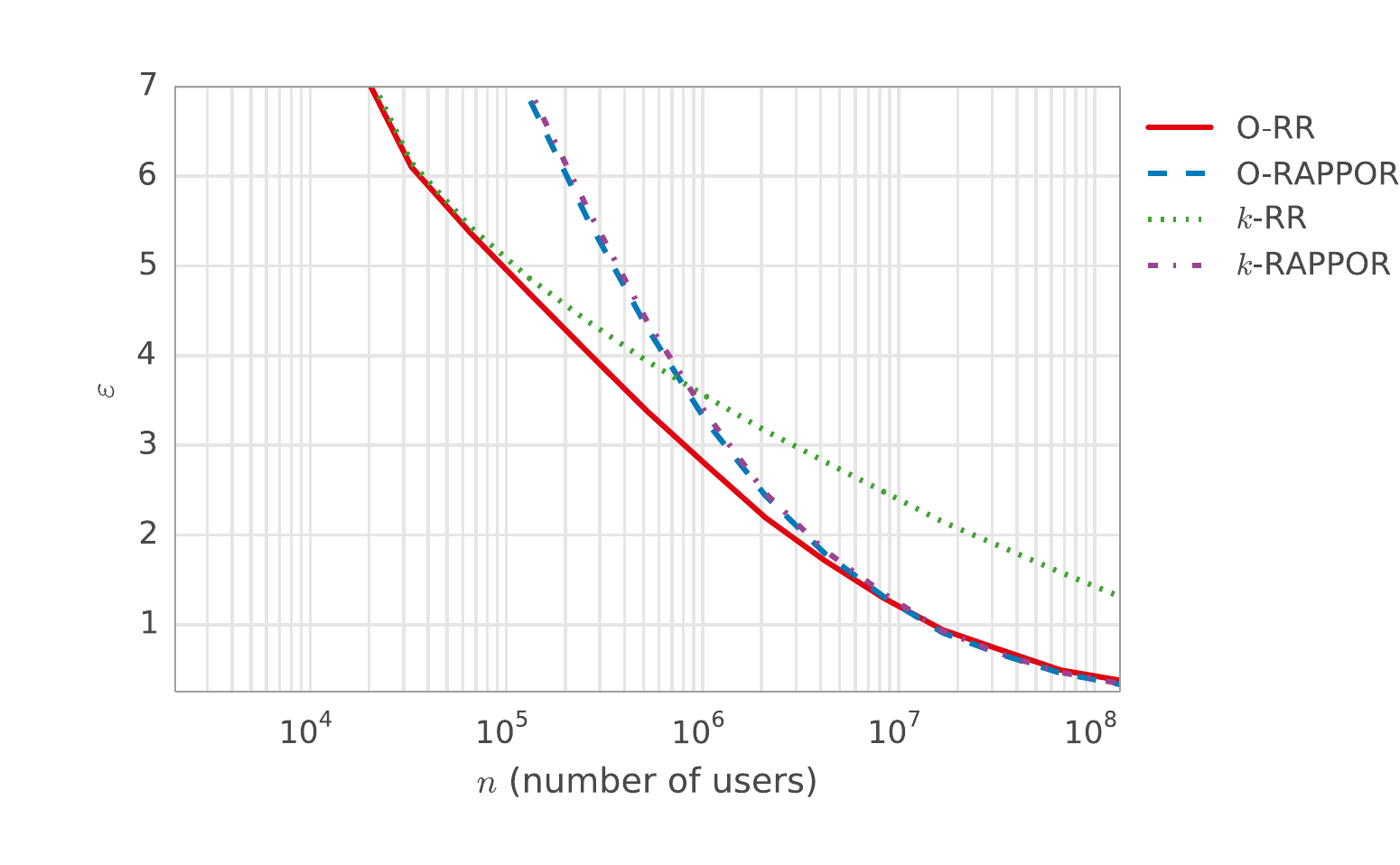}
\caption{$\ell_1=0.10$}
\end{subfigure}
\begin{subfigure}[b]{.45\linewidth}
\includegraphics[width=\linewidth]{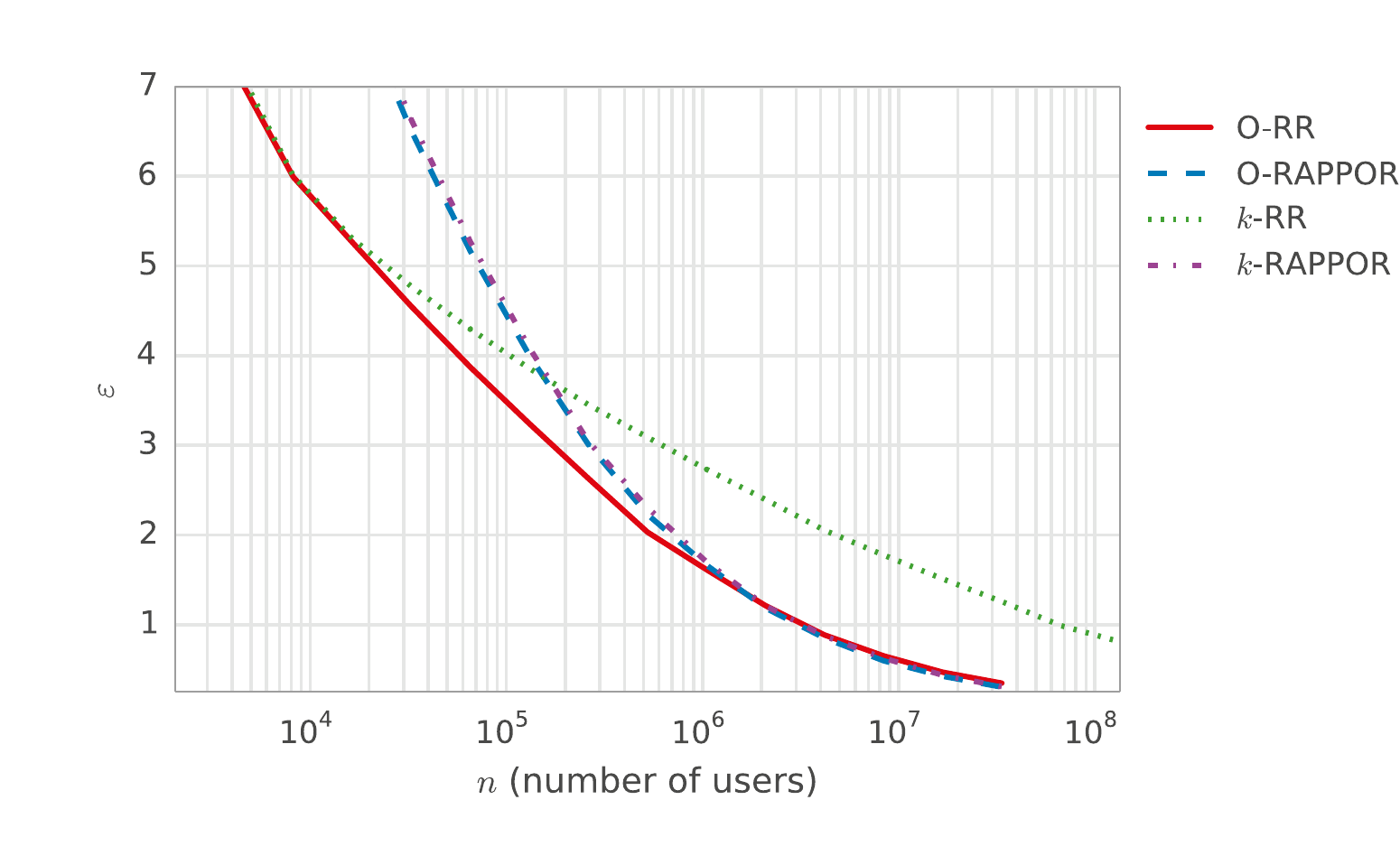}
\caption{$\ell_1=0.20$}
\end{subfigure}

\begin{subfigure}[b]{.45\linewidth}
\includegraphics[width=\linewidth]{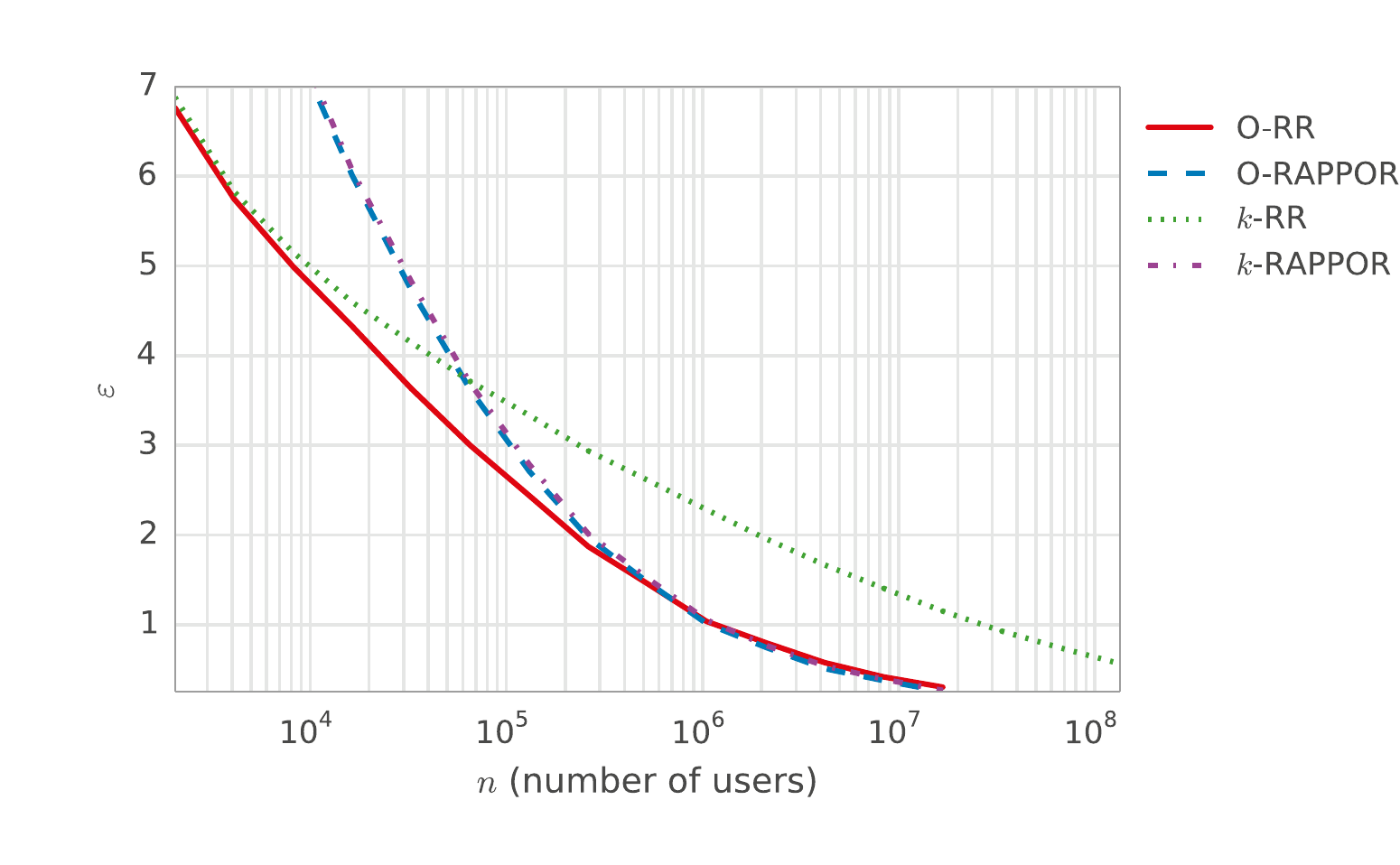}
\caption{$\ell_1=0.30$}
\end{subfigure}

\caption{
Taking $\ell_1$ loss (the utility) and $\nScalar$ (the number of users) as
fixed requirements (as is the case in many practical scenarios), we
approximate the degree of privacy $\varepsilon$ that can be obtained under
\ORR and \ORAPPOR for closed alphabets (lower $\varepsilon$ is better).
Input is generated from an
alphabet of $S=256$ symbols under a geometric distribution with mean=$S/5$,
as depicted in Figure~\protect\ref{fig:geometric_ground_truth}.
Free parameters are set via grid search to minimize the median loss over 50
samples at the given $\varepsilon$ and fixed parameter values.
}
\label{fig:privacy_at_n_closed}
\end{figure*}

\begin{figure*}
\centering

\begin{tabular}{ccc}

\begin{subfigure}[b]{.45\linewidth}
\includegraphics[width=\linewidth]{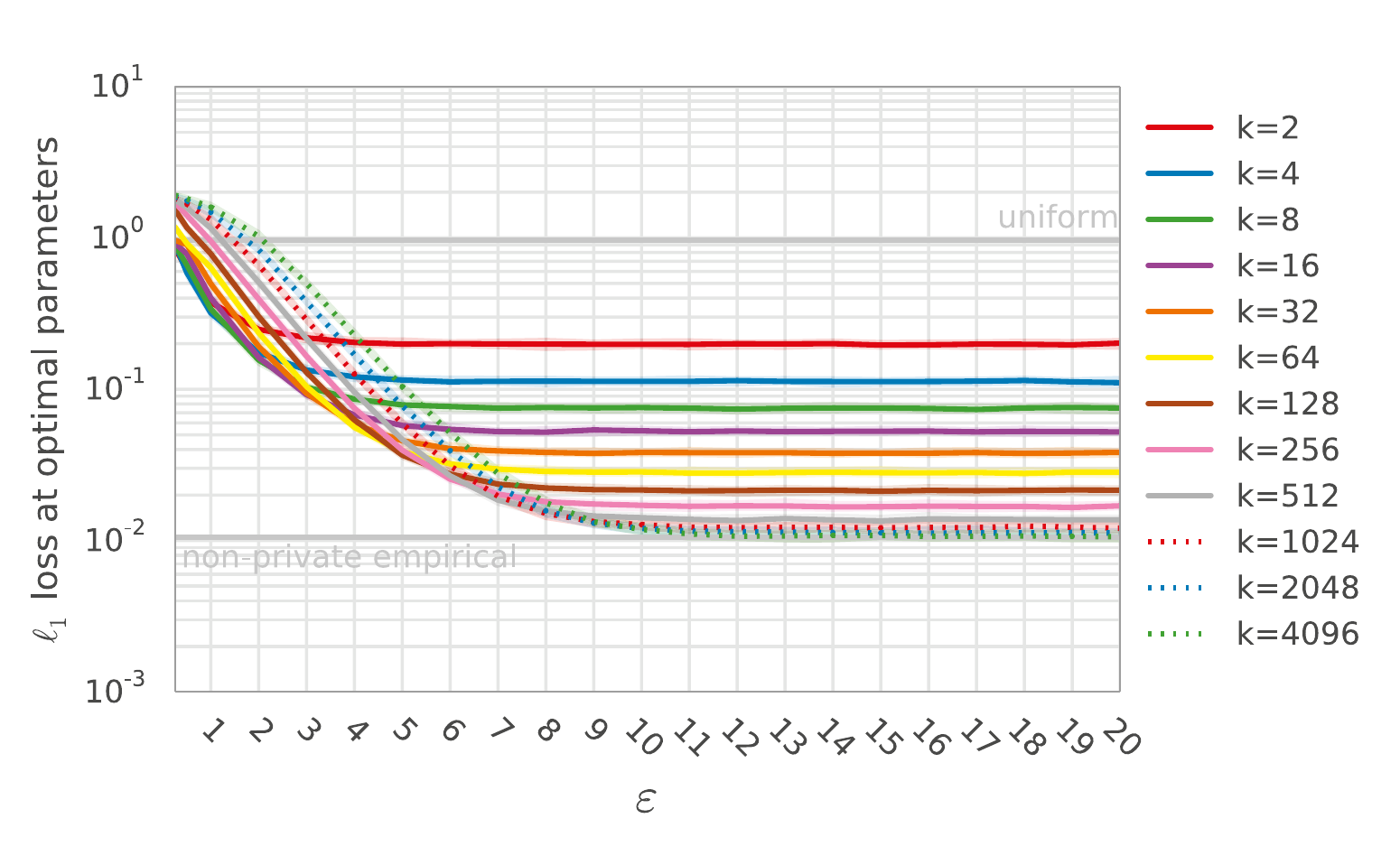}
\caption{\ORR varying $\kScalar$}
\end{subfigure}
&
\begin{subfigure}[b]{.45\linewidth}
\includegraphics[width=\linewidth]{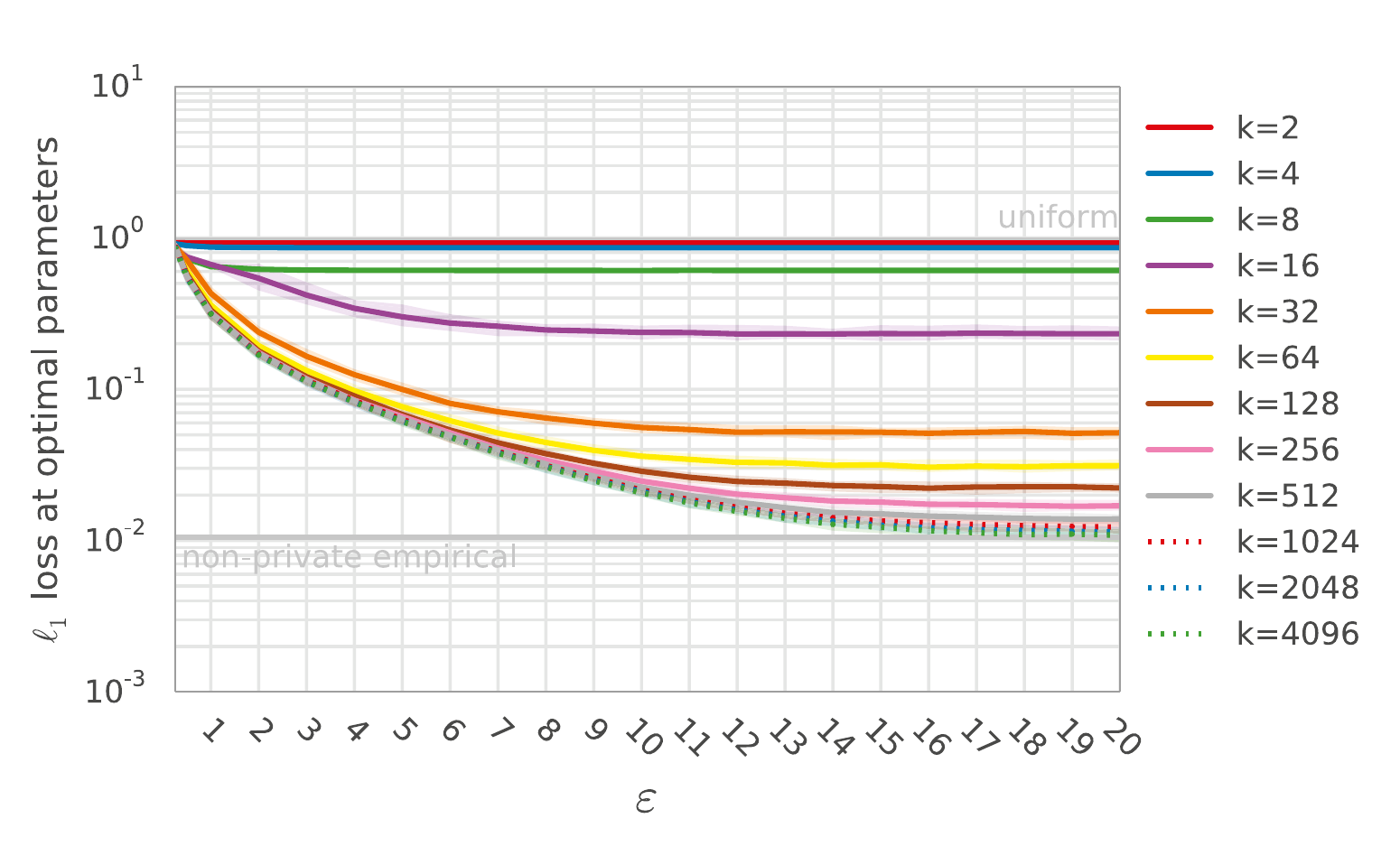}
\caption{\ORAPPOR varying $\kScalar$}
\end{subfigure}
\\
\begin{subfigure}[b]{.45\linewidth}
\includegraphics[width=\linewidth]{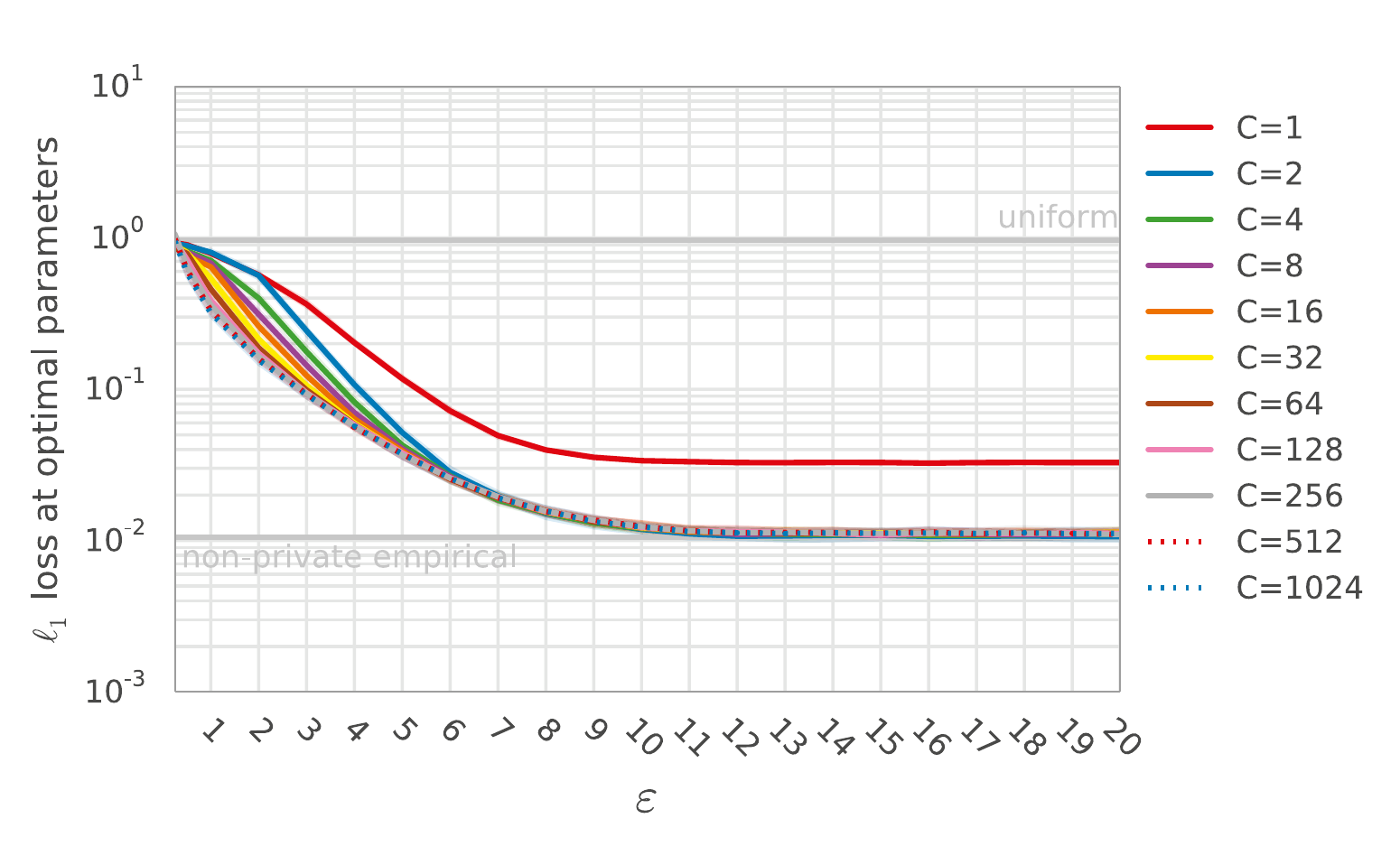}
\caption{\ORR varying $\CScalar$}
\end{subfigure}
&
\begin{subfigure}[b]{.45\linewidth}
\includegraphics[width=\linewidth]{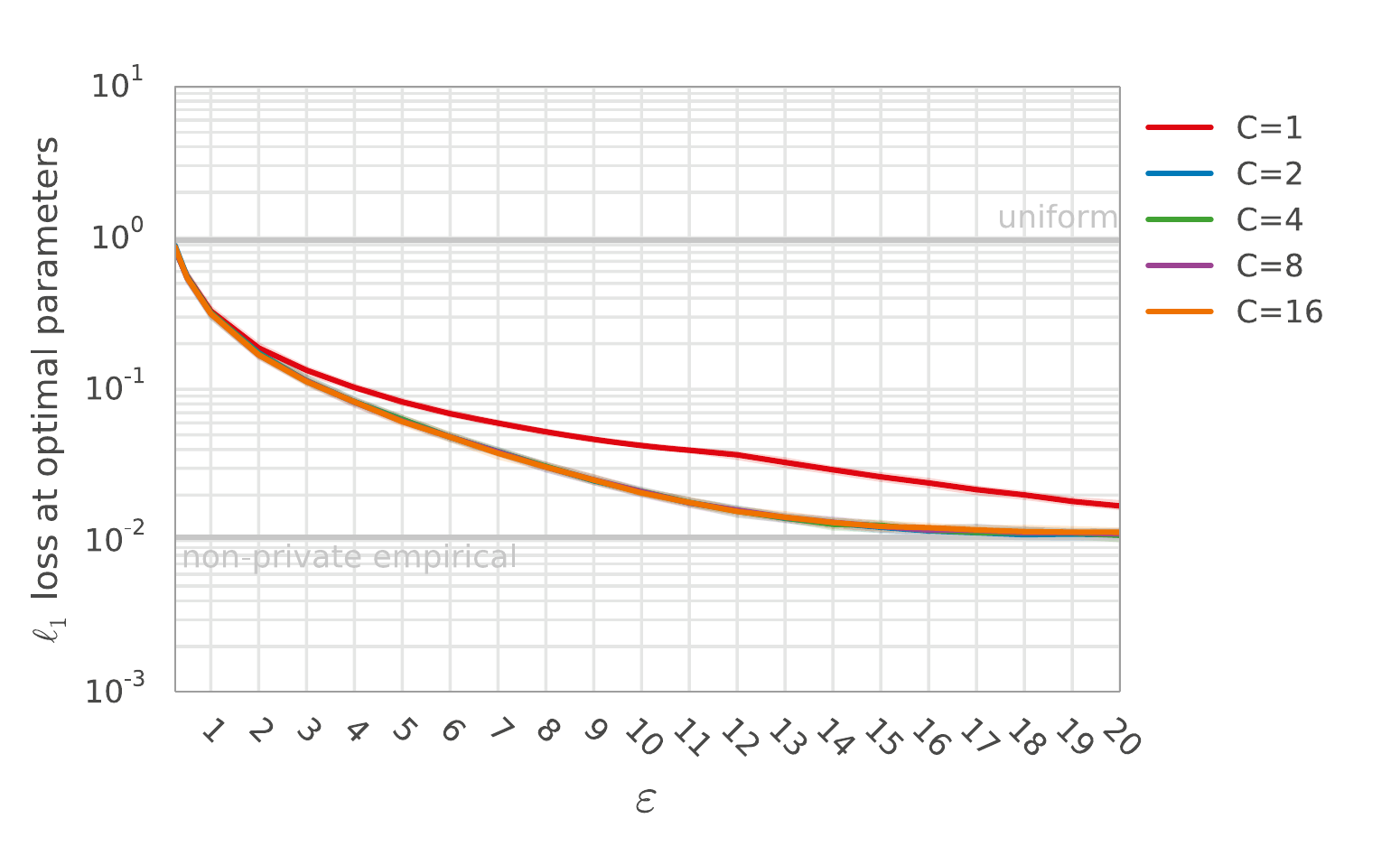}
\caption{\ORAPPOR varying $\CScalar$}
\end{subfigure}
\\
&
\begin{subfigure}[b]{.45\linewidth}
\includegraphics[width=\linewidth]{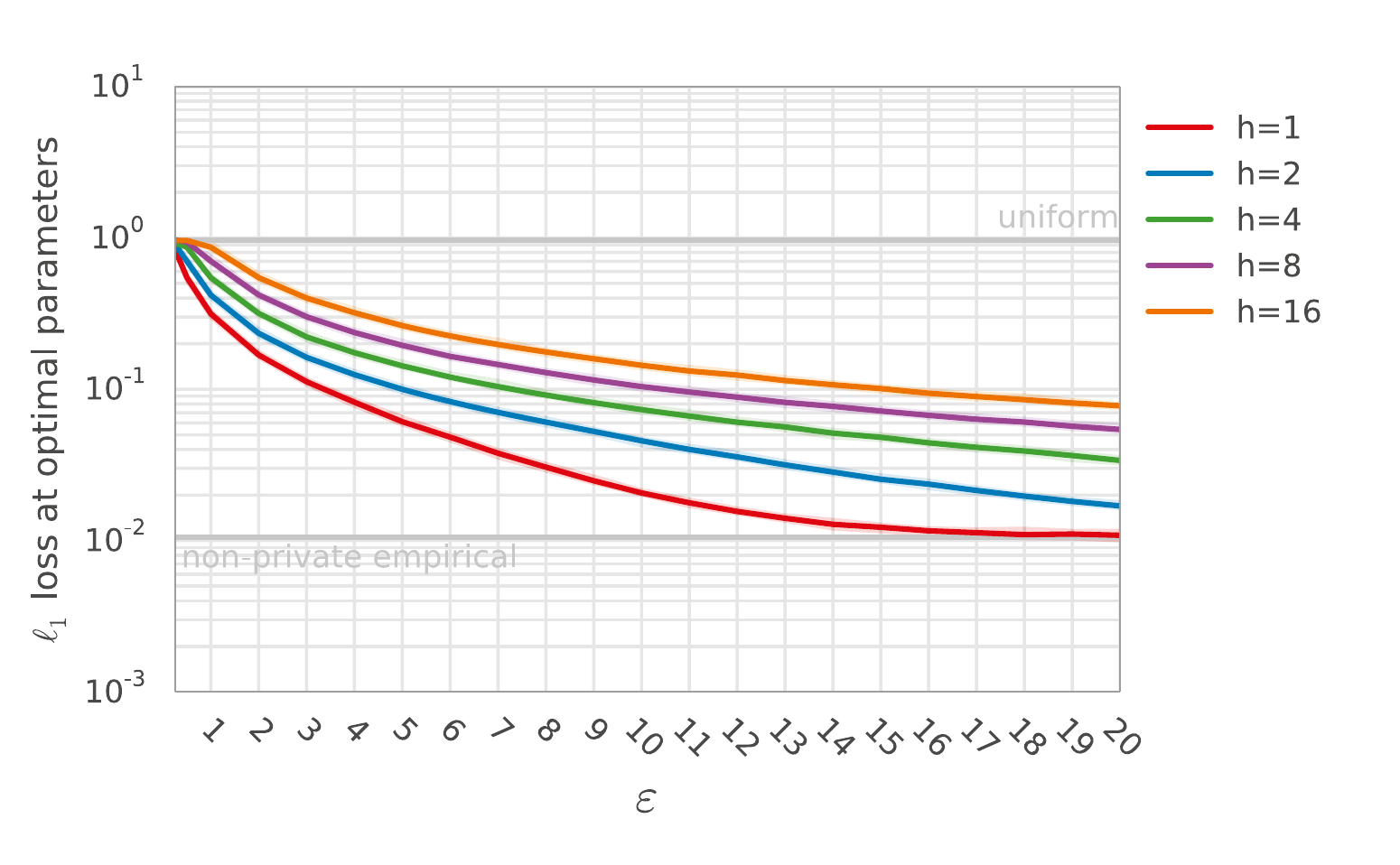}
\caption{\ORAPPOR varying $\hScalar$}
\end{subfigure}

\end{tabular}

\caption{$\ell_1$ loss when decoding open alphabets
using \ORR and \ORAPPOR under various parameter settings,
for $n=10^6$ users with input drawn from an
alphabet of $S=4096$ symbols under a geometric distribution with mean=$S/5$.
Remaining free parameters are set via grid search to minimize the
median loss over 50 samples at the given $\varepsilon$ and fixed parameter
values.  Lines show median $\ell_1$ loss while the (narrow) shaded regions indicate 90\%
confidence intervals (over 50 samples for the optimal parameter settings.)}
\label{fig:open_set_params}
\end{figure*}

\begin{figure*}
\centering
\begin{tabular}{cc}

\begin{subfigure}[b]{.45\linewidth}
\includegraphics[width=\linewidth]{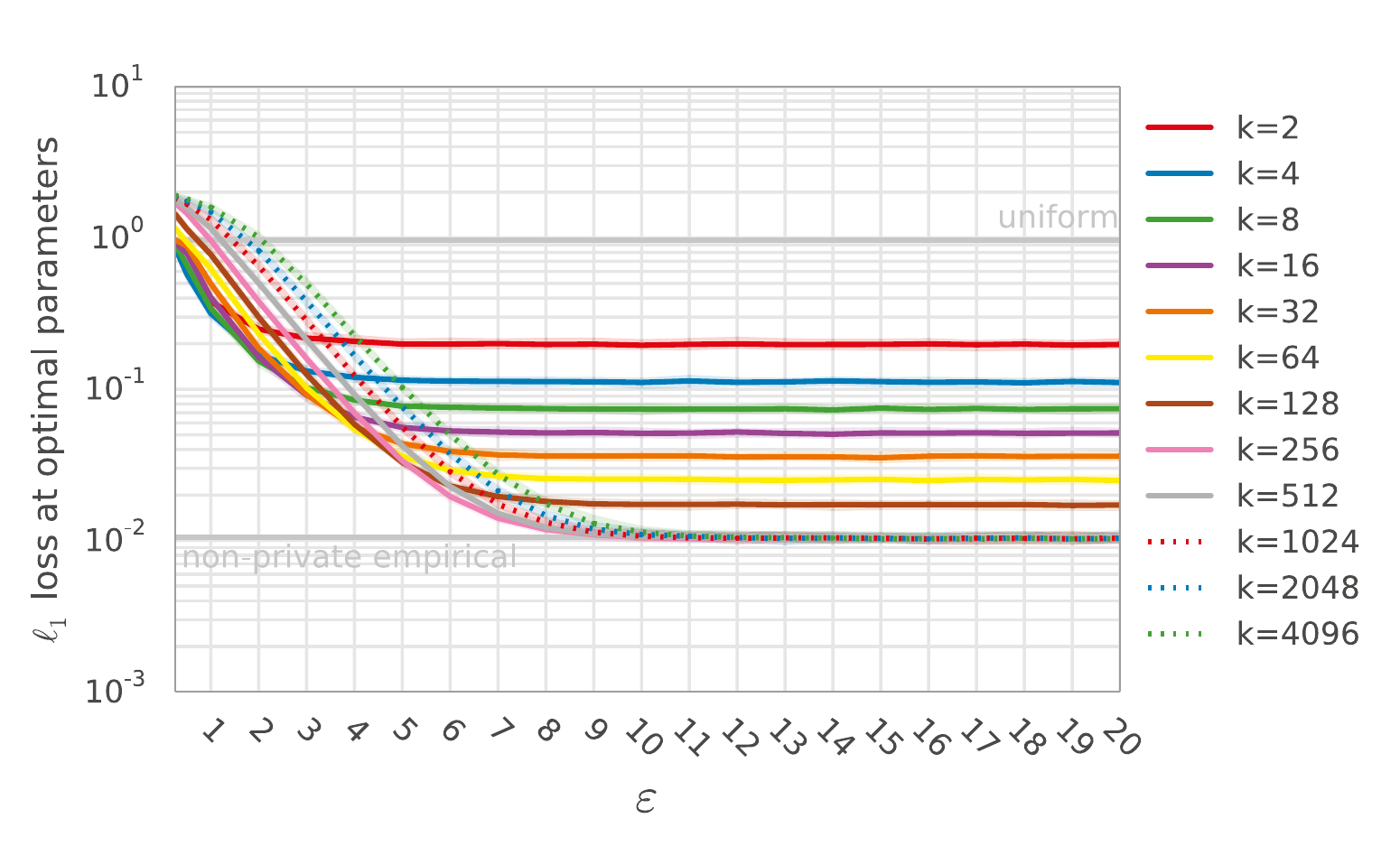}
\caption{\ORR varying $\kScalar$}
\end{subfigure}
&
\begin{subfigure}[b]{.45\linewidth}
\includegraphics[width=\linewidth]{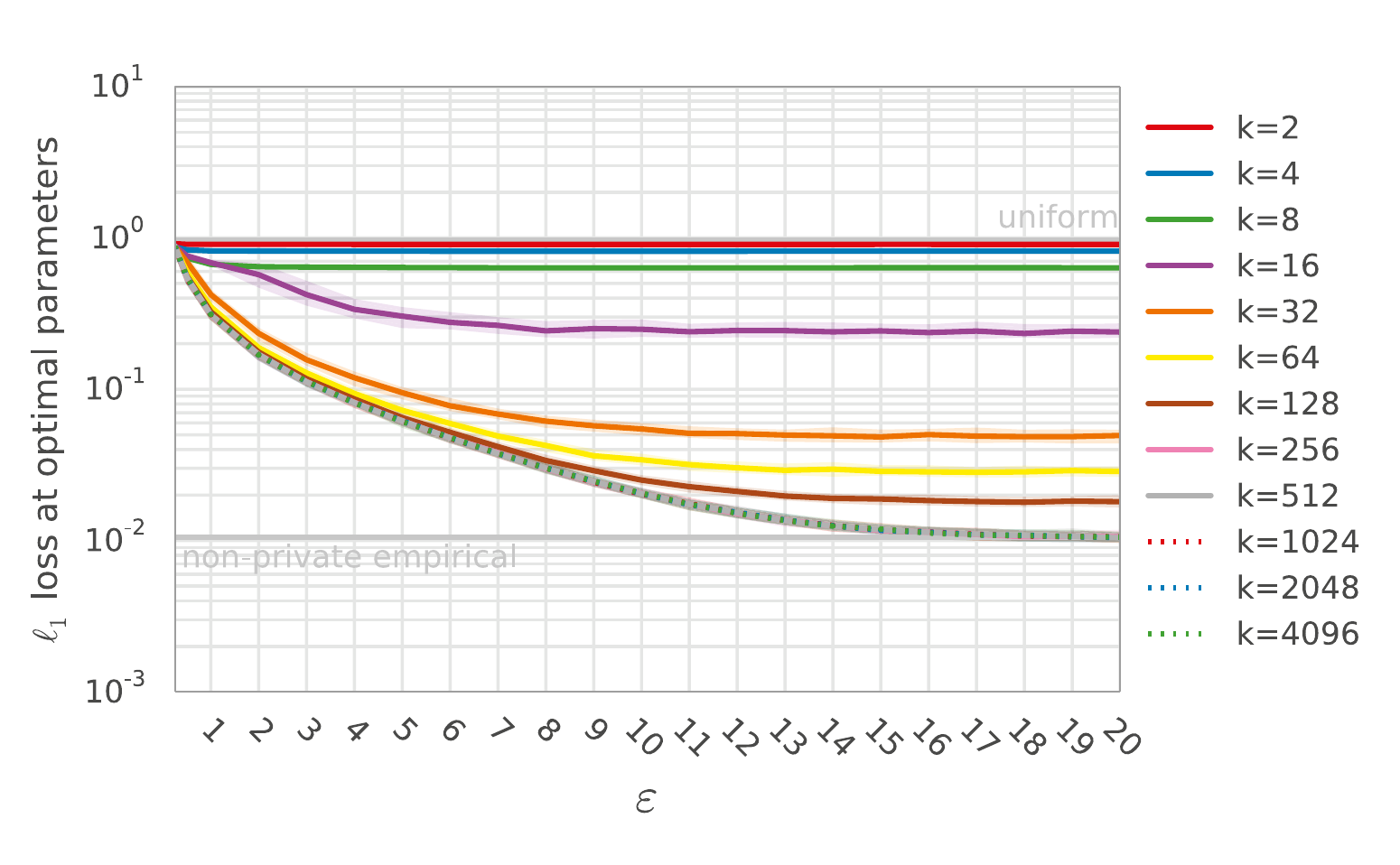}
\caption{\ORAPPOR varying $\kScalar$}
\end{subfigure}
\\
\begin{subfigure}[b]{.45\linewidth}
\includegraphics[width=\linewidth]{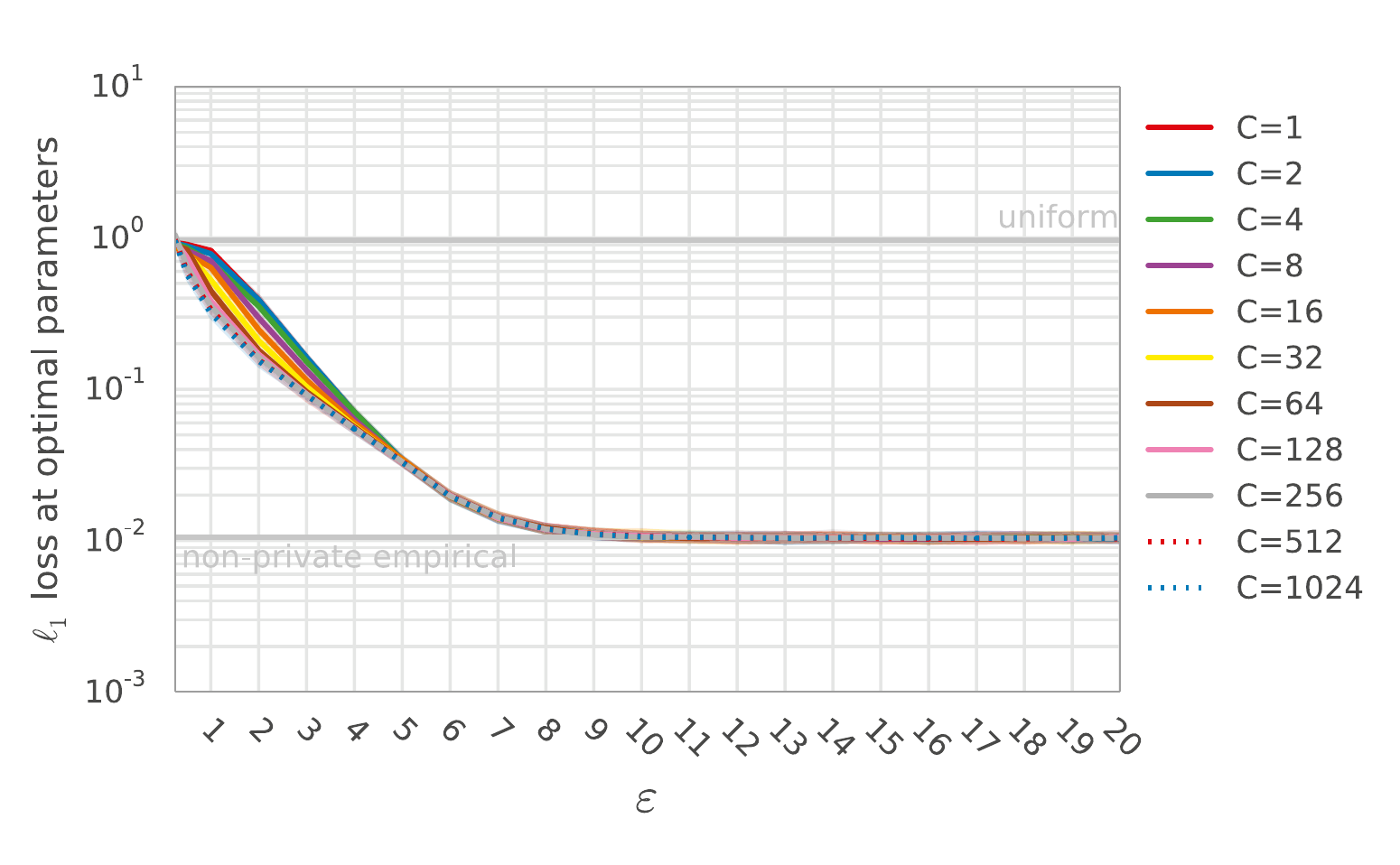}
\caption{\ORR varying $\CScalar$}
\end{subfigure}
&
\begin{subfigure}[b]{.45\linewidth}
\includegraphics[width=\linewidth]{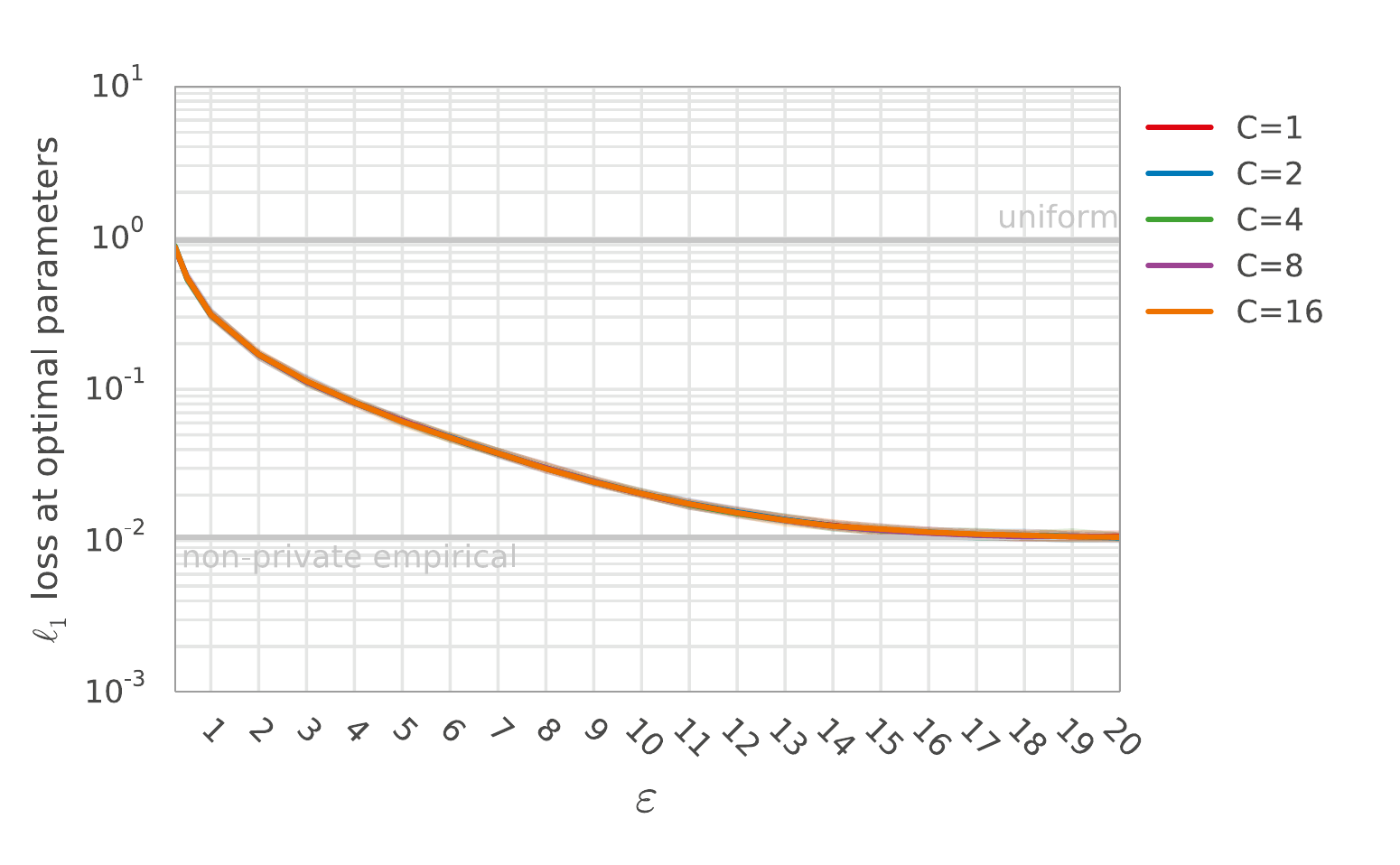}
\caption{\ORAPPOR varying $\CScalar$}
\end{subfigure}
\\
&
\begin{subfigure}[b]{.45\linewidth}
\includegraphics[width=\linewidth]{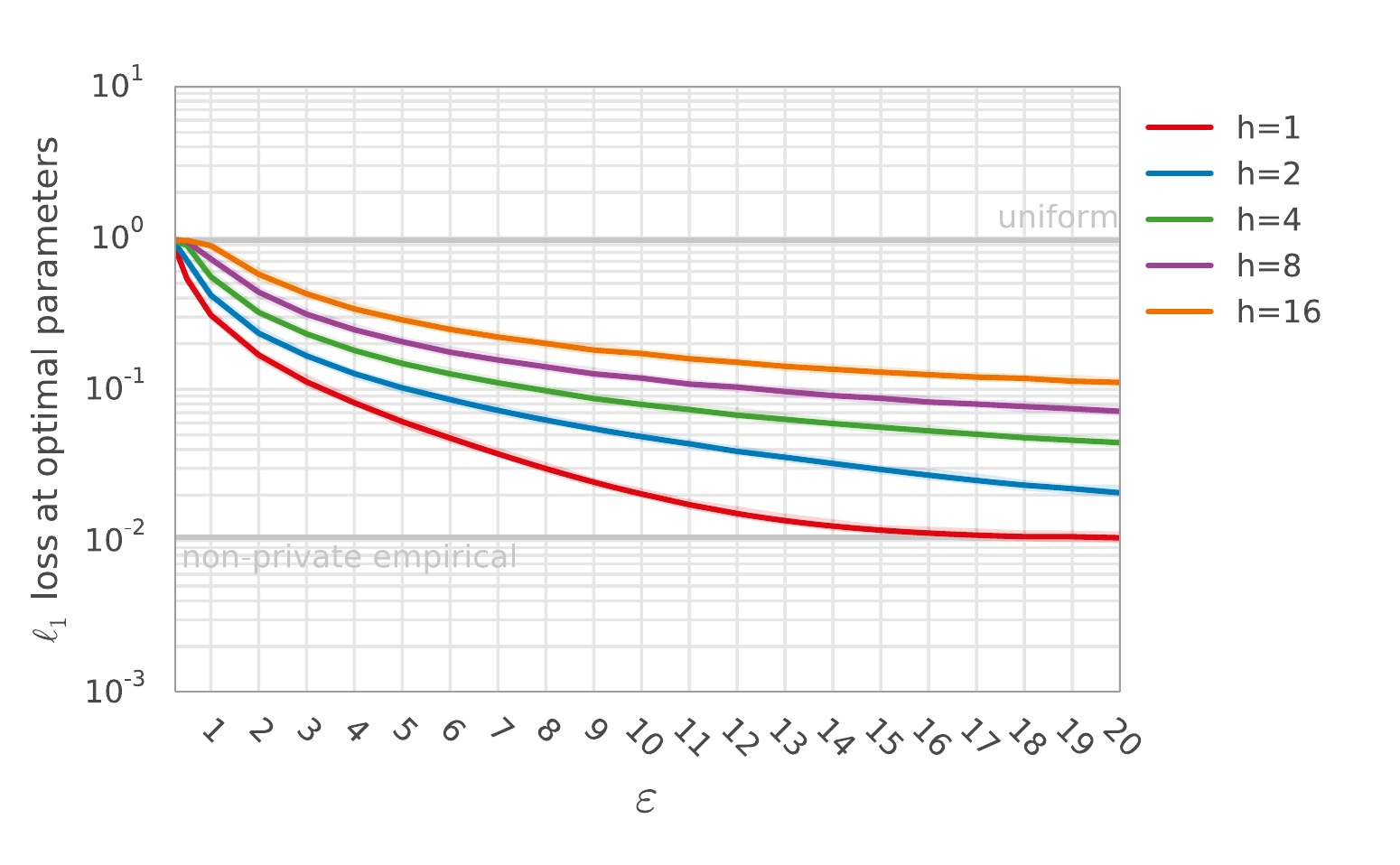}
\caption{\ORAPPOR varying $\hScalar$}
\end{subfigure}

\end{tabular}
\caption{$\ell_1$ loss when decoding closed alphabets using
the \ORR and \ORAPPOR under various parameter
settings,
for $n=10^6$ users with input drawn from an
alphabet of $S=4096$ symbols under a geometric distribution with mean=$S/5$.
Remaining free parameters are set via grid search to minimize the
median loss over 50 samples at the given $\varepsilon$ and fixed parameter
values.  Lines show median $\ell_1$ loss while the (narrow) shaded regions indicate 90\%
confidence intervals (over 50 samples for the optimal parameter settings.)}
\label{fig:closed_set_params}
\end{figure*}

\begin{figure*}
\centering

\begin{tabular}{cc}

\begin{subfigure}[b]{.45\linewidth}
\includegraphics[width=\linewidth]{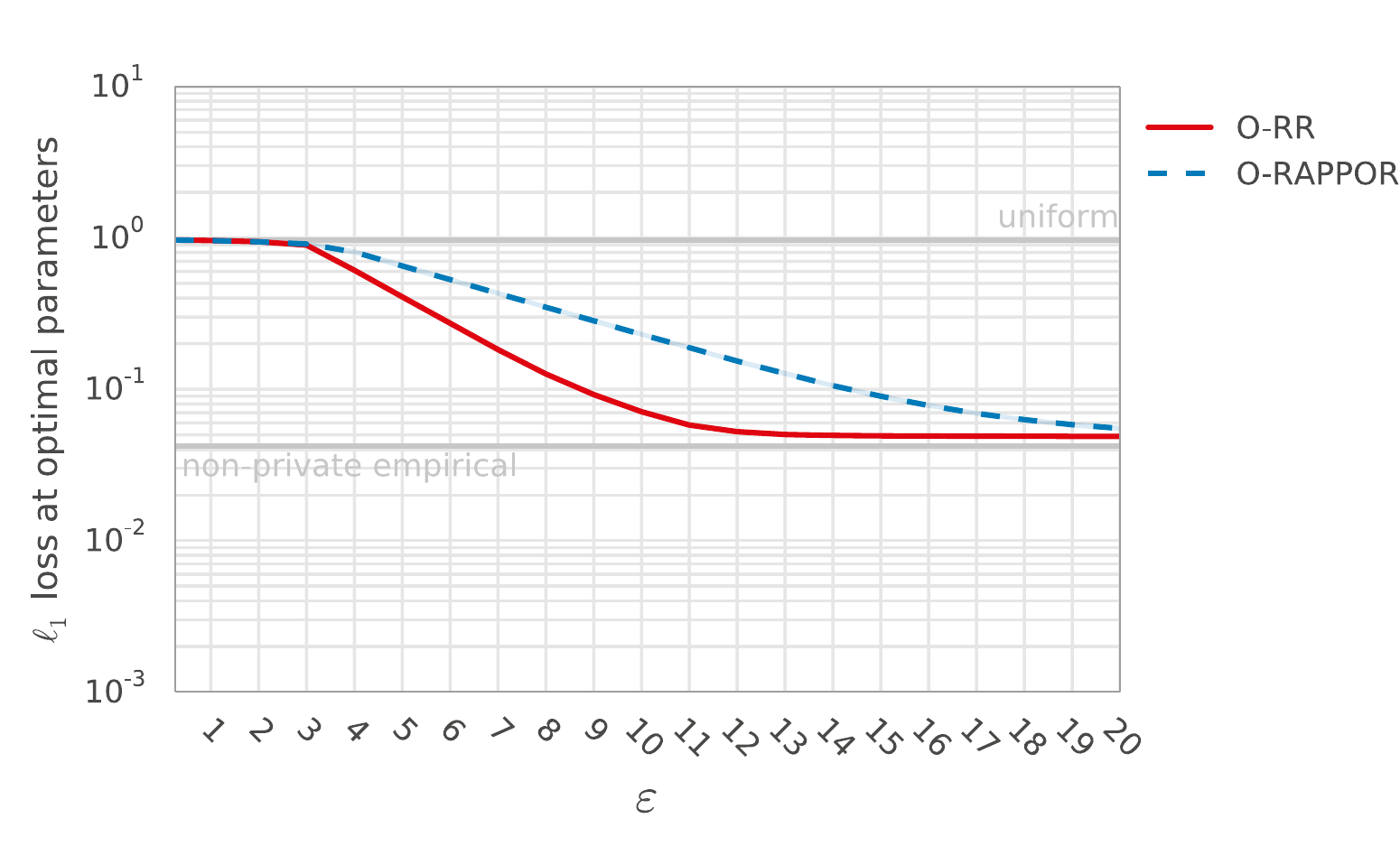}
\caption{$n=10^6$ users}
\end{subfigure}
&
\begin{subfigure}[b]{.45\linewidth}
\includegraphics[width=\linewidth]{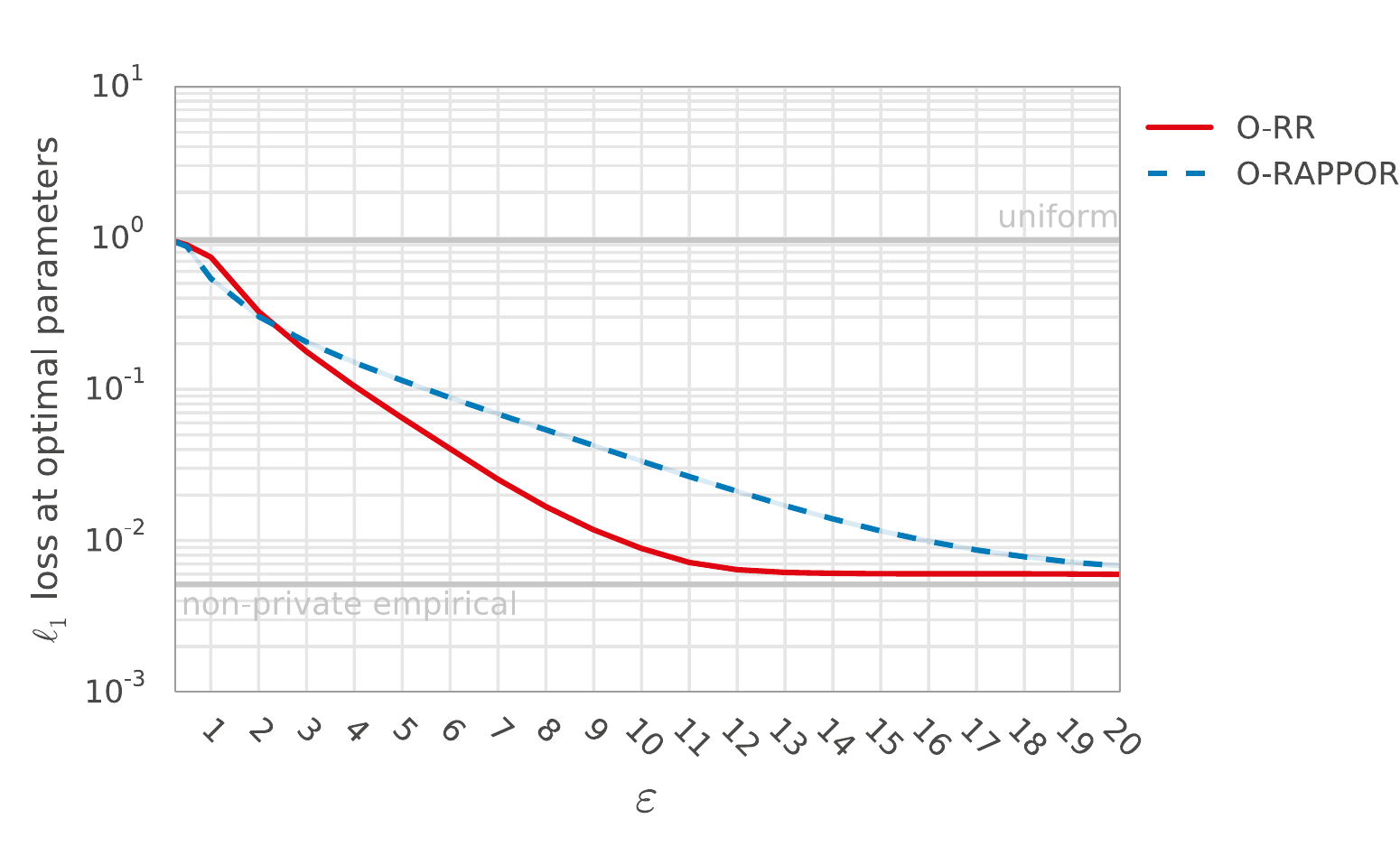}
\caption{$n=10^8$ users}
\end{subfigure}
\end{tabular}
\caption{
$\ell_1$ loss when decoding open alphabets using the \ORR and \ORAPPOR,
with input drawn from an
alphabet of $S=4096$ symbols under a geometric distribution with mean=$S/5$.
Free parameters are set via grid search over
$\kScalar \in [2, 4, 8, \ldots, 8192, 16384]$,
$\cScalar \in [1, 2, 4, \ldots, 512, 1024]$,
$\hScalar \in [1, 2]$
to minimize the median loss over 50 samples at the given $\varepsilon$
value.  Lines show median $\ell_1$ loss while the (narrow) shaded regions indicate 90\%
confidence intervals (over 50 samples).
Baselines indicate expected loss from (1) using an empirical estimator
directly on the input $\sVector$ and (2) using the uniform distribution
as the $\hat{\pVector}$ estimate.
}

\label{fig:open_set_zoom_s4096}
\end{figure*}

\begin{figure*}
\centering


\begin{subfigure}[b]{.45\linewidth}
\includegraphics[width=\linewidth]{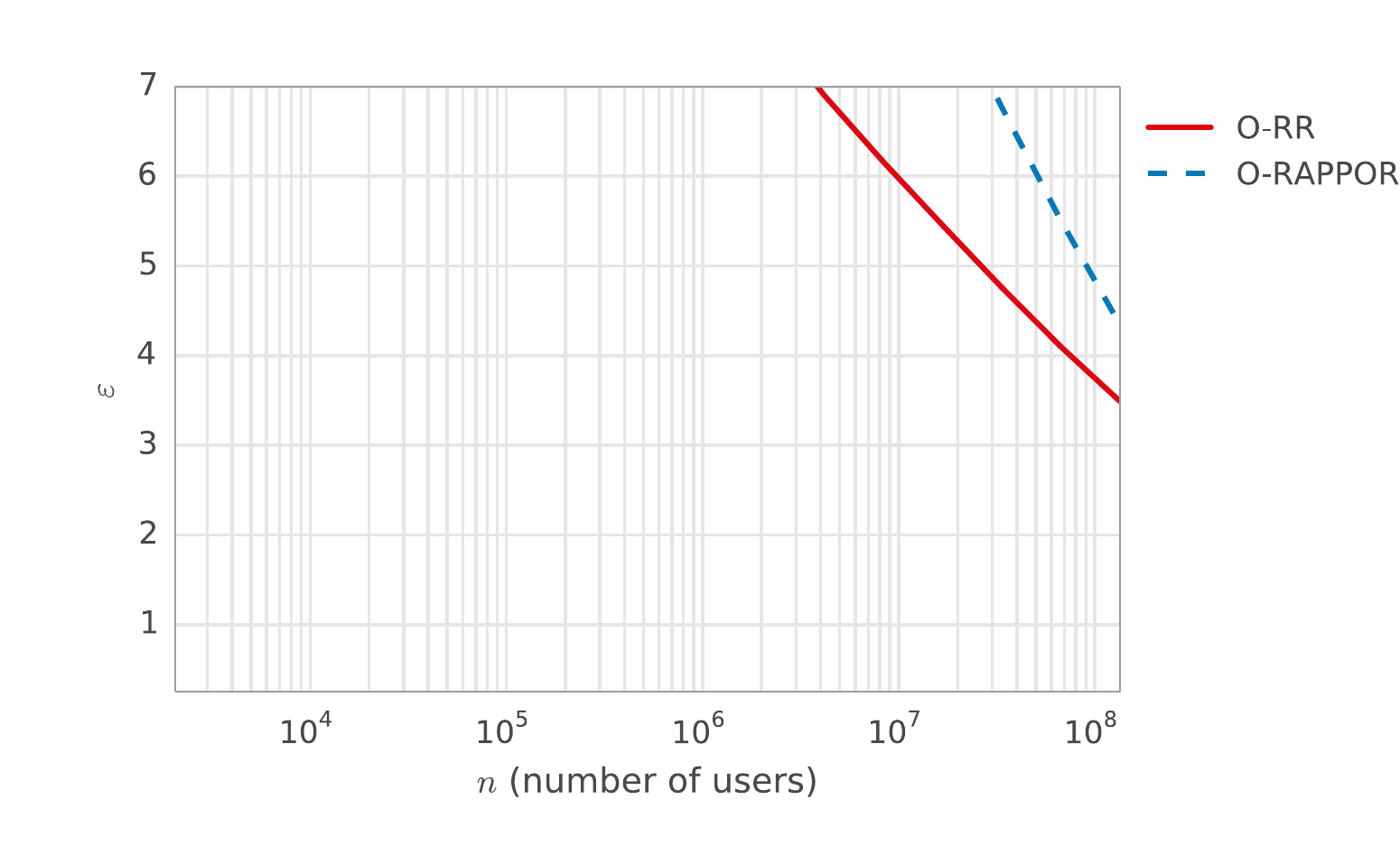}
\caption{$\ell_1=0.10$}
\end{subfigure}
\begin{subfigure}[b]{.45\linewidth}
\includegraphics[width=\linewidth]{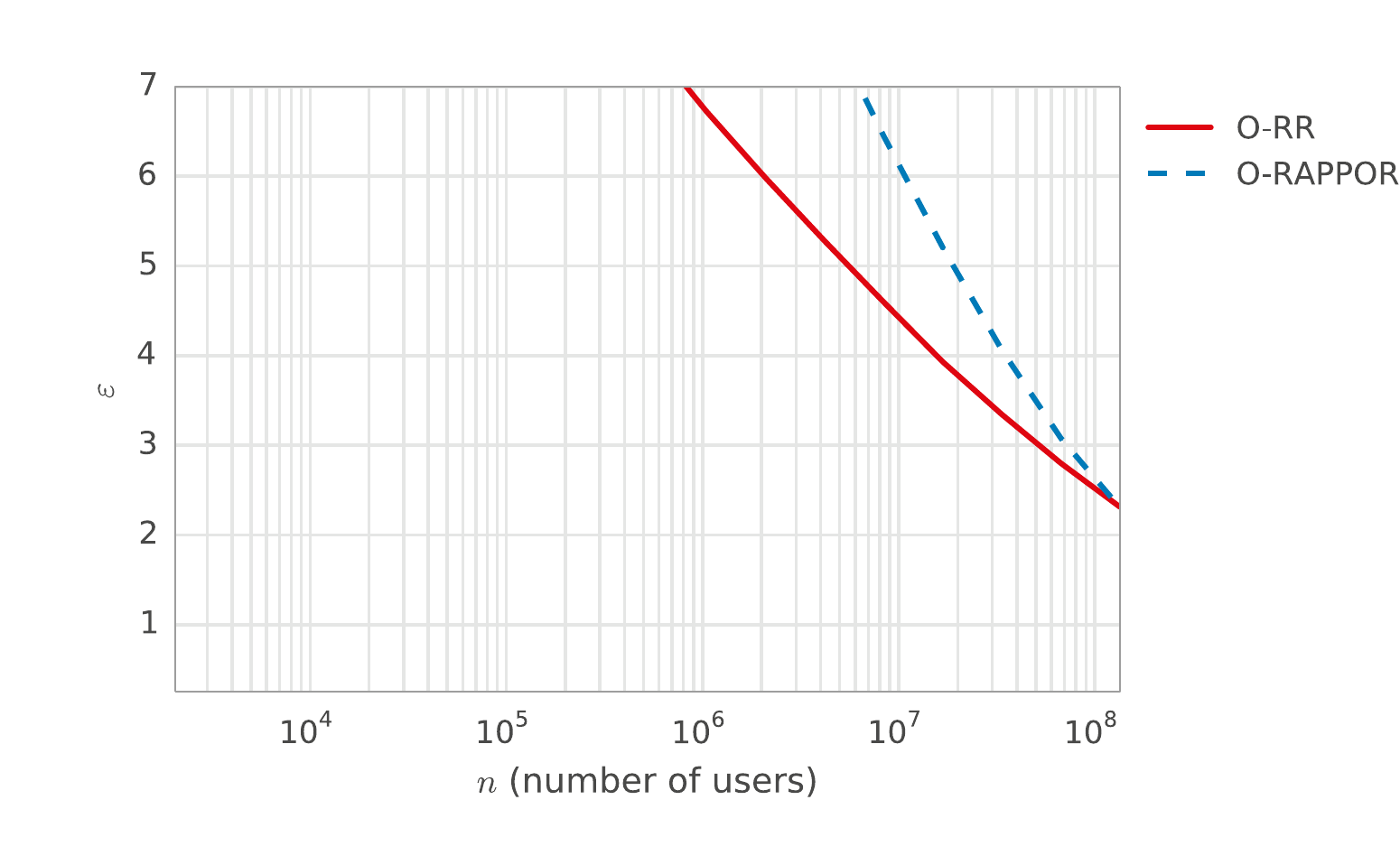}
\caption{$\ell_1=0.20$}
\end{subfigure}

\begin{subfigure}[b]{.45\linewidth}
\includegraphics[width=\linewidth]{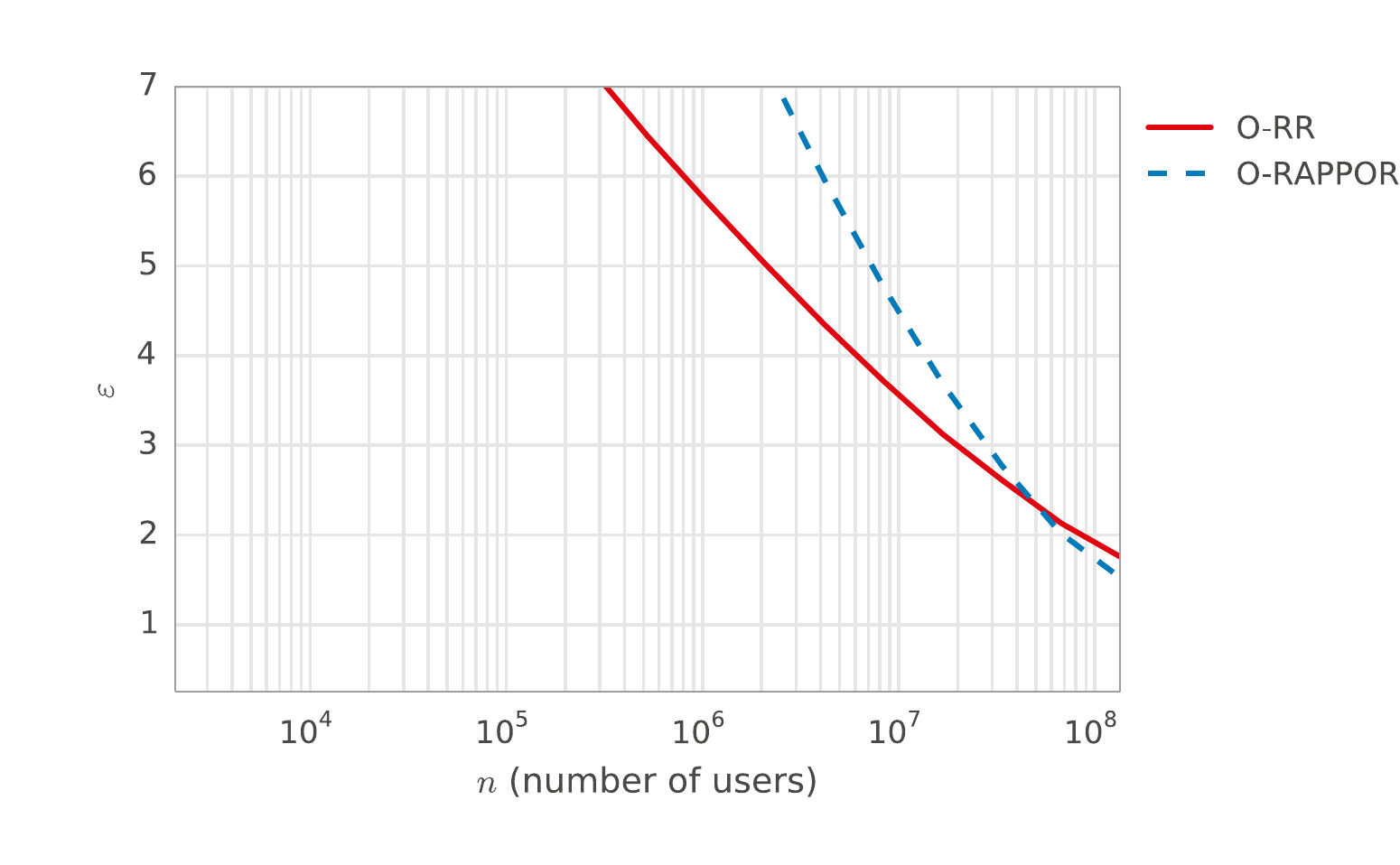}
\caption{$\ell_1=0.30$}
\end{subfigure}
\begin{subfigure}[b]{.45\linewidth}
\includegraphics[width=\linewidth]{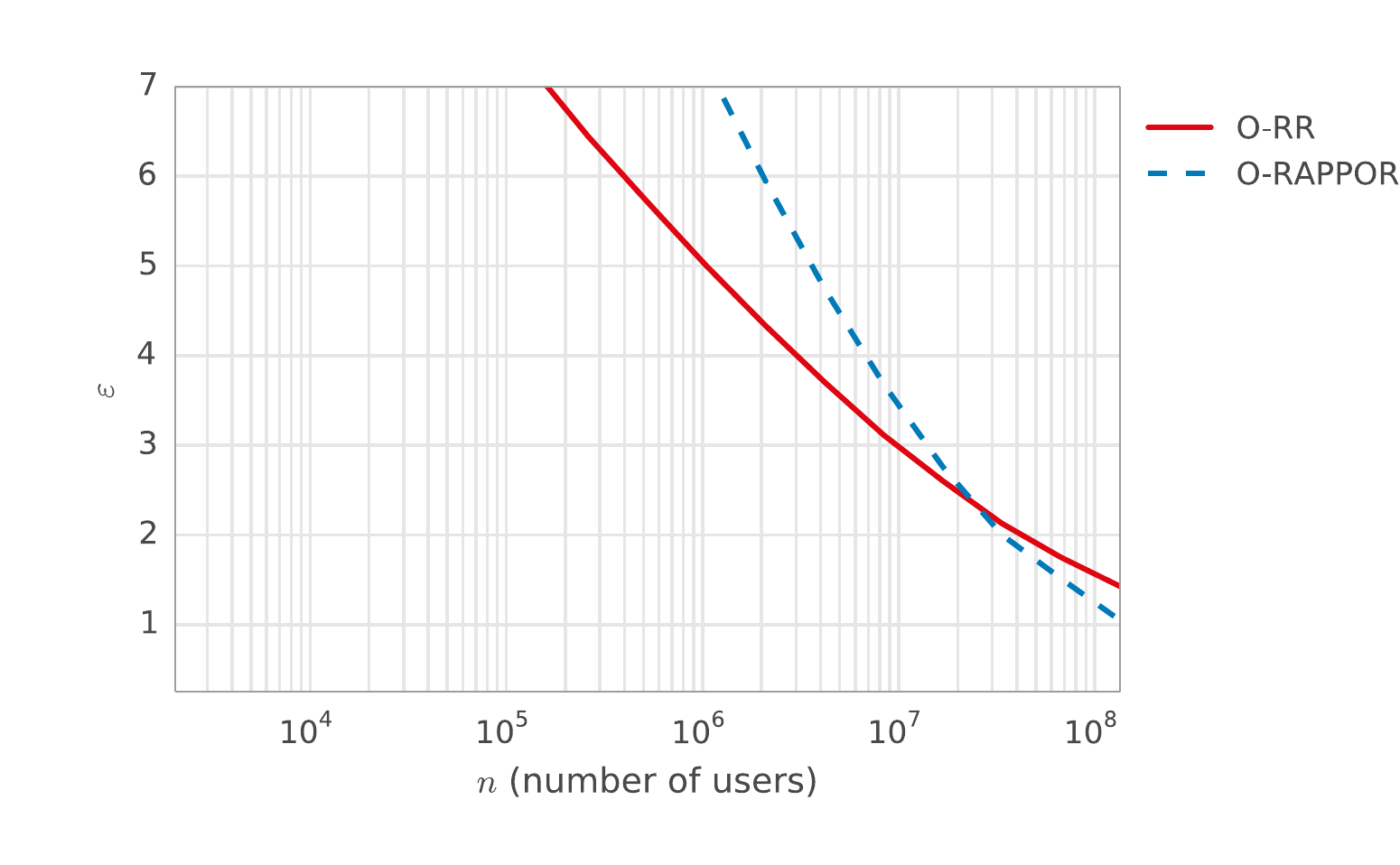}
\caption{$\ell_1=0.40$}
\end{subfigure}

\caption{
Taking $\ell_1$ loss (the utility) and $\nScalar$ (the number of users) as
fixed requirements (as is the case in many practical scenarios), we
approximate the degree of privacy $\varepsilon$ that can be obtained under
\ORR and \ORAPPOR for open alphabets (lower $\varepsilon$ is better).
Input is generated from an
alphabet of $S=4096$ symbols under a geometric distribution with mean=$S/5$,
as depicted in Figure~\protect\ref{fig:geometric_ground_truth}.
Free parameters are set via grid search to minimize the median loss over 50
samples at the given $\varepsilon$ and fixed parameter values.
}
\label{fig:privacy_at_n_open_s4096}
\end{figure*}

\begin{figure*}
\centering

\begin{tabular}{ccc}

\begin{subfigure}[b]{.45\linewidth}
\includegraphics[width=\linewidth]{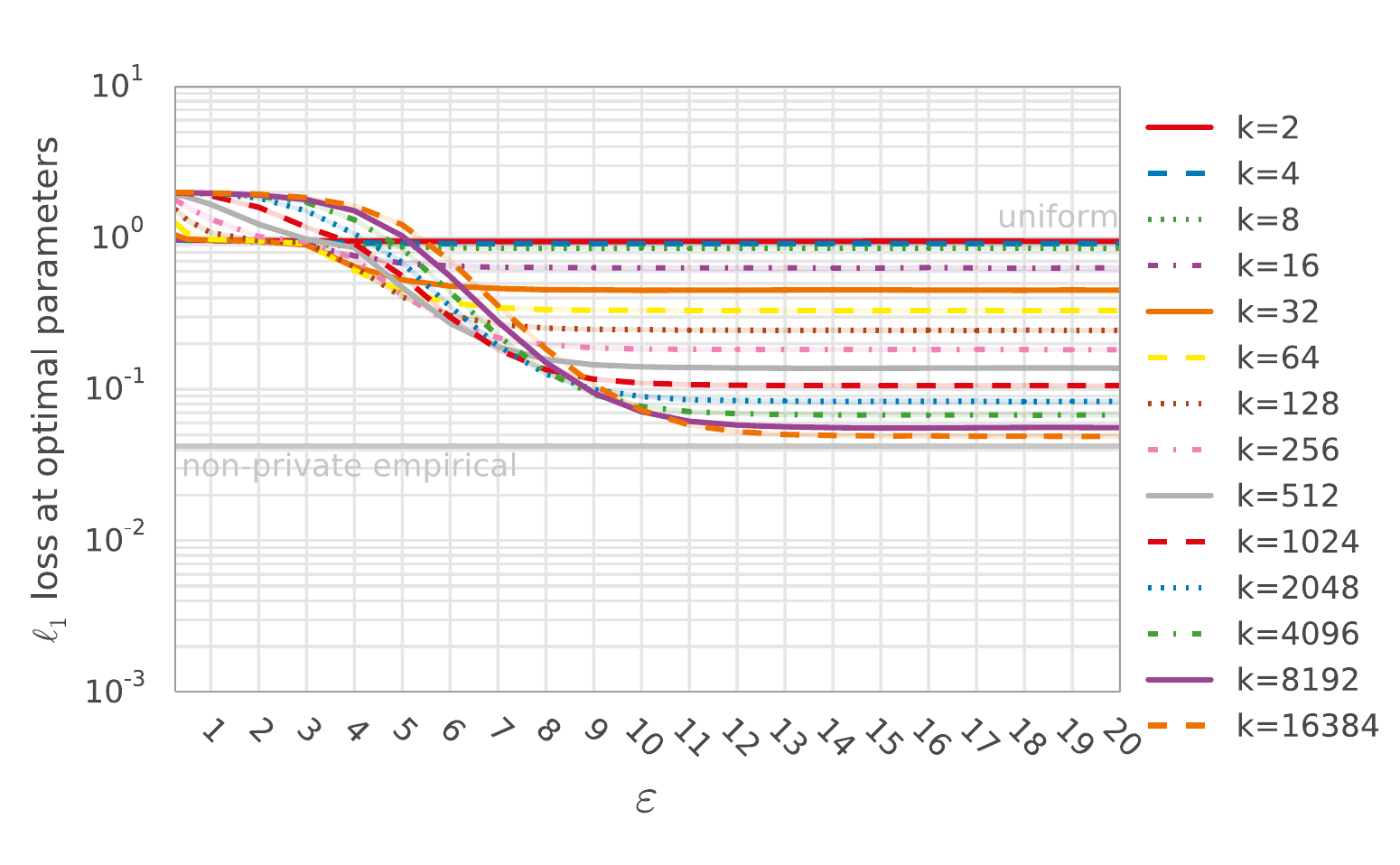}
\caption{\ORR varying $\kScalar$}
\end{subfigure}
&
\begin{subfigure}[b]{.45\linewidth}
\includegraphics[width=\linewidth]{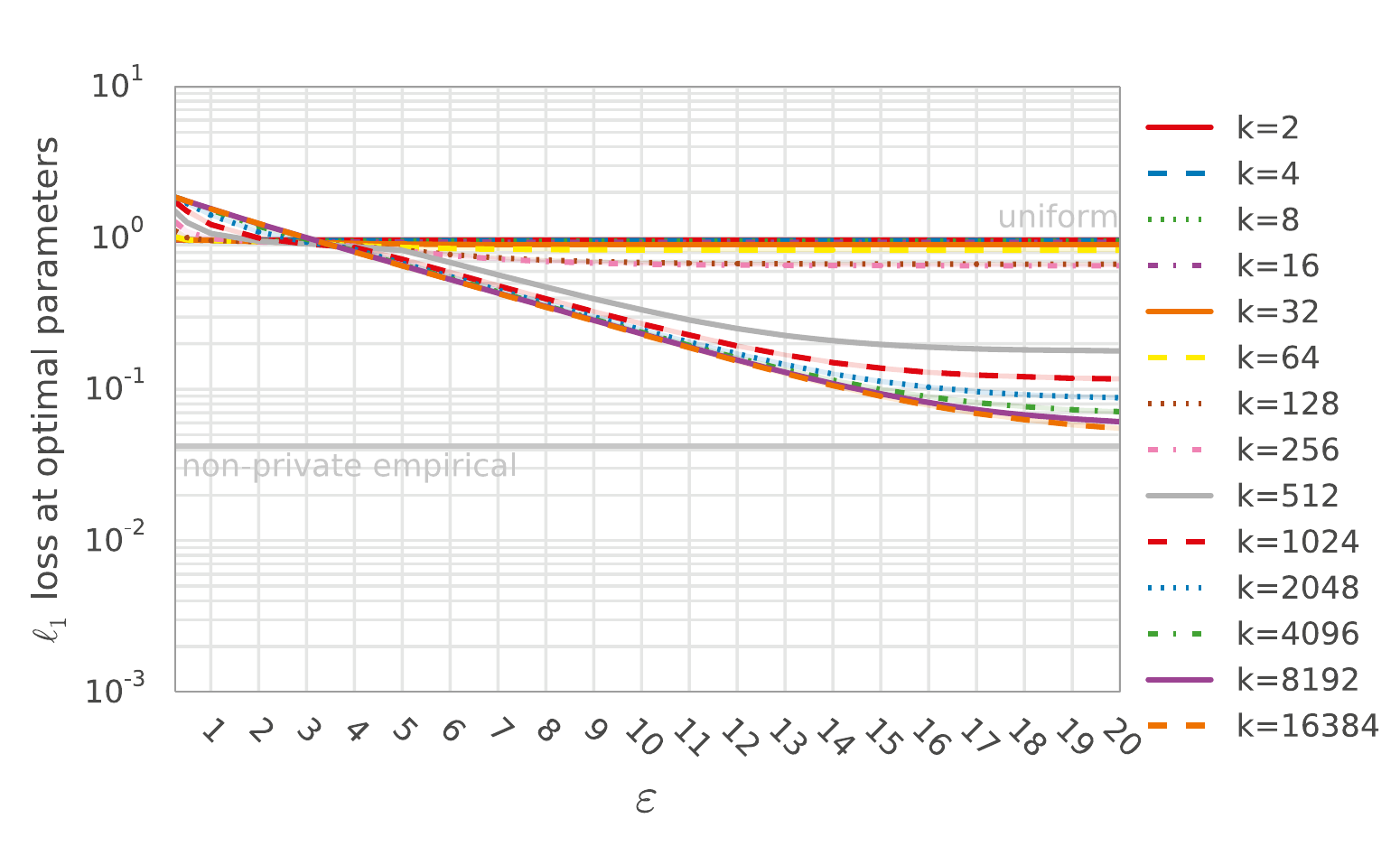}
\caption{\ORAPPOR varying $\kScalar$}
\end{subfigure}
\\
\begin{subfigure}[b]{.45\linewidth}
\includegraphics[width=\linewidth]{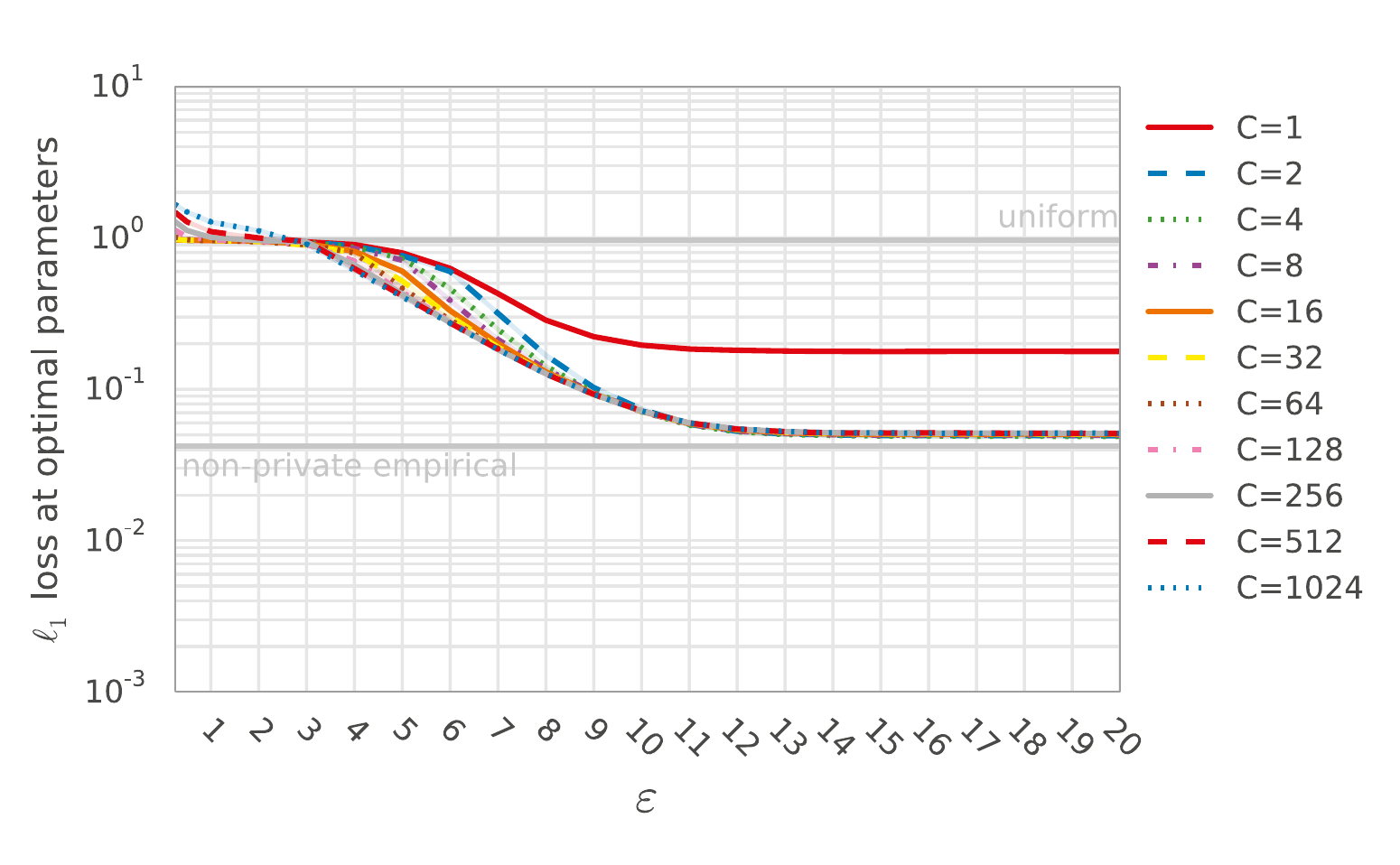}
\caption{\ORR varying $\CScalar$}
\end{subfigure}
&
\begin{subfigure}[b]{.45\linewidth}
\includegraphics[width=\linewidth]{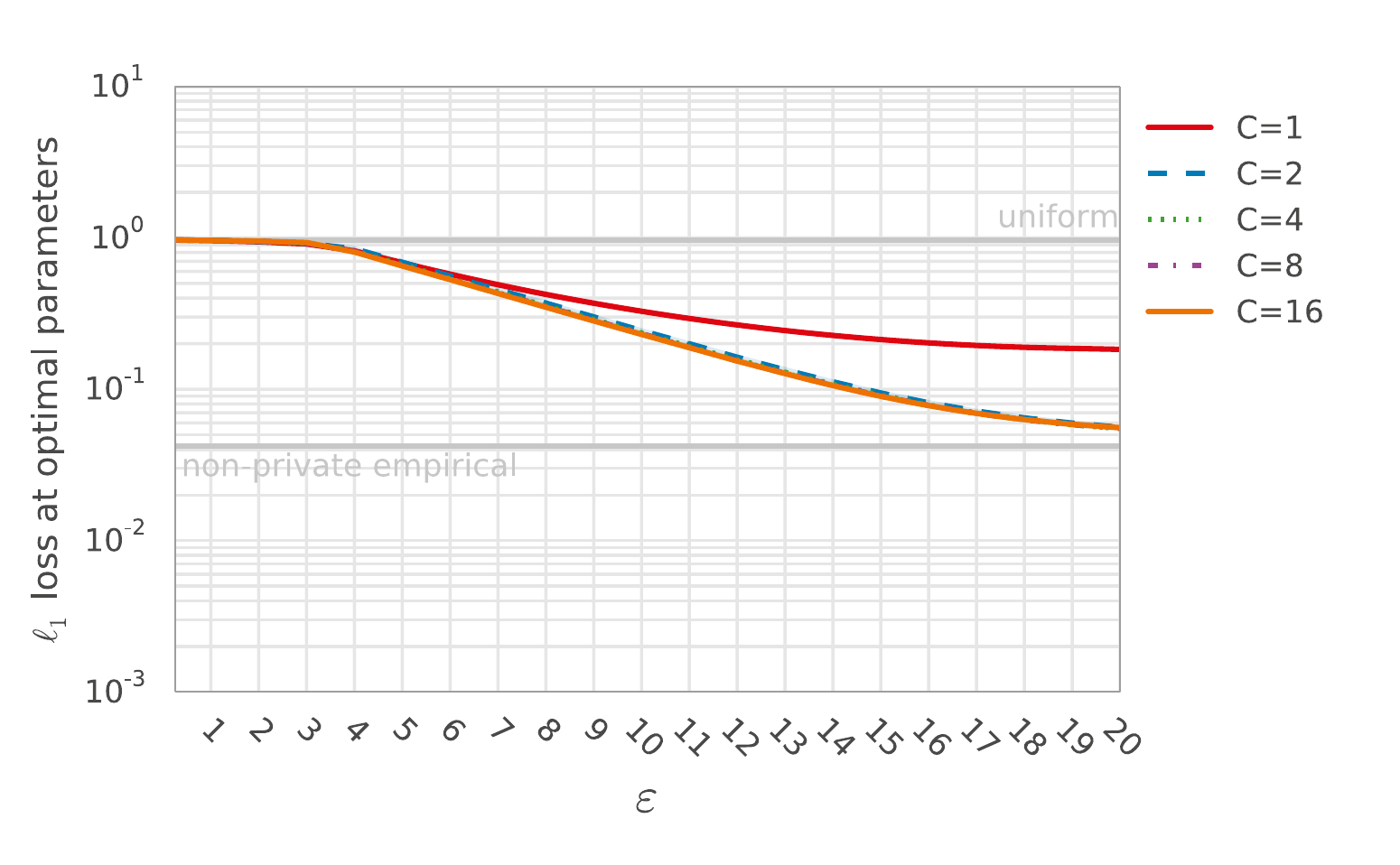}
\caption{\ORAPPOR varying $\CScalar$}
\end{subfigure}
\\
&
\begin{subfigure}[b]{.45\linewidth}
\includegraphics[width=\linewidth]{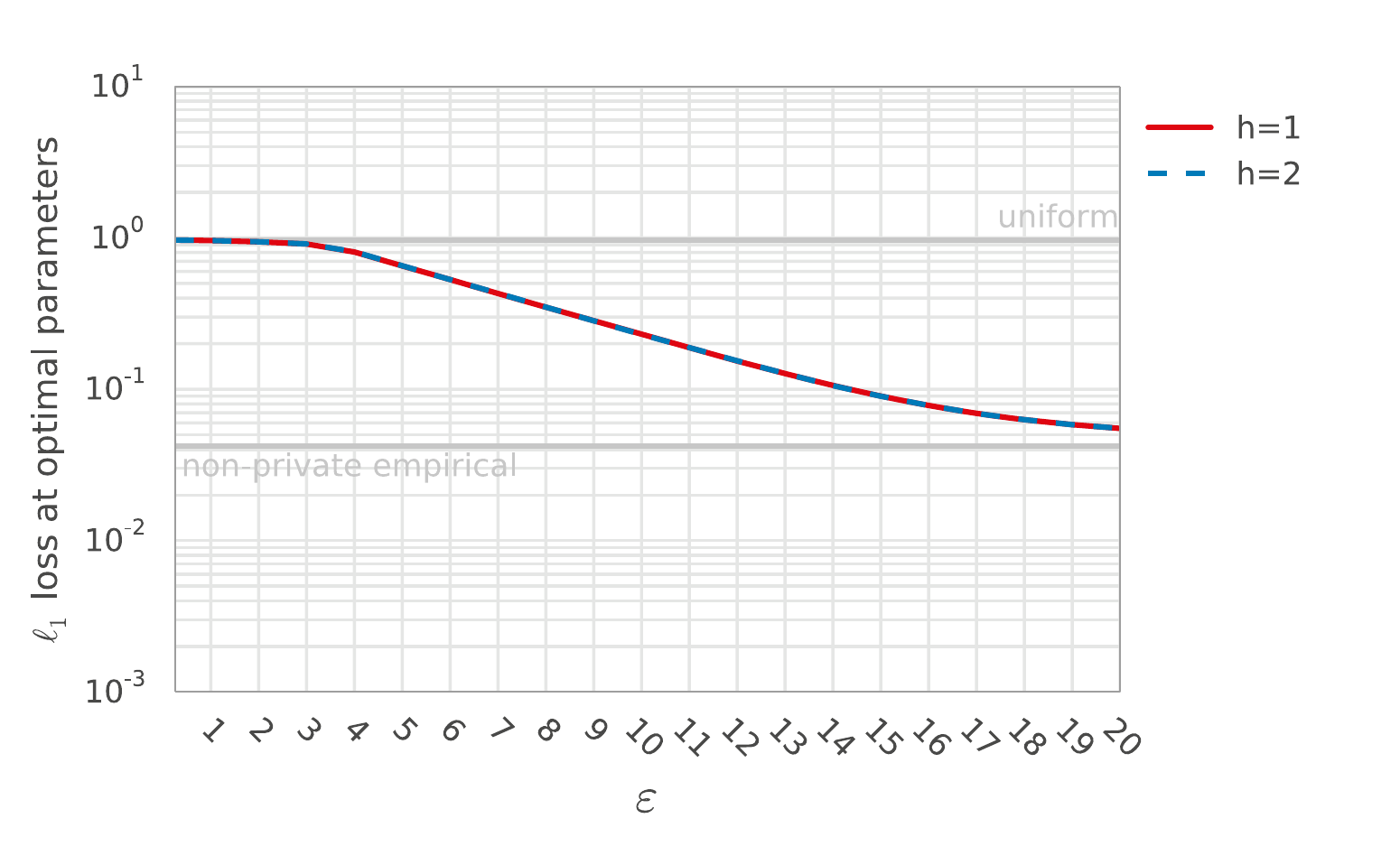}
\caption{\ORAPPOR varying $\hScalar$}
\end{subfigure}

\end{tabular}

\caption{$\ell_1$ loss when decoding open alphabets
using \ORR and \ORAPPOR under various parameter settings,
for $n=10^6$ users with input drawn from an
alphabet of $S=4096$ symbols under a geometric distribution with mean=$S/5$.
Remaining free parameters are set via grid search to minimize the
median loss over 50 samples at the given $\varepsilon$ and fixed parameter
values.  Lines show median $\ell_1$ loss while the (narrow) shaded regions indicate 90\%
confidence intervals (over 50 samples for the optimal parameter settings.)}
\label{fig:open_set_params_s4096}
\end{figure*}

\begin{figure*}
\centering

\begin{tabular}{ccc}

\begin{subfigure}[b]{.45\linewidth}
\includegraphics[width=\linewidth]{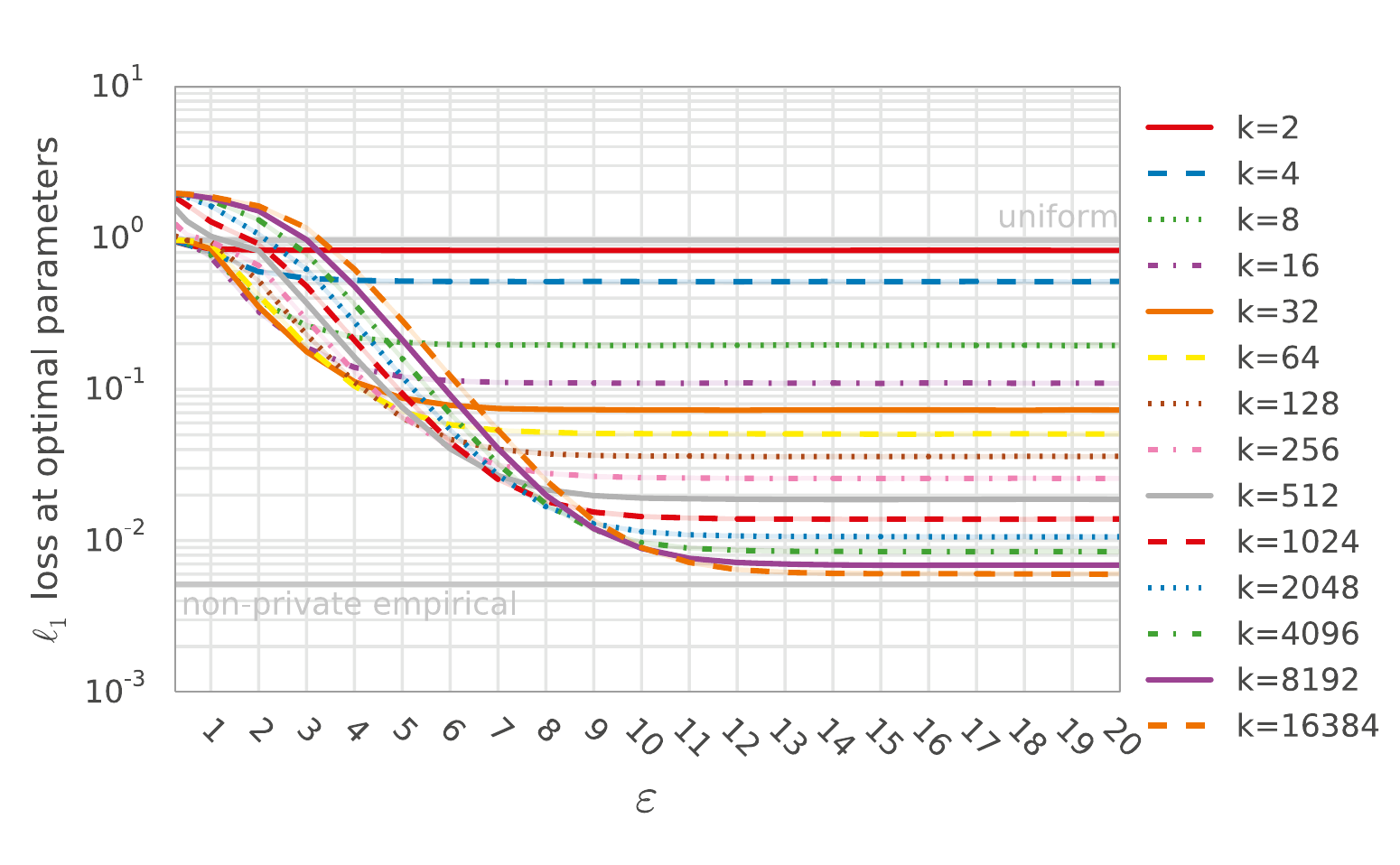}
\caption{\ORR varying $\kScalar$}
\end{subfigure}
&
\begin{subfigure}[b]{.45\linewidth}
\includegraphics[width=\linewidth]{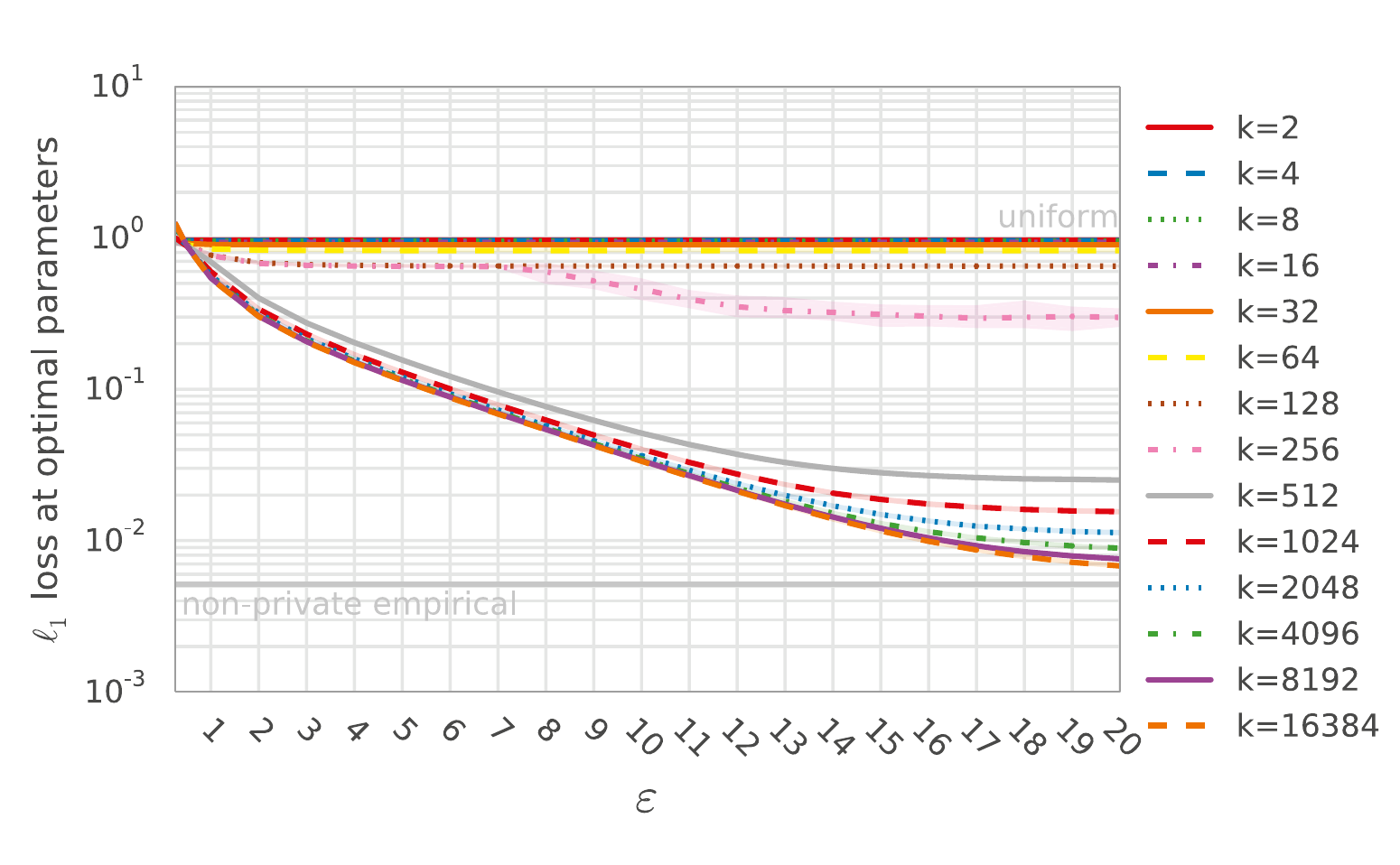}
\caption{\ORAPPOR varying $\kScalar$}
\end{subfigure}
\\
\begin{subfigure}[b]{.45\linewidth}
\includegraphics[width=\linewidth]{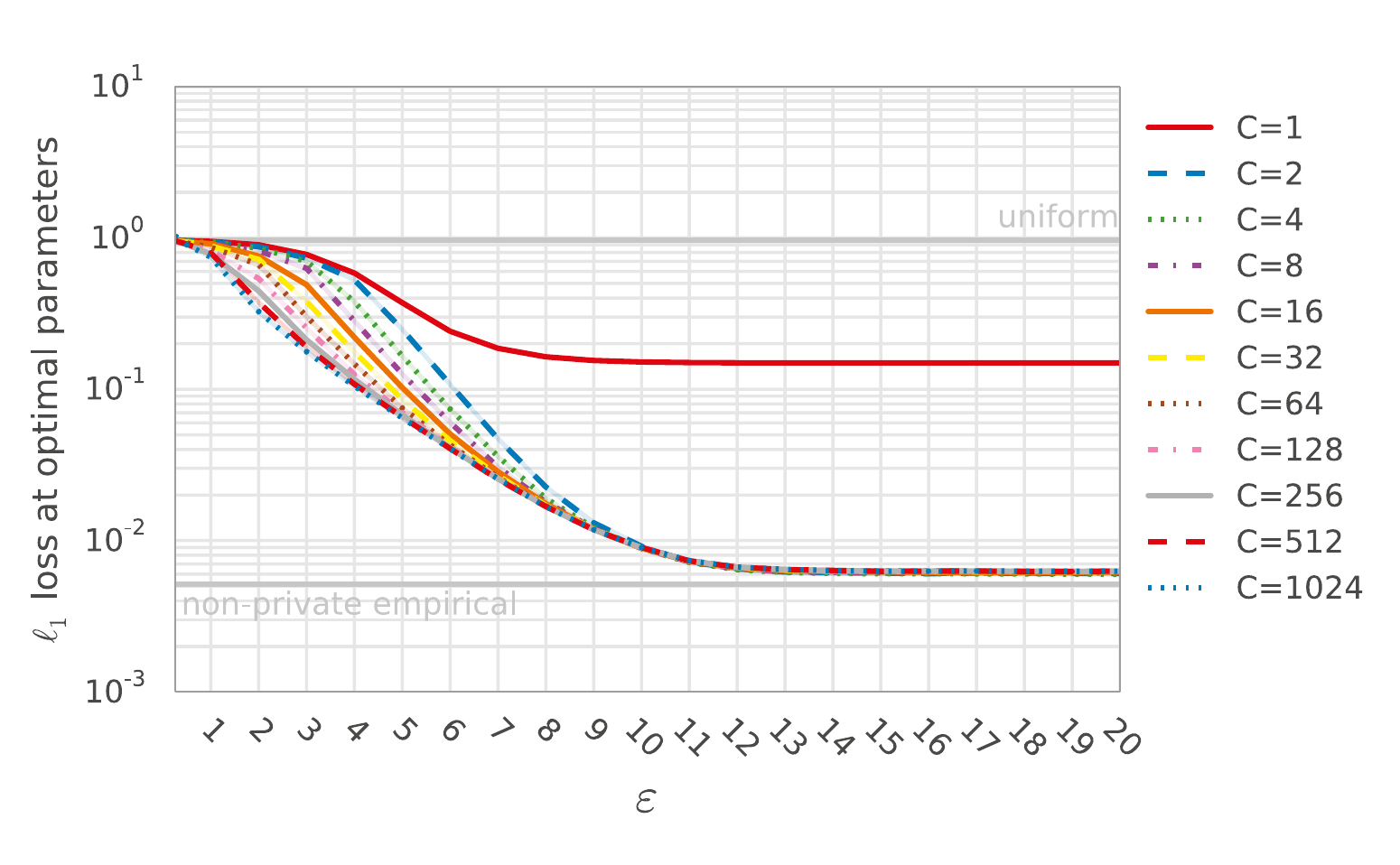}
\caption{\ORR varying $\CScalar$}
\end{subfigure}
&
\begin{subfigure}[b]{.45\linewidth}
\includegraphics[width=\linewidth]{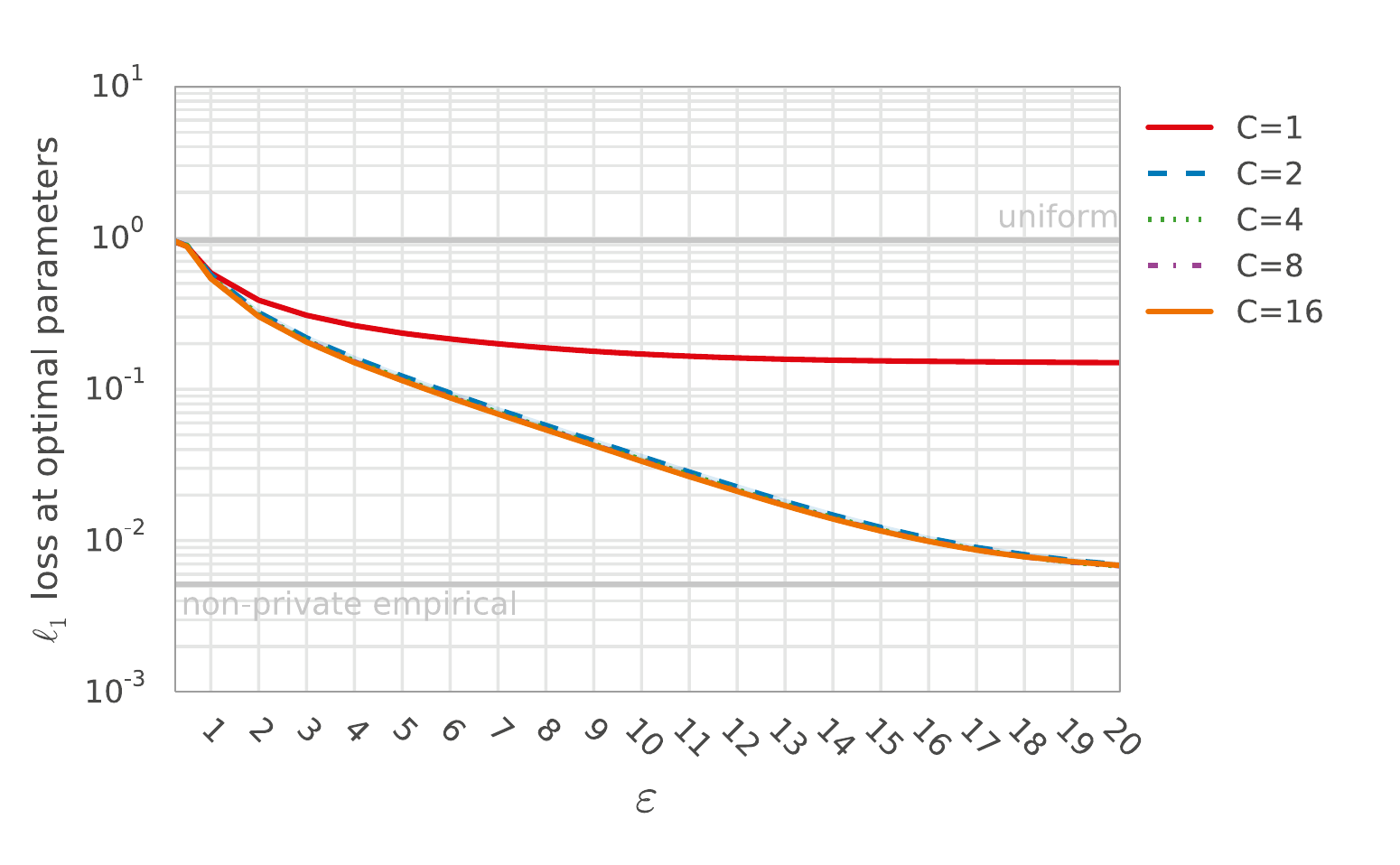}
\caption{\ORAPPOR varying $\CScalar$}
\end{subfigure}
\\
&
\begin{subfigure}[b]{.45\linewidth}
\includegraphics[width=\linewidth]{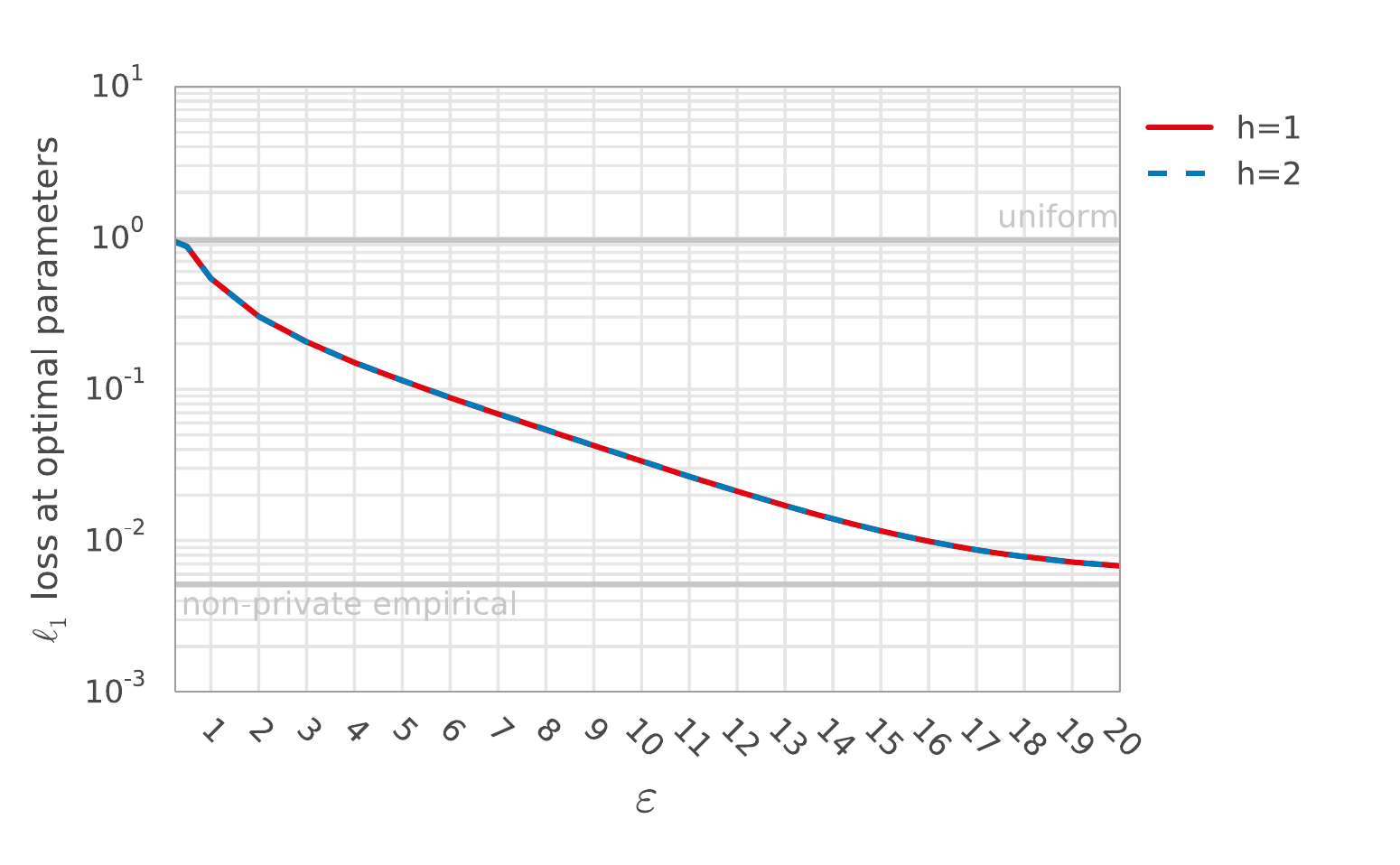}
\caption{\ORAPPOR varying $\hScalar$}
\end{subfigure}

\end{tabular}

\caption{$\ell_1$ loss when decoding open alphabets
using \ORR and \ORAPPOR under various parameter settings,
for $n=10^8$ users with input drawn from an
alphabet of $S=4096$ symbols under a geometric distribution with mean=$S/5$.
Remaining free parameters are set via grid search to minimize the
median loss over 50 samples at the given $\varepsilon$ and fixed parameter
values.  Lines show median $\ell_1$ loss while the (narrow) shaded regions indicate 90\%
confidence intervals (over 50 samples for the optimal parameter settings.)}
\label{fig:open_set_params_s4096_u1e8}
\end{figure*}

\end{document}